\theoremstyle{plain}
\newtheorem{theorem}{Theorem}[section]
\newtheorem{proposition}[theorem]{Proposition}
\newtheorem{lemma}[theorem]{Lemma}
\newtheorem{corollary}[theorem]{Corollary}
\theoremstyle{definition}
\newtheorem{definition}[theorem]{Definition}
\newtheorem{assumption}[theorem]{Assumption}
\theoremstyle{remark}
\newtheorem{remark}[theorem]{Remark}
\DeclareMathOperator{\Tr}{Tr}
\DeclareMathOperator{\poly}{poly}
\newcommand{\inner}[2]{\left\langle#1,#2\right\rangle}
\newcommand{\mtx}{\bm} 
\newcommand{\vct}{\bm} 
\newcommand{\tmtx}[1]{\tilde{\mtx{#1}}}
\newcommand{\tvct}[1]{\tilde{\vct{#1}}}
\newcommand{\E}{\mathbbm{E}}
\newcommand{\aug}{\text{aug}}
\newcommand{\rank}{\textbf{rank}}
\newcommand{\badaug}[1]{\mathscr{A}'(#1)}
\newcommand{\Daug}{\hat{\mathcal{D}}_\aug}
\newcommand{\orig}{\text{orig}}
\newcommand{\sub}{\text{sub}}
\newcommand{\unsup}{\text{UCL}}
\newcommand{\supcon}{\text{SCL}}
\newcommand{\intrpl}{\text{joint}}
\newcommand{\cat}[2]{
\begin{bmatrix}#1 \\ #2
\end{bmatrix}}
\newcommand{\linspan}{\textbf{span}}
\newcommand{\colsp}{\textbf{colsp}}
\DeclareMathOperator{\im}{im}
\icmltitlerunning{Which Features are Learned by Contrastive Learning?}
\begin{document}

\twocolumn[
\icmltitle{Which Features are Learnt by Contrastive Learning? \\On the Role of Simplicity Bias in Class Collapse and Feature Suppression}



\icmlsetsymbol{equal}{*}

\begin{icmlauthorlist}
\icmlauthor{Yihao Xue}{ucla}
\icmlauthor{Siddharth Joshi}{ucla}
\icmlauthor{Eric Gan}{ucla}
\icmlauthor{Pin-Yu Chen}{ibm}
\icmlauthor{Baharan Mirzasoleiman}{ucla}
\end{icmlauthorlist}

\icmlaffiliation{ucla}{Department of Computer Science, University of California, Los Angeles, USA}
\icmlaffiliation{ibm}{IBM Research}

\icmlcorrespondingauthor{Yihao Xue}{yihaoxue@g.ucla.edu}

\icmlkeywords{Machine Learning, ICML}

\vskip 0.3in
]



\printAffiliationsAndNotice{\icmlEqualContribution} 

\begin{abstract}
Contrastive learning (CL) has emerged as a powerful technique for representation learning, with or without label supervision. However, supervised CL is prone to collapsing representations of subclasses within a class by not capturing all their features, and unsupervised CL may suppress harder class-relevant features by focusing on learning easy class-irrelevant features; both significantly compromise representation quality. Yet, there is no theoretical understanding of \textit{class collapse} or \textit{feature suppression} at \textit{test} time. 
We provide the first unified theoretically rigorous framework to determine \textit{which} features are learnt by CL. Our analysis indicate that, perhaps surprisingly, bias of (stochastic) gradient descent towards finding simpler solutions is a key factor in collapsing subclass representations and suppressing harder class-relevant features. Moreover, we present increasing embedding dimensionality and improving the quality of data augmentations as two theoretically motivated solutions to {feature suppression}. We also provide the first theoretical explanation for why employing supervised and unsupervised CL together yields higher-quality representations, even when using commonly-used stochastic gradient methods.
\end{abstract}

\section{Introduction}


Learning high-quality representations that generalize well to a variety of downstream prediction tasks has been a long-standing goal of machine learning \cite{hinton2006fast,ranzato2006efficient}.
Contrastive learning (CL) 
has emerged as an effective approach for solving this problem, both with and without supervision \cite{cl_simclr, cl_chuang_debiased_2020, cl_grill_bootstrap_2020, khosla2020supervised}. 
Unsupervised CL learns representations of training examples by maximizing agreement between augmented views of the same example. Similarly, supervised CL maximizes agreement between augmented views of examples in the same class.
Despite their empirical success, 
both supervised and unsupervised contrastive learning 
fail to capture \textit{all} semantically relevant features in the data.
In particular, \textit{supervised} CL can fall prey to \textit{class collapse} 
\cite{cc_dissecting_scl_2021, cc_chen2022perfectlybalanced}, where representations of \textit{subclasses} within a class may no longer be distinguishable from each other; thus, yielding a poor classification performance at the subclass level.
Similarly, \textit{unsupervised} CL can be afflicted with \textit{feature suppression} \cite{chen2021intriguing,shortcut2021_robinson} where easy but class-irrelevant features suppress the learning of harder class-relevant ones; deteriorating the generalizability of the obtained representations. \looseness=-1


 In spite of the significance of these failure modes, there is no clear theoretical understanding of them and consequently, no rigorous solution. {Feature suppression} has not been studied theoretically by prior work and the only theoretical work on {class collapse} \cite{cc_dissecting_scl_2021} cannot explain why we observe {class collapse} at \textit{test time}.
 

Addressing {class collapse} and {feature suppression} requires a theoretical understanding of \textit{which} features CL learns. However, existing CL theory \cite{cl_wang2020_uniformity,cc_dissecting_scl_2021, lee2021predicting,tosh2021contrastive,tosh2021contrastive2,arora2019theoretical,tsai2020self,haochen2021provable,wen2021toward,ji2021power} only explains {how} semantically relevant features are learned. 
The implicit assumption is that \textit{all} semantically relevant features are learned, but the occurrence of {class collapse} and {feature suppression} proves otherwise. We propose the first unified (i.e. for both supervised and unsupervised CL) framework to answer \textit{which} semantically relevant features are learned. We then leverage this framework to characterize {class collapse} and {feature suppression}. Table \ref{tab:summary} summarizes the main findings in this paper, which are detailed below.


\begin{table*}[!t]
    \caption{
    A concise overview of the key findings in this research. In the table, `CC' and `FS' refers to class collapse and feature suppression, respectively. `Thm' and `Exp' refers to theorem and experiment, respectively. 
    }
    \label{tab:summary}
    \centering
    \begin{tabular}{|l|p{3in}|l|p{1.8in}|}
    \toprule
     Loss & Finding & Thm/Exp & Implication \\
       \hline
       \multirow{4}{*}{SCL} & $\min$ loss $\not \Rightarrow$ CC &  Thm \ref{thm: minimizer_without_cc}  & \multirow{4}{1.4in}{Simplicity bias of (S)GD contributes to CC} \\
       \cline{2-3}
       & ($\min$ loss $\&~\min$ norm) $\Rightarrow$CC & Thm  \ref{thm: minimizer_with_cc} \& \ref{thm: min_norm_asymp} &  \\
       \cline{2-3}
        & (S)GD learns subclasses early in training & Thm \ref{thm: early_in_training} \& Exp & \\
        \cline{2-3}
         & (S)GD eventually unlearns subclasses, leading to CC & Exp &  \\
        \hline
         \multirow{2}{*}{UCL} &  
        With insufficient embedding size, ($\min$ loss $\&~\min$ norm) $\Rightarrow$ FS & Thm \ref{thm: FS_1} \& Exp & \multirow{2}{1.8in}{Simplicity bias of (S)GD contributes to FS; Larger embedding size/better augmentation alleviates FS}  \\
        \cline{2-3}
         & 
        With imperfect data augmentation, 
        ($\min$ loss $\&~\min$ norm) $\Rightarrow$ FS, even with sufficient embedding size &  Thm \ref{thm: FS_2} & \\
        \hline
         Joint & Joint loss can avoid both CC and FS & Thm \ref{thm: joint} \& Exp & Justification of joint loss \\ \bottomrule
    \end{tabular}
    \vspace{-4mm}
\end{table*}





\textbf{Class Collapse in Supervised CL.} We prove that, perhaps surprisingly and in contrast to the current understanding \cite{cc_dissecting_scl_2021}, global minimizers of the supervised contrastive loss do not necessarily collapse the representations of the subclasses 
at \textit{test} time. We find, however, that the \textit{minimum norm} global minimizer does suffer from class collapse on test data. 

We then study minimizing the supervised contrastive loss using (S)GD and show that, interestingly, subclass features are learned early in training. However, we verify empirically, that as training proceeds, (S)GD forgets the learned subclass features and collapses class representations.

Altogether, our findings indicate that the bias of SGD towards finding simpler solutions \cite{lyu2021gradient} is the main deriving factor in collapsing class representations.


\textbf{Feature Suppression in Unsupervised CL.} 
We provide the first theoretical characterization of feature suppression in unsupervised CL. In particular, we show that the \textit{minimum norm} global minimizer of the unsupervised contrastive loss results in feature suppression, when the embedding dimensionality is small \textit{or} when data augmentations preserve class-irrelevant features better than class-relevant features. 
Again, our results identify the simplicity bias of (S)GD as a key factor in suppressing features of the input data. In addition, our findings suggest practical solutions to the problem of feature suppression: increasing embedding dimensionality and/or improving the quality of data augmentations. 

\textbf{Theoretical Justification for Combining Supervised and Unsupervised CL to Obtain Superior Representations.} 
Finally, we prove that the \textit{minimum norm} global minimizer of the joint loss (weighted sum of the supervised and unsupervised contrastive loss) does not suffer from class collapse or feature suppression, explaining why \citet{cc_chen2022perfectlybalanced, islam2021broad} observes this empirically (i.e. even when using SGD).

\section{Related Work}
\textbf{Theory of CL.}
While there has been much progress in theoretically understanding CL, most prior work \cite{cl_wang2020_uniformity,cc_dissecting_scl_2021, lee2021predicting,tosh2021contrastive,tosh2021contrastive2,arora2019theoretical,tsai2020self,haochen2021provable} are focused on understanding how CL clusters examples using semantically meaningful information or providing generalization guarantees on downstream tasks. Feature learning has only been studied by
\cite{wen2021toward,ji2021power} which show that CL learns semantically meaningful features from the data. In contrast, we show that CL may not learn \textit{all} semantically relevant features. 

Other important recent work \cite{saunshi2022understanding,haochen2022theoretical} studied the role of inductive bias of the function class in the success of CL. Our analysis, however, is focused on understanding failure modes of CL i.e. {class collapse} and {feature suppression}.

\textbf{Class Collapse in Supervised CL.} \citet{cc_chen2022perfectlybalanced} empirically demonstrates \textit{class collapse} on test data, but does not offer any rigorous theoretical explanation. \citet{cc_dissecting_scl_2021} proves that optimizing the supervised contrastive loss leads to class-collapsed training set representations. 
However, we show that there exist many minimizers with such class-collapsed training set representations and not all of them suffer from class collapse at \textit{test time}. We also present the first theoretical characterization of {class collapse} at test time. \looseness=-1

\textbf{Feature Suppression in Unsupervised CL.} Feature suppression has been empirically observed by \citet{tian2020makes,chen2021intriguing,shortcut2021_robinson} but we lack a theoretical formulation of this phenomenon. \citet{addressing_feature_suppression_2020} shows that InfoNCE has local minimums that exhibit {feature suppression}, thus attributing this phenomenon to failure of optimizing the loss. However, \citet{shortcut2021_robinson} shows that the InfoNCE loss can be minimized by many models, some of which learn all task-relevant features, while others do not. We put forth the only theoretical characterization of {feature suppression} and consequently, use this understanding to suggest practical solutions to remedy this problem. \looseness=-1

\textbf{Joint Supervised and Unsupervised Contrastive Loss.} Recently, several versions of loss functions that combine supervised and unsupervised contrastive losses \cite{islam2021broad, cc_chen2022perfectlybalanced} have been empirically observed to have superior transfer learning performance, by avoiding class collapse. We provide the first theoretically rigorous analysis of which features the \textit{minimum norm} global minimizer of the joint loss learns, provably demonstrating that it can avoid class collapse and feature suppression. To the best of our knowledge, this is the only theoretical result that can be used to understand the empirical success of joint losses. \looseness=-1

\section{Problem Formulation}

\subsection{Data distribution}\label{sec: data_dist}

We define data distribution $ \mathcal{D}_{\orig}$ below. Each example $(\vct{x}, y, y_\sub)\in\mathcal{D}_\orig$ is generated as follows:
\begin{align}
    \nonumber
    \vct{x} = & \vct{u} + \vct{\xi}, \quad\quad \text{where}~~\\
    \nonumber
     \vct{u} = (y \phi_1+\mu_1)\vct{v}_1 +& (y_\sub \phi_2+ \mu_2)\vct{v}_2
    + (\rho_k\phi_{k}+\mu_k) \vct{v}_{k},
\end{align}
and $k$ is uniformly selected from $3, \dots, K$; and $y, y_\sub, \rho_k$ are uniformly sampled from $\{-1, 1\}$. 

\textbf{Features and Noise.} We assume features and noise form an orthonormal basis of $\mathbbm{R}^d$, i.e., a set of unit orthogonal vectors $\{\vct{v}_1, \dots, \vct{v}_d\}$ in $\mathbbm{R}^d$. W.l.o.g., one can let $\vct{v}$'s be the standard basis, where the first $K$ 
basis are feature vectors. $\{\phi_1, \dots, \phi_K\}$ are constants indicating the strength of each feature, and $\{\mu_1, \dots,\mu_K\}$ are the means of the corresponding entries in the feature vectors. In particular: 
    

    $\bullet$ Class Feature: $\vct{v}_1$.\\
$\bullet$  Subclass Feature: $\vct{v}_2$. \\
$\bullet$ (Class and subclass) irrelevant features:\footnote{In the rest of the paper, we use irrelevant features to refer to features that may have semantic meaning but are irrelevant to class and subclass.}  $\vct{v}_3, \dots, \vct{v}_K$.\\
$\bullet$ Noise $\vct{\xi}\sim\mathcal{D}_\xi$:  $\mathcal{D}_\xi$ is a uniform distribution over features $ \sigma_\xi\vct{v}_1, \dots, \sigma_\xi\vct{v}_d $, where $\sigma_\xi$ indicates the variance of the noise.\footnote{This definition of noise is nearly identical to Gaussian noise $\mathcal{N}(0,\frac{\sigma_\xi^2}{d}\mathbf{I}_d)$ in the high-dimensional regime but keeps the analysis clear. Our results can be extended to the Gaussian noise setting. } \looseness=-1

We sample $n$ examples from $\mathcal{D}_\orig$ to form the original dataset $\hat{\mathcal{D}}_\orig$. 
\begin{assumption}[Balanced Dataset] \label{assump: balanced}
All combinations of  $(y_i, y_{\sub, i},  k_i, \rho_i)$ are equally represented in $\hat{\mathcal{D}}_\orig$.\footnote{This can be approximately achieved when $n$ is sufficiently larger than $K$. While our analysis can be generalized to consider imbalanced data, this is outside the scope of this work. 
}
\end{assumption}

\textbf{A Concrete Example of the Above Data Distribution.} Let $y=1$ be dogs and $y=-1$ be cats, $y_\sub=1$ if they are fluffy and $y_\sub=-1$ if they are not-fluffy. Then $(\phi_1+\mu_1)\vct{v}_1+(\phi_2+\mu_2)\vct{v}_2$ denotes a fluffy dog. Here, the background can be interpreted as an irrelevant feature: let $\rho_3=1$ for grass and $\rho_3=-1$ for forest. Then $(\phi_1+\mu_1)\vct{v}_1+(\phi_2+\mu_2)\vct{v}_2 + (\phi_3 + \mu_3) v_3$ represents a fluffy dog on grass. Note that each example only selects one irrelevant feature, which mimics the real world, where examples do not necessarily have all types of objects in the background i.e. many examples have neither grass or forests as their background. 

\textbf{Rationale for Including Feature Means $\mu_i$.}
In general, it is unreasonable to expect all features to have $0$ expectation over entire data, thus we introduce $\mu$ to further generalize our analysis. 
We find that considering a non-zero mean for the subclass feature is sufficient to provide novel insights into class collapse (Theorem \ref{thm: early_in_training}). Therefore, for clarity, we set all the $\mu$'s except $\mu_2$ to zero.

\textbf{Relation to Sparse Coding Model.} This data distribution is a variant of the sparse coding model which is usually considered as a provision model for studying the feature learning process in machine learning (e.g., \citep{zou2021understanding,wen2021toward, liu2021self}). It naturally fits into many settings in machine learning, and in general mimics the outputs of intermediate layers of neural networks which have been shown to be sparse \cite{papyan2017convolutional}. It is also  used to model the
sparse occurrences of objects in image tasks \citep{olshausen1997sparse,vinje2000sparse,foldiak2003sparse,protter2008image,yang2009linear,mairal2014sparse} and 
polysemy of words in language tasks \cite{arora2018linear}.


\subsection{Data Augmentation $\mathscr{A}(\cdot)$}
For each example in $\hat{\mathcal{D}}_\orig$, we generate $m$ augmentations 
to form $\hat{\mathcal{D}}_\aug$. We consider the following augmentation strategy: given an example $\vct{x}=\vct{u} +\vct{\xi}$, its augmentation is given by $
    \mathscr{A}(\vct{x}) = \vct{u} +\vct{\xi}' $, 
where $\vct{\xi}'$ is a new random variable from $\mathcal{D}_\xi$ 
independent of $\vct{\xi}$. This is an abstract of augmentations used in practice where two augmentations from the same example share certain parts of the features and have the correlation between their noise parts removed or weakened.

\begin{assumption}[High dimensional regime] 

$d$ is at least $\omega(n^2m^2)$.

\end{assumption}

\begin{assumption}[Sufficient sample size]\label{assump: large_sample_size}
The noise-to-sample-size ratio is not too large $\frac{\sigma_\xi^2}{mn}=o(1)$.
\end{assumption}


\subsection{Linear Model}
We consider a linear model with $p$ outputs. The model has weights $\vct{W}\in\mathbbm{R}^{p\times d}$ and bias $\vct{b}\in\mathbbm{R}^p$ where $p \geq 3$. The function represented by the model is $
    f_{\vct{\Theta}}( \vct{x}) = \mtx{W}\vct{x} + \vct{b}$, where we define $\vct{\Theta} \in\mathbbm{R}^{p\times(d+1)} $ as the concatenated parameter $ [\vct{W}~~ \vct{b}]$. We establish theoretical proofs of class collapse and feature suppression for linear model, and also empirically verified them for (non-linear) deep neural networks.
    \looseness=-1

\subsection{Loss function}\label{sec: loss}
 For unsupervised contrastive learning, we use the unsupervised spectral contrastive loss popular in prior theoretical and empirical work \cite{haochen2021provable,saunshi2022understanding,haochen2022theoretical} and for supervised contrastive learning, we consider the natural generalization of this loss to incorporate supervision. 
Let $\mathcal{A}_i$ denote the set of augmentations in $\Daug$ generated from the $i$-th original example with $\mathscr{A}(\cdot)$. Let $\mathcal{S}_{+1}$ and $\mathcal{S}_{-1}$ denote the set of augmentations in $\Daug$ with class labels $+1$ and $-1$, respectively. Let $\hat{\mathbbm{E}}$ denote the empirical expectation. Then we have the following loss functions: 
\begin{align}
    \nonumber
    \mathcal{L}_\unsup (\vct{\Theta}) = &  -2\hat{\mathbbm{E}}_{i\in[n],\vct{x}\in\mathcal{A}_i, \vct{x}^+\in\mathcal{A}_i}\left[f_{\Theta}(\vct{x})^{\top} f_{\Theta}(\vct{x}^+)\right] \\
     +& \hat{\mathbbm{E}}_{\vct{x}\in\Daug, \vct{x}^-\in\Daug}\left[(f_{\Theta}(\vct{x})^{\top} f_{\Theta}(\vct{x}^-))^2\right]  
    \label{eq: loss:ucl} \\
    \nonumber
    \mathcal{L}_\supcon (\vct{\Theta}) = &  -2\hat{\mathbbm{E}}_{c\in\{-1,1\}, \vct{x}\in\mathcal{S}_c, \vct{x}^+\in\mathcal{S}_c}\left[f_{\Theta}(\vct{x})^{\top} f_{\Theta}(\vct{x}^+)\right] \\
     +& \hat{\mathbbm{E}}_{\vct{x}\in\Daug, \vct{x}^-\in\Daug} \left[(f_{\Theta}(\vct{x})^{\top} f_{\Theta}(\vct{x}^-))^2\right].  
    \label{eq: loss:scl} 
\end{align}

\section{Simplicity Bias Contributes to Class Collapse in Supervised CL}\label{sec: cc}



We make two key observations through our theoretical analysis and experiments (henceforth we refer to class collapse at \textit{test time} simply as `class collapse'):
\looseness=-1
\begin{itemize}
    \item[1.] Theoretically, not all global minimizers exhibit class collapse, but the \emph{minimum norm} minimizer does.
    \item[2.] Theoretically and empirically, when the model is trained using (S)GD, some subclasses are \emph{provably} learned early in training. Empirically, however, those subclasses will eventually be unlearned i.e. S(GD) converges to minimizers that exhibit class collapse.\looseness=-1
\end{itemize}
Altogether, these observations suggest that class collapse, which has been observed in practice when certain gradient-based algorithms are used to minimize the loss, cannot be explained by simply analyzing the loss function. This highlights the importance of studying the dynamics and inductive bias of training algorithms in contrastive learning.\looseness=-1

\begin{figure*}[!t]
\centering
\subfigure[epoch 0]{
\includegraphics[width=0.21\textwidth]{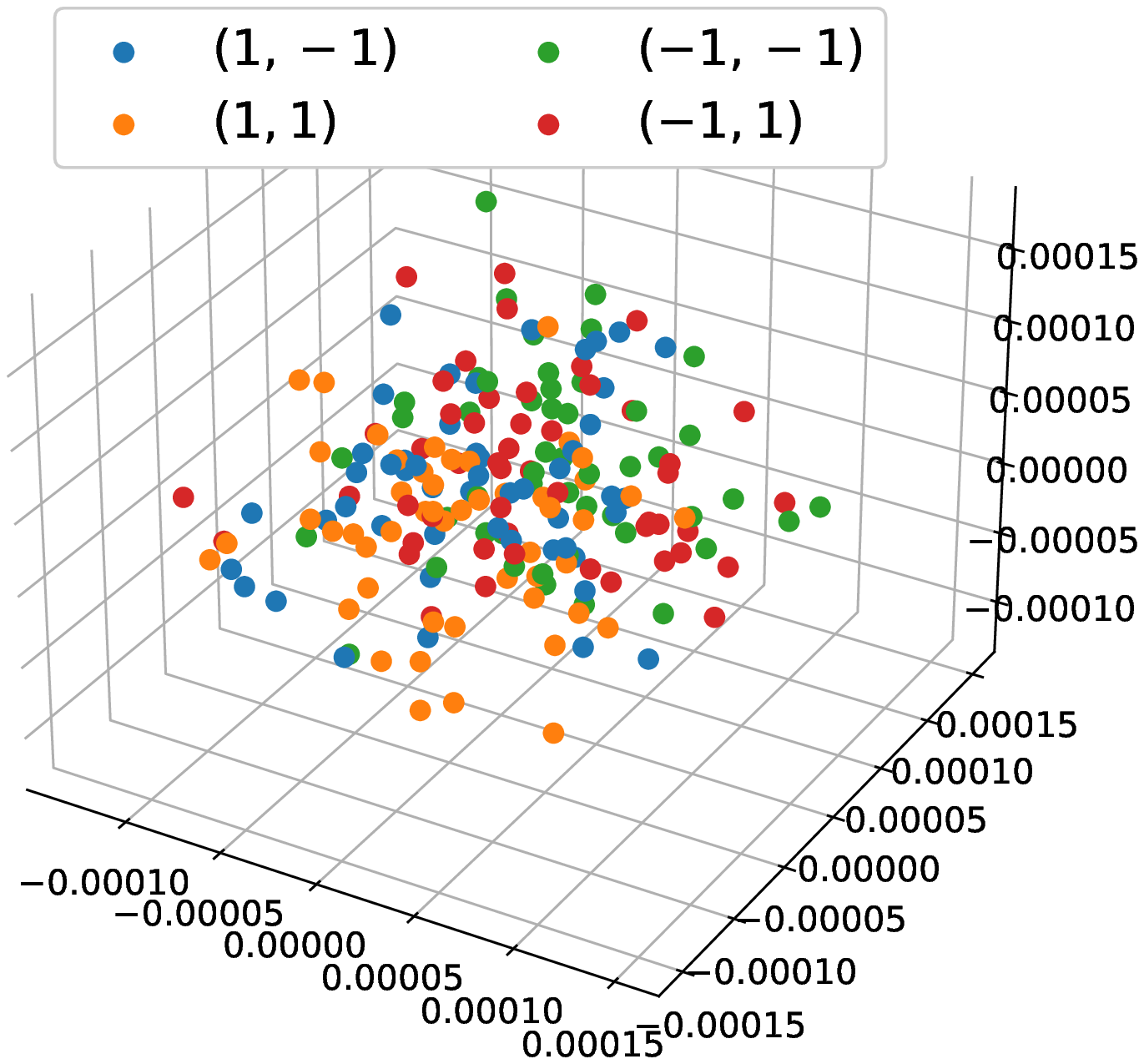}
	}
\subfigure[epoch 45]{
\includegraphics[width=0.21\textwidth]{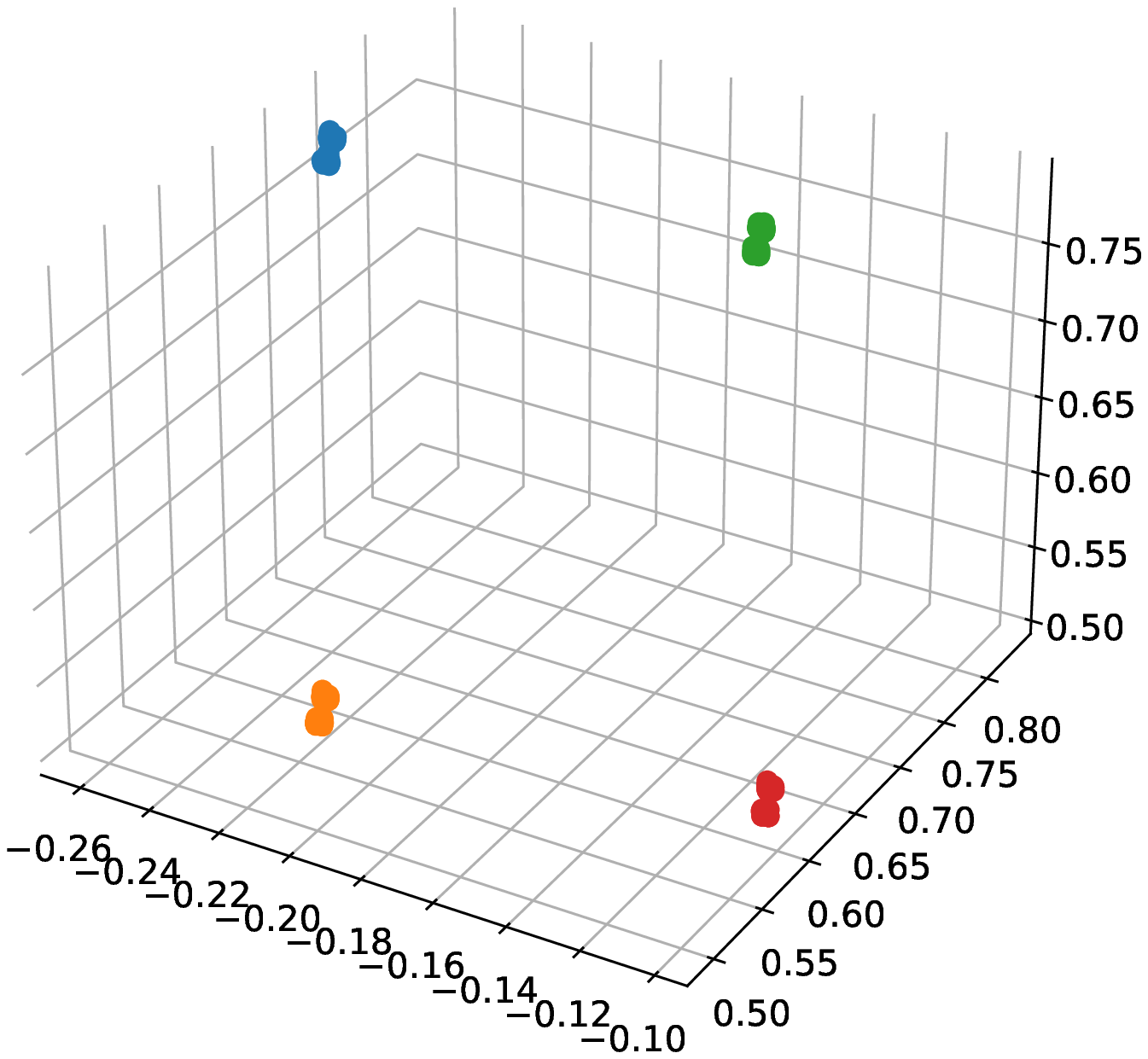}
	}
 \subfigure[epoch 60]{
\includegraphics[width=0.21\textwidth]{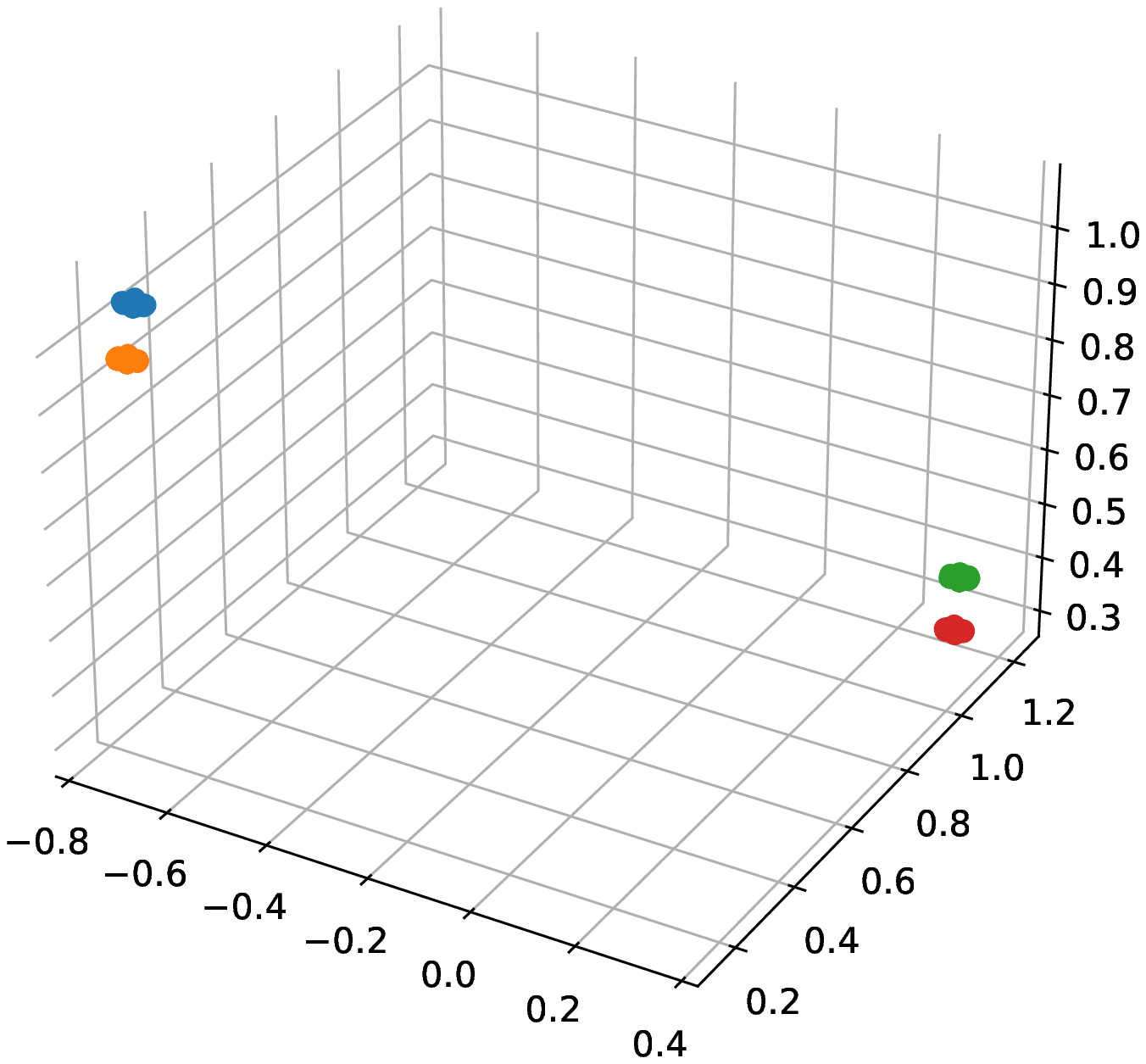}
	}
 \subfigure[epoch 100]{
\includegraphics[width=0.21\textwidth]{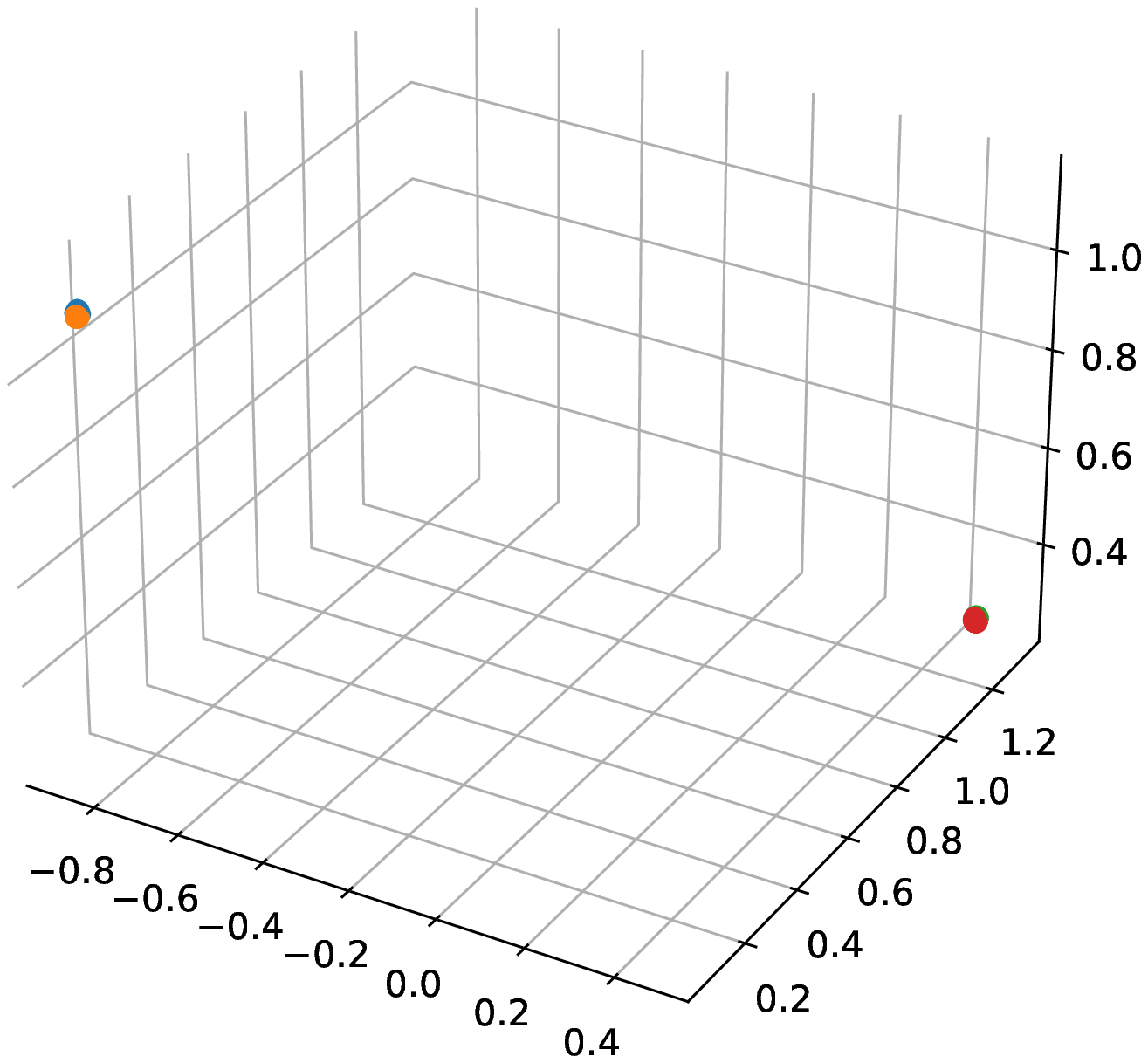}
	}
    \caption{Visualization of the embedding space at different epochs. We let $p=3$ so that we can see the whole embedding space from a 3D plot. Other parameters: $n=1000, m=5, d=2000, K=4, \phi_1=\phi_2=\phi_3=\phi_4=1, \mu=1, \sigma=2, \sigma_0=0.001, \eta=0.05$. Colors represent combinations of class and subclass labels $(y, y_\sub)$. We use test examples for the plots. At epoch 45, the four groups of examples are well separated in the embedding space. However groups in the same classes are merged afterwards. }
    \label{fig: cc_embedding}
\end{figure*}

\begin{figure}[!t]
    \centering
 \subfigure[$p=3$]{
\includegraphics[width=0.21\textwidth]{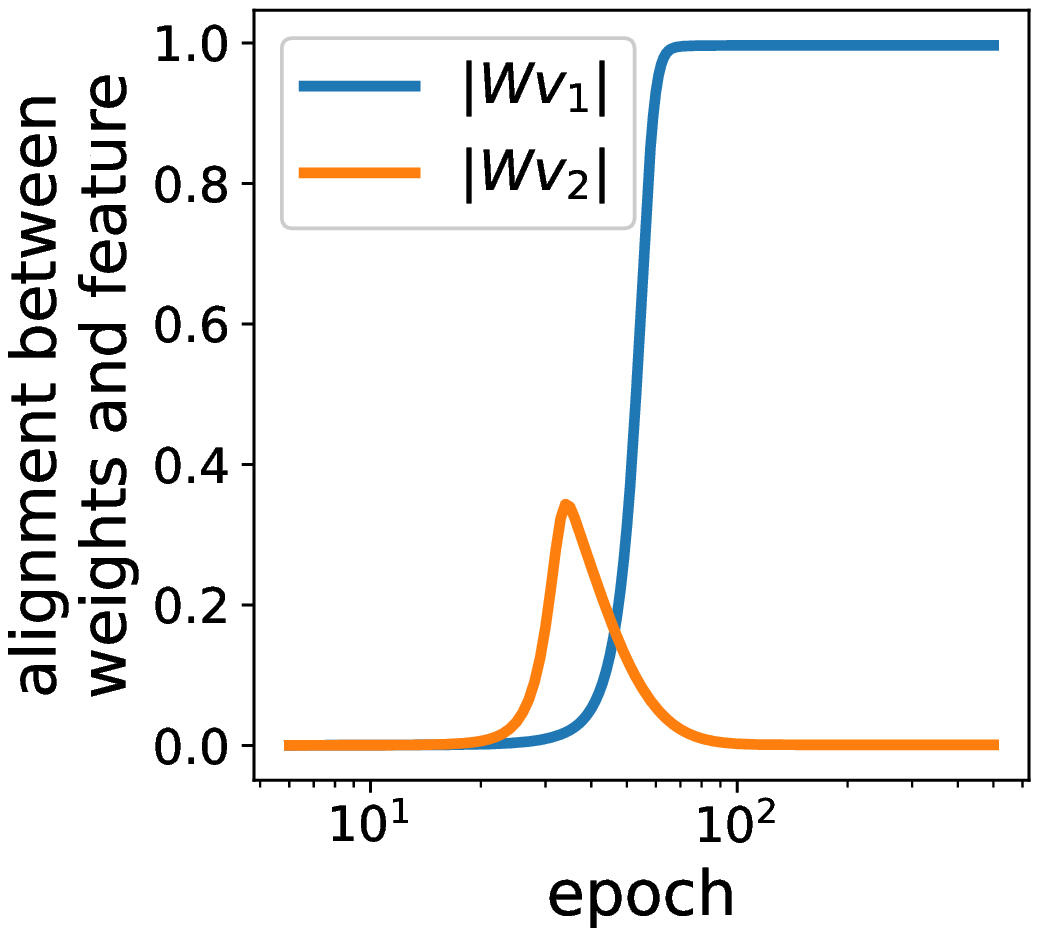}
}
 \subfigure[$p=500$]{
\includegraphics[width=0.21\textwidth]{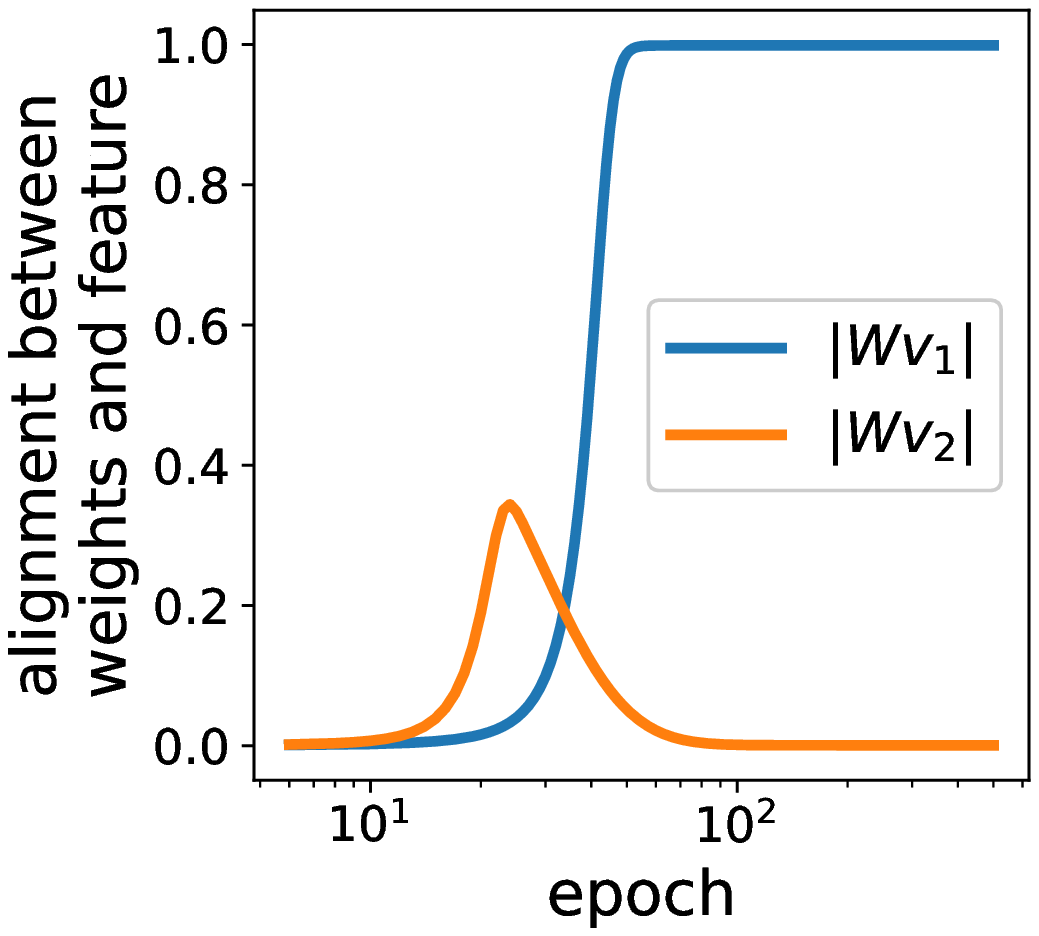}
}
\vspace{-.3cm}
    \caption{$\|\mtx{W}_t \vct{v}_1 \|$ and $\|\mtx{W}_t \vct{v}_2 \|$ at different epochs. Both features are learned early in training, but $\vct{v}_2$ is unlearned later.}
    \label{fig: cc_alignment}
    \vspace{-3mm}
\end{figure}


\subsection{What Minimizers Have Class Collapse?}
We first define class collapse in terms of the alignment between the model weights and the subclass feature.
\begin{definition}[Exact class collapse]\label{def: exact_cc} We say exact class collapse happens at test time when:
\begin{align}
\nonumber
\forall \vct{\beta}\in\mathbbm{R}^p, \Pr_{(\vct{x},y,y_\sub)\sim\mathcal{D}_{\orig}}(y_\sub\vct{\beta}^\top f_{\mtx{\Theta}}(\vct{x})>0) = 1/2.
\end{align}
\end{definition}

The definition means that no linear classifier on the embeddings of examples drawn from $\mathcal{D}_{\orig}$ can predict the subclass label with accuracy beyond random guess.\footnote{Actually we are able to analyze a stronger version of class collapse: $\Pr_{(\vct{x}, y, y_\sub)\sim\mathcal{D}_\orig}( f_{\mtx{\Theta}}(\vct{x}) | y_\sub ) =\Pr_{(\vct{x}, y, y_\sub)\sim\mathcal{D}_\orig}( f_{\mtx{\Theta}}(\vct{x})  ) $, which means the distributions of embeddings given and not given the subclass label are exactly the same. Nonetheless, we present this simpler formulation for clarity. }  

This is different from class collapse on the training set which is not defined on the population set $\mathcal{D}_{\orig}$ but on the training samples $\mathcal{\hat{D}}_{\orig}$. 

\begin{proposition}
For any $\mtx{\Theta}^*\in \min_{\mtx{\Theta}}\mathcal{L}_\supcon(\mtx{\Theta})$, we have$f_{\mtx{\Theta}^*}(\vct{x}_i) = f_{\mtx{\Theta}^*}(\vct{x}_j)$ for all $ \vct{x}_i, \vct{x}_j$ in the training set $\hat{\mathcal{D}}_\aug$ such that $ y_i = y_j$.\looseness=-1
\label{cc:train_emb}
\end{proposition}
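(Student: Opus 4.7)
My plan is to cast $\mathcal{L}_\supcon$ as a Frobenius-norm approximation problem between the Gram matrix of the embeddings and a fixed rank-two target, and then read off the class-collapsed structure directly from that target. Concretely, I would set $N = nm = |\Daug|$, stack the embeddings $f_{\mtx{\Theta}}(\vct{x}_i)$ into a matrix $\mtx{F}\in\mathbbm{R}^{N\times p}$, and let $\mtx{G} = \mtx{F}\mtx{F}^\top/N$. A direct expansion of the loss, using Assumption~\ref{assump: balanced} (so that $|\mathcal{S}_{+1}|=|\mathcal{S}_{-1}|=N/2$), gives the standard spectral-contrastive identity
\[
\mathcal{L}_\supcon(\mtx{\Theta}) \;=\; \|\mtx{G}-\mtx{A}\|_F^2 - \|\mtx{A}\|_F^2,
\]
where $\mtx{A}\in\mathbbm{R}^{N\times N}$ is the target with $\mtx{A}_{ij}=(2/N)\,\mathbbm{1}[y_i=y_j]$. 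So minimizing $\mathcal{L}_\supcon$ over $\mtx{\Theta}$ is equivalent to minimizing $\|\mtx{G}-\mtx{A}\|_F^2$ over the set of Gram matrices $\mtx{G}$ realizable by the linear model.

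Next, I would analyze $\mtx{A}$ itself. Under balancedness, $\mtx{A}$ is block-diagonal with two rank-one blocks $(2/N)\mathbf{1}\mathbf{1}^\top$ of size $N/2$, hence PSD of rank two with nonzero eigenvalues both equal to $1$. Since $p\geq 3\geq\rank(\mtx{A})$, $\mtx{A}$ can in principle be realized as $\mtx{F}\mtx{F}^\top/N$ for some $\mtx{F}\in\mathbbm{R}^{N\times p}$. To confirm that the linear model is expressive enough to attain such an $\mtx{F}$, I would use the high-dimensional regime $d=\omega(n^2m^2)$: with high probability under the noise distribution, the $N$ augmented inputs in $\Daug$ are linearly independent (their noise coordinates land on distinct basis directions lying outside the finite-dimensional feature span), so a linear map $\vct{x}\mapsto\mtx{W}\vct{x}+\vct{b}$ can send them to any prescribed $p$-dimensional targets. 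Choosing $f_{\mtx{\Theta}}(\vct{x}_i)=\sqrt{N}\,\vct{u}_{y_i}$ for any pair of orthogonal vectors $\vct{u}_{\pm 1}\in\mathbbm{R}^p$ with $\|\vct{u}_{\pm 1}\|^2=2$ then yields $\mtx{G}=\mtx{A}$.

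Putting the pieces together, the minimum value of $\mathcal{L}_\supcon$ equals $-\|\mtx{A}\|_F^2$ and is attained exactly when $\mtx{G}(\mtx{\Theta})=\mtx{A}$. Hence for any minimizer $\mtx{\Theta}^*$ and any $\vct{x}_i,\vct{x}_j\in\Daug$ with $y_i=y_j$, we have $\mtx{G}_{ii}=\mtx{G}_{jj}=\mtx{G}_{ij}=2/N$, so
\[
\|f_{\mtx{\Theta}^*}(\vct{x}_i)-f_{\mtx{\Theta}^*}(\vct{x}_j)\|^2 \;=\; N\bigl(\mtx{G}_{ii}+\mtx{G}_{jj}-2\,\mtx{G}_{ij}\bigr) \;=\; 0,
\]
which is the stated equality.

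The reduction of the loss to a Frobenius matching and the spectral analysis of $\mtx{A}$ are routine. The main obstacle is the realizability step: one must show that the linear model, not some arbitrary feature map, can match the rank-two target on the observed inputs. This rests on the linear independence of the $N$ augmented training examples in $\mathbbm{R}^d$, which the high-dimensional regime and the structure of $\mathcal{D}_\xi$ guarantee with high probability.
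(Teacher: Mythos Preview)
Your argument is correct and is essentially a Gram-matrix reformulation of what the paper's machinery delivers. The paper does not give a standalone proof of this proposition; it is meant to follow from the minimizer characterization in Lemma~\ref{lemma: minimizers} (applied with $\mtx{P}=\mtx{M}^+$, $\mtx{Q}=\mtx{M}$, and $p\geq \rank(\mtx{M}^+)=2$), which yields $\mtx{M}\mtx{W}^{*\top}\mtx{W}^{*}\mtx{M}=\mtx{M}^+$ at any global minimum. Writing $\mtx{M}=\tfrac{1}{N}\mtx{X}\mtx{X}^\top$ and $\mtx{M}^+=\tfrac{1}{N}\mtx{X}\mtx{A}\mtx{X}^\top$ and observing that $\colsp(\mtx{A})\subset\colsp(\mtx{X}^\top)$ (since the constant vector and the label vector are rows of the effective data matrix) gives $\mtx{G}=\mtx{A}$ exactly as in your final step. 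So the two routes are the same computation in different coordinates: you work in embedding/Gram space, the paper works in feature/covariance space.

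Two small remarks. First, the realizability construction has a scaling slip: with $\mtx{G}=\mtx{F}\mtx{F}^\top/N$ and $\mtx{A}_{ij}=(2/N)\mathbbm{1}[y_i=y_j]$, the choice that matches $\mtx{G}=\mtx{A}$ is $f_{\mtx{\Theta}}(\vct{x}_i)=\vct{u}_{y_i}$ with $\|\vct{u}_{\pm 1}\|^2=2$ and $\vct{u}_{+1}\perp\vct{u}_{-1}$; the extra $\sqrt{N}$ factor should be dropped. Second, your realizability step invokes linear independence of the $N$ augmented inputs, which holds only with high probability in the paper's regime. If you instead go through the paper's Lemma~\ref{lemma: minimizers} and use the inclusion $\colsp(\mtx{A})\subset\colsp(\mtx{X}^\top)$ noted above, the conclusion $\mtx{G}=\mtx{A}$ (and hence training-set class collapse) follows deterministically for any dataset, not just with high probability. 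This is a mild strengthening worth incorporating.
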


This directly implies that minimizing the loss results in class collapse on the training set. However, the following theorem \ref{thm: minimizer_without_cc} shows that minimizing the loss does not necessarily lead to class collapse on the test set. To determine whether class collapse occurs, we need to determine whether the model learns the subclass feature. With a linear model, this exactly corresponds to constant alignment between weights and the subclass feature. 




\begin{theorem}[Minimizing $\mathcal{L}_\supcon \not \Rightarrow $ Class Collapse]\label{thm: minimizer_without_cc}

With high probability i.e. at least $1-O(\frac{m^2n^2}{d})=1-o(1)$, there exists $\mtx{\Theta}^*=[\mtx{W}^*~~ \vct{b}^*]$ such that $\mtx{\Theta}^*\in \min_{\mtx{\Theta}}\mathcal{L}_\supcon(\mtx{\Theta})$ $\vct{W}^*$ has constant alignment with subclass feature $\vct{v}_2$ i.e. $$\|\vct{W}^*\vct{v}_2\|=\Omega(1).$$
Hence, there exists a linear classifier in the embedding space that can predict subclass labels almost perfectly. I.e.,
\begin{align}
    \nonumber
    \exists \vct{\beta}, ~s.t.~  \Pr_{(\vct{x},y, y_\sub)\sim\mathcal{D}_\orig}(y_\sub\vct{\beta}^\top \mtx{W^*}\vct{x}>0 |y ) = 1-o(1).
\end{align}
\end{theorem}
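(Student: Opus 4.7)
My plan is to exhibit an explicit global minimizer $\mtx{\Theta}^* = [\mtx{W}^*\ \vct{b}^*]$ that maps the training augmentations to identical orthogonal per-class embeddings (attaining the minimum of $\mathcal{L}_\supcon$) while simultaneously preserving the subclass direction $\vct{v}_2$. The crucial observation is that in the high-dimensional regime $d = \omega(m^2 n^2)$, the $mn$ augmented training examples draw their noise from $mn$ pairwise-distinct coordinate directions with high probability, so the action of $\mtx{W}^*$ on each such noise direction is an independent free parameter. This freedom lets us \emph{cancel} the subclass contribution within each class on the training set without forcing $\mtx{W}^* \vct{v}_2$ to be zero, so $\vct{v}_2$ continues to be imprinted on test embeddings.

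Concretely, I would first condition on the event that the $mn$ training noise indices $\{j_{ia}\}$ are pairwise distinct and disjoint from $\{1,\ldots,K\}$; a birthday / union bound using $d = \omega(m^2 n^2)$ places this event at probability $1 - O(m^2 n^2 / d) = 1 - o(1)$. Using $p \geq 3$, pick orthonormal $\vct{e}_1, \vct{e}_2, \vct{e}_3 \in \mathbbm{R}^p$ and set $\vct{z}_{+1} := \sqrt{2}\,\vct{e}_1$, $\vct{z}_{-1} := \sqrt{2}\,\vct{e}_2$. Choose any $\vct{\omega}_1 \perp \vct{e}_3$ and define
\begin{align*}
\mtx{W}^*\vct{v}_1 &= \vct{\omega}_1,\quad \mtx{W}^*\vct{v}_2 = \vct{e}_3,\quad \mtx{W}^*\vct{v}_k = \vct{0}\ (3 \leq k \leq K),\\
\sigma_\xi \mtx{W}^*\vct{v}_{j_{ia}} &= \vct{z}_{y_i} - y_i\phi_1\vct{\omega}_1 - y_{\sub,i}\phi_2\vct{e}_3\ \text{on each training noise direction},\\
\mtx{W}^*\vct{v}_j &= \vct{0}\ \text{otherwise},\qquad \vct{b}^* = -\mu_1\vct{\omega}_1 - \mu_2\vct{e}_3.
\end{align*}
Direct substitution gives $f_{\mtx{\Theta}^*}(\vct{x}_i^{(a)}) = \vct{z}_{y_i}$ for every training augmentation, whence $\mathcal{L}_\supcon(\mtx{\Theta}^*) = -(\|\vct{z}_+\|^2 + \|\vct{z}_-\|^2) + \tfrac{1}{4}(\|\vct{z}_+\|^4 + \|\vct{z}_-\|^4 + 2(\vct{z}_+^\top\vct{z}_-)^2) = -2$. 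To argue this is the global minimum, I would invoke Proposition \ref{cc:train_emb} to restrict the search to per-class-constant embeddings, then solve the resulting bivariate quadratic: for fixed norms the cross term forces $\vct{z}_+ \perp \vct{z}_-$, and balancing equalizes the two norms at $\|\vct{z}_\pm\|^2 = 2$, which is realizable thanks to $p \geq 3$.

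Since $\mtx{W}^*\vct{v}_2 = \vct{e}_3$ we have $\|\mtx{W}^*\vct{v}_2\| = 1 = \Omega(1)$. For the classifier claim, take $\vct{\beta} := \vct{e}_3$. On the event that the test noise direction is neither a signal index nor one of the training noise coordinates --- of probability $1 - O(mn/d) = 1 - o(1)$ --- we get $\vct{\beta}^\top \mtx{W}^*\vct{x} = y_\sub \phi_2 + \mu_2$, so $y_\sub\vct{\beta}^\top \mtx{W}^*\vct{x} = \phi_2 + y_\sub\mu_2 > 0$ whenever the subclass feature is intrinsically identifiable (i.e., $\phi_2 > |\mu_2|$). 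The main obstacle will be the optimality step: verifying that $-2$ is truly the global infimum and not merely the value realized by our construction. Proposition \ref{cc:train_emb} is precisely what makes this tractable, reducing the question to the two-vector quadratic above; the remaining ingredients --- the explicit construction, the distinct-noise-coordinates event, and the test-time inner product --- are direct consequences of the high-dimensional regime.
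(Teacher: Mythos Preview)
Your construction is correct and takes a genuinely different route from the paper. The paper works at the level of $\mtx{W}^{*\top}\mtx{W}^*$: it invokes the general minimizer characterization (Lemma~\ref{lemma: minimizers}) to write every global minimizer as $\mtx{W}^{*\top}\mtx{W}^*=\mtx{M}^\dag\mtx{M}^+\mtx{M}^\dag+(\text{anything supported on }\ker\mtx{M})$, then uses the SVD--perturbation Lemma~\ref{lemma: SVD_perturb} to show that $\ker\mtx{M}$ retains a $\Theta(\sigma_\xi/\sqrt{mn})$ alignment with $\vct{v}_2$, and finally adds a rank-one term $\vct{a}\vct{a}^\top$ along that kernel direction to boost $\|\mtx{W}^*\vct{v}_2\|$ to $\Theta(1)$. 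You instead exploit the same high-dimensional freedom at the level of $\mtx{W}^*$ itself: the $mn$ distinct training noise coordinates give you $mn$ free columns, which you use to \emph{interpolate} the desired per-class-constant embeddings while leaving $\mtx{W}^*\vct{v}_2=\vct{e}_3$ untouched. Your argument is more elementary and entirely self-contained---no spectral perturbation, no pseudoinverse algebra---whereas the paper's route plugs directly into its general machinery and makes the $\ker\mtx{M}$ mechanism explicit, which is what later drives the contrast with the minimum-norm result.

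Two small points. First, using Proposition~\ref{cc:train_emb} to ``restrict the search'' is slightly circular: that proposition tells you what minimizers look like, not that $\mathcal{L}(\mtx{\Theta})\geq -2$ for \emph{all} $\mtx{\Theta}$; you still need existence of a minimizer to close the loop. The clean fix is the one-line identity $\mathcal{L}=\tfrac{1}{(mn)^2}\|\mtx{G}-2\mtx{Y}\|_F^2-2$ (with $\mtx{G}$ the embedding Gram matrix and $\mtx{Y}_{ij}=\mathbbm{1}\{y_i=y_j\}$), which gives the lower bound directly and recovers Proposition~\ref{cc:train_emb} as the equality case. Second, your caveat $\phi_2>|\mu_2|$ is an artifact of reading $\vct{\beta}^\top\mtx{W}^*\vct{x}$ in the original (bias-separated) formulation; in the effective formulation the paper actually analyzes, the bias is absorbed as $\mtx{W}^*\vct{v}_0$, so $\vct{\beta}^\top\mtx{W}^*\vct{x}$ already includes $\vct{\beta}^\top\vct{b}^*=-\mu_2$ and the condition disappears.
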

We prove the theorem in Appendix \ref{apdx: cc}. The proof utilizes Lemma \ref{lemma: SVD_perturb} which implies that, due to the high-dimensionality, the noise vectors have non-trivial effects on the empirical covariance matrix by rotating its kernel space. This results in 
the kernel space to have a $\Theta(\frac{\sigma_\xi}{\sqrt{mn}})$ alignment with the subclass feature. Since minimizers of the loss can behave arbitrarily on this kernel space, without any additional restriction, they can have any alignment with the subclass feature.

Next, we show that, the \textit{minimum norm} minimizer exhibits class collapse. 

\begin{theorem}[Minimizing $\mathcal{L}_\supcon$ + Minimum Norm $\implies$ Class Collapse]\label{thm: minimizer_with_cc}

Assume $\mu_2=0$. Let $\mtx{\Theta}^{**}=[\mtx{W}^{**}~~ \vct{b}^{**}]$ be the minimum norm minimizer of $\mathcal{L}_\supcon$, i.e.,
$$
\mtx{\Theta}^{**} = \arg\min_{\mtx{\Theta^*}} \| \mtx{\Theta}^* \|_F ~s.t.~ \mtx{\Theta}^*\in\arg\min_{\mtx{\Theta}} \mathcal{L}_\supcon(\mtx{\Theta}).
$$
Then with high probability i.e. at least $1-O(\frac{m^2n^2}{d})=1-o(1)$, $\mtx{W}^{**}$ has no alignment with subclass feature $\vct{v}_2$ i.e. \looseness=-1
$$ \|\mtx{W}^{**} \vct{v}_2 \|=0.$$  

This means class collapse occurs at test time (Definition \ref{def: exact_cc}), and no linear classifier does better than random guess for predicting subclass labels.
\end{theorem}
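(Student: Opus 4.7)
The plan is to use Proposition~\ref{cc:train_emb}, which forces every minimizer of $\mathcal{L}_\supcon$ to produce class-collapsed training embeddings $\mtx{\Theta}\tilde{\vct{x}}_i=\vct{f}_{y_i}$ for some $\vct{f}_+,\vct{f}_-\in\mathbbm{R}^p$, where $\tilde{\vct{x}}_i=[\vct{x}_i;\,1]$. Substituting into $\mathcal{L}_\supcon$ and using the balanced-data assumption reduces the loss to $-\|\vct{f}_+\|^2-\|\vct{f}_-\|^2+\tfrac14(\|\vct{f}_+\|^4+\|\vct{f}_-\|^4+2(\vct{f}_+^\top\vct{f}_-)^2)$, whose minimizers are exactly orthogonal pairs with $\|\vct{f}_\pm\|=\sqrt{2}$. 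Thus $\mtx{\Theta}^*$ is a global minimizer iff it realizes such a pair on the training set.

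Next I would write the min-norm solution for any fixed loss-optimal pair $(\vct{f}_+,\vct{f}_-)$. With $\tilde{\vct{X}}\in\mathbbm{R}^{nm\times(d+1)}$ the stacked augmented-data matrix and $\vct{F}=\vct{f}_+\mathbf{1}_+^\top+\vct{f}_-\mathbf{1}_-^\top$ (where $\mathbf{1}_\pm$ are class-indicator vectors in $\mathbbm{R}^{nm}$), the min-norm $\mtx{\Theta}$ satisfying $\mtx{\Theta}\tilde{\vct{X}}^\top=\vct{F}$ is $\mtx{\Theta}^{**}=\vct{F}(\tilde{\vct{X}}\tilde{\vct{X}}^\top)^{-1}\tilde{\vct{X}}$, with squared Frobenius norm $2\mathbf{1}_+^\top\vct{A}\mathbf{1}_++2\mathbf{1}_-^\top\vct{A}\mathbf{1}_-$, where $\vct{A}=(\tilde{\vct{X}}\tilde{\vct{X}}^\top)^{-1}$. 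Because this depends only on $\|\vct{f}_\pm\|$ and $\vct{f}_+^\top\vct{f}_-$, the min-norm value is invariant under the rotational freedom in the orthogonal pair, so every loss-optimal choice yields a valid min-norm minimizer and it suffices to analyze any one of them.

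It remains to evaluate $\mtx{W}^{**}\vct{v}_2=\mtx{\Theta}^{**}[\vct{v}_2;\,0]=\vct{F}\vct{A}\vct{X}\vct{v}_2$. Using $d=\omega(n^2m^2)$ and a union bound, I would establish that with probability $1-O(m^2n^2/d)$: (i) no noise vector $\vct{\xi}_i$ lies along $\vct{v}_2$, so $\vct{X}\vct{v}_2=\phi_2\vct{y}_\sub$ where $\vct{y}_\sub\in\{\pm 1\}^{nm}$ stacks the subclass labels (here $\mu_2=0$ is crucial); (ii) no $\vct{\xi}_i$ lies along $\vct{v}_1$ or the per-sample active $\vct{v}_{k_j}$, so the cross terms $\vct{U}\vct{\Xi}^\top$ vanish; and (iii) no two distinct $\vct{\xi}_i,\vct{\xi}_j$ share a coordinate, so $\vct{\Xi}\vct{\Xi}^\top=\sigma_\xi^2\vct{I}$. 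On this event $\tilde{\vct{X}}\tilde{\vct{X}}^\top=\sigma_\xi^2\vct{I}+\phi_1^2\vct{y}\vct{y}^\top+\phi_2^2\vct{y}_\sub\vct{y}_\sub^\top+\mathbf{1}\mathbf{1}^\top+\sum_{k\geq 3}\vct{w}_k\vct{w}_k^\top$, where $\vct{w}_k$ has entries $\rho_{k_i}\phi_k\mathbbm{1}(k_i=k)$. The balanced-data assumption makes $\mathbf{1},\vct{y},\vct{y}_\sub,\vct{w}_3,\ldots,\vct{w}_K$ pairwise orthogonal, so each is an eigenvector; in particular $\vct{A}\vct{y}_\sub=\vct{y}_\sub/(\sigma_\xi^2+nm\phi_2^2)$. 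Since $\mathbf{1}_+^\top\vct{y}_\sub=\mathbf{1}_-^\top\vct{y}_\sub=0$ (balanced data), we get $\vct{F}\vct{y}_\sub=0$ and therefore $\mtx{W}^{**}\vct{v}_2=0$. The main obstacle is the joint optimization over $(\vct{f}_\pm,\mtx{\Theta})$ in paragraph two: one must carefully verify that the min-norm value truly does not depend on which orthogonal pair is chosen, which ultimately rests on the symmetries $\mathbf{1}_+^\top\vct{y}_\sub=\mathbf{1}_-^\top\vct{y}_\sub=0$ and $\mu_2=0$ making the two classes interchangeable inside the norm formula.
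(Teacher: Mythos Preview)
Your proof is correct but follows a genuinely different route from the paper's. The paper rewrites $\mathcal{L}_\supcon$ in the trace form $\Tr[-2\mtx{W}^\top\mtx{W}\mtx{M}^+ + (\mtx{W}^\top\mtx{W}\mtx{M})^2]$ and invokes a general matrix-factorization lemma (Lemma~\ref{lemma: minimizers}) to conclude that every minimum-norm minimizer satisfies $\mtx{W}^{**\top}\mtx{W}^{**}=\mtx{M}^\dag\mtx{M}^+\mtx{M}^\dag$; the result then follows from Corollary~\ref{corollary: align_subclass}, which computes $\vct{v}_2^\top\mtx{M}^\dag\mtx{M}^+\mtx{M}^\dag\vct{v}_2=0$ via the spectral description of $\mtx{M}$ in Lemma~\ref{lemma: SVD_perturb}. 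You instead start from Proposition~\ref{cc:train_emb}, reduce to the min-norm solution of the linear system $\mtx{\Theta}\tilde{\vct{X}}^\top=\vct{F}$, and evaluate $\mtx{W}^{**}\vct{v}_2=\vct{F}\vct{A}\tilde{\vct{X}}[\vct{v}_2;0]$ directly: on the high-probability event this is proportional to $\vct{F}\vct{y}_\sub$, which vanishes because $\mathbf{1}_\pm^\top\vct{y}_\sub=0$ under Assumption~\ref{assump: balanced}.

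The trade-off is that the paper's route builds machinery that is reused verbatim for the UCL and joint-loss theorems (Lemma~\ref{lemma: minimizers} is applied with $\mtx{P}=\tmtx{M}$ and $\mtx{P}=\bar{\mtx{M}}$ as well) and yields the full expression for $\mtx{W}^{**\top}\mtx{W}^{**}$, whereas your argument is more elementary and self-contained for this particular statement: it avoids the pseudo-inverse and eigen-analysis of $\mtx{M}$ entirely and makes very transparent exactly where $\mu_2=0$ (so that $\tilde{\vct{X}}[\vct{v}_2;0]=\phi_2\vct{y}_\sub$) and the balanced-data orthogonalities enter. Your handling of the joint optimization over $(\vct{f}_+,\vct{f}_-,\mtx{\Theta})$ is also sound: since $\|\mtx{\Theta}^{**}\|_F^2=\Tr(\vct{F}^\top\vct{F}\vct{A})$ depends on $(\vct{f}_+,\vct{f}_-)$ only through $\|\vct{f}_\pm\|^2$ and $\vct{f}_+^\top\vct{f}_-$, every loss-optimal pair gives the same norm, so the set of min-norm minimizers is exactly the orbit $\{\vct{F}\vct{A}\tilde{\vct{X}}:\vct{f}_+\perp\vct{f}_-,\,\|\vct{f}_\pm\|=\sqrt{2}\}$, and $\mtx{W}^{**}\vct{v}_2=0$ for every member.
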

Theorems \ref{thm: minimizer_without_cc} and \ref{thm: minimizer_with_cc} show that minimizing the training loss does not necessarily lead to class collapse on test data, but does with additional constraint on the weights of the model. This is not due to a degenerate solution, as we show that both minimizers learn the class feature $\vct{v}_1$ (see corollary \ref{corollary: v1_learned_supcon}).\looseness=-1


\subsection{Intriguing Properties of GD}\label{sec: GD}
We now further our theoretical characterization of class collapse by investigating the setting where $\mathcal{L}_\supcon$ is minimized by GD. This is an important step toward understanding class collapse in practice, where similar optimization algorithms are used to minimize the loss. Our findings indicate that it is likely the simplicity bias of commonly used optimization algorithms that eventually leads to class collapse. 

 We consider GD with a constant learning rate $\eta$. The weights are initialized from a Gaussian distribution, i.e., the initial weight $\mtx{\Theta}_0$ has each of its element drawn from $\mathcal{N}(0, \frac{\sigma_0^2}{d})$. And the weights at training epoch $t$ are given by:
\begin{align}
    \nonumber
    \mtx{\Theta}_t = \mtx{\Theta}_{t-1} -\eta \nabla_{\mtx{\Theta}}\mathcal{L}_\supcon (\mtx{\Theta}_{t-1}).
\end{align}

\paragraph{Early in Training Some Subclasses are Provably Learned.}
By analyzing the training dynamics of GD, we find that subclasses are learned early in training.

\begin{theorem}[Early in training subclass features are learned] \label{thm: early_in_training} Assume $\sigma_0\sqrt{\frac{p}{d}}=o(1)$ and $\sigma_\xi=o(1)$. If the subclass feature has a constant non-zero mean such that $1+\mu^2 > \phi_1^2 $, then with probability at least $1-O(\frac{m^2n^2}{d}+\frac{1}{\poly(p)})=1-o(1)$
the following holds:\\
$\bullet~~ \|\vct{W}_0\vct{v}_2\|=o(1)$.\\
$\bullet~~ \exists t=O(\ln(\frac{1}{\sigma_0} \sqrt{\frac{d}{p}}))$, s.t. $\| \mtx{W}_t\vct{v}_2\|=\Omega(1)$, and \\
$ \bullet~~ \exists \vct{\beta}, \!~s.t.~  \Pr_{(\vct{x},y, y_\sub)\sim\mathcal{D}_\orig}(y_\sub\vct{\beta}^\top \mtx{W}_t\vct{x}\!>\!0 |y ) \!=\! 1-o(1).
    $ 
\end{theorem}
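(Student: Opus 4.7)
The plan is to linearize the GD dynamics around $\vct{\Theta}=\vct{0}$, identify the dominant eigendirection of the linearized update, and show it carries an $\Omega(1)$ component along $[\vct{v}_2;0]$ that drives $\vct{W}_t\vct{v}_2$ exponentially, while all slower-growing modes stay $o(1)$ at the extraction time $t^\ast$.

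Setting $\tilde{\vct{x}}=[\vct{x};1]$ and $\vct{\Theta}=[\vct{W}\;\vct{b}]\in\mathbbm{R}^{p\times(d+1)}$, a routine expansion rewrites the loss as $\mathcal{L}_\supcon(\vct{\Theta})=-\Tr(\vct{\Theta}\tilde{\vct{A}}\vct{\Theta}^\top)+\|\vct{\Theta}\tilde{\vct{\Sigma}}\vct{\Theta}^\top\|_F^2$, where $\tilde{\vct{A}}=\sum_c \bar{\tilde{\vct{x}}}_c\bar{\tilde{\vct{x}}}_c^\top$ is the sum of augmented class-mean outer products and $\tilde{\vct{\Sigma}}=\hat{\mathbbm{E}}[\tilde{\vct{x}}\tilde{\vct{x}}^\top]$. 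Differentiating yields the GD recursion $\vct{\Theta}_{t+1}=\vct{\Theta}_t(\mtx{I}+2\eta\tilde{\vct{A}})-4\eta\vct{\Theta}_t\tilde{\vct{\Sigma}}\vct{\Theta}_t^\top\vct{\Theta}_t\tilde{\vct{\Sigma}}$. Under Assumption~\ref{assump: balanced}, since $y_\sub,\rho_k$ average out within each class, $\bar{\vct{x}}_c=c\phi_1\vct{v}_1+\mu\vct{v}_2+O(\sigma_\xi/\sqrt{nm})$. Because $\|\bar{\tilde{\vct{x}}}_+\|=\|\bar{\tilde{\vct{x}}}_-\|$, their symmetric/antisymmetric combinations are orthogonal and diagonalize $\tilde{\vct{A}}$, giving the two nonzero eigenpairs
\begin{align}
\nonumber
\vct{u}_1=\tfrac{[\mu\vct{v}_2;1]}{\sqrt{\mu^2+1}},\ \lambda_1=2(\mu^2+1);\qquad \vct{u}_2=[\vct{v}_1;0],\ \lambda_2=2\phi_1^2.
\end{align}
The hypothesis $1+\mu^2>\phi_1^2$ makes $\vct{u}_1$ strictly dominant, and critically $\vct{u}_1$ carries an $\Omega(1)$ component $\mu/\sqrt{\mu^2+1}$ along $[\vct{v}_2;0]$. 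Davis--Kahan applied to the $O(\sigma_\xi/\sqrt{nm})$ concentration perturbation plus the noise-induced perturbation controlled by Lemma~\ref{lemma: SVD_perturb} preserves this eigenstructure.

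Early in training, while $\|\vct{\Theta}_t\|=o(1)$, the quartic correction is $O(\eta\|\vct{\Theta}_t\|^3)$, dominated by the linear drift of order $\eta\|\vct{\Theta}_t\|$, so inductively $\vct{\Theta}_t\vct{u}_i\approx(1+2\eta\lambda_i)^t\vct{\Theta}_0\vct{u}_i$ and $\vct{\Theta}_t\vct{r}\approx\vct{\Theta}_0\vct{r}$ for $\vct{r}$ in the kernel of $\tilde{\vct{A}}$. Gaussian concentration on $\vct{\Theta}_0$ (entries $\mathcal{N}(0,\sigma_0^2/d)$) gives $\|\vct{\Theta}_0\vct{u}\|=\Theta(\sigma_0\sqrt{p/d})=o(1)$ for every fixed unit $\vct{u}$, which establishes bullet~1. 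Pick $t^\ast=\Theta(\ln(\sigma_0^{-1}\sqrt{d/p}))$ so that $(1+4\eta(\mu^2+1))^{t^\ast}\sigma_0\sqrt{p/d}=\Theta(1)$, and decompose $[\vct{v}_2;0]=\tfrac{\mu}{\sqrt{\mu^2+1}}\vct{u}_1+\vct{r}$ with $\vct{r}\perp\vct{u}_1,\vct{u}_2$; then $\vct{W}_{t^\ast}\vct{v}_2=\tfrac{\mu}{\sqrt{\mu^2+1}}\vct{\Theta}_{t^\ast}\vct{u}_1+\vct{\Theta}_0\vct{r}$ has norm $\Theta(1)+o(1)=\Omega(1)$, which is bullet~2. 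At the same time, the slower mode satisfies $\|\vct{W}_{t^\ast}\vct{v}_1\|=O((\sigma_0\sqrt{p/d})^{1-\phi_1^2/(\mu^2+1)})=o(1)$ since $\phi_1^2<\mu^2+1$, and $\|\vct{W}_{t^\ast}\vct{v}_k\|=O(\sigma_0\sqrt{p/d})=o(1)$ for $k\ge 3$ (these directions lie in the kernel of $\tilde{\vct{A}}$).

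For bullet~3, pick $\vct{\beta}=\vct{W}_{t^\ast}\vct{v}_2$. Expanding $\vct{\beta}^\top\vct{W}_{t^\ast}\vct{x}$ via the data model gives a dominant term $(y_\sub\phi_2+\mu)\|\vct{W}_{t^\ast}\vct{v}_2\|^2$ and three cross terms $y\phi_1\langle\vct{W}_{t^\ast}\vct{v}_2,\vct{W}_{t^\ast}\vct{v}_1\rangle$, $\rho_k\phi_k\langle\vct{W}_{t^\ast}\vct{v}_2,\vct{W}_{t^\ast}\vct{v}_k\rangle$, and $\langle\vct{W}_{t^\ast}\vct{v}_2,\vct{W}_{t^\ast}\vct{\xi}\rangle$, each $o(1)$ by Cauchy--Schwarz, the preceding bounds, and $\sigma_\xi=o(1)$. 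Hence, conditioning on $y$, $y_\sub\vct{\beta}^\top\vct{W}_{t^\ast}\vct{x}=(\phi_2+y_\sub\mu)\|\vct{W}_{t^\ast}\vct{v}_2\|^2+o(1)>0$ with probability $1-o(1)$ in the natural parameter regime $|\mu|<\phi_2$ (combined with the stated hypothesis, this frames the setting where the subclass feature is linearly recoverable before the class signal catches up). The main obstacle is rigorously sustaining the linear approximation over the full horizon $t^\ast=\Theta(\ln(\sigma_0^{-1}\sqrt{d/p}))$: the quartic term cross-couples eigenmodes of different growth rates, so an inductive argument must simultaneously control $\|\vct{\Theta}_t\|$, block acceleration of the slow $\vct{u}_2$ mode that would inflate $\|\vct{W}_{t^\ast}\vct{v}_1\|$, and prevent amplification of the kernel component $\vct{\Theta}_t\vct{r}$. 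The high-dimensional regime $d=\omega(n^2m^2)$ together with Lemma~\ref{lemma: SVD_perturb} is precisely what enables this quantitative control over a logarithmic-length horizon.
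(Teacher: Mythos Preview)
Your approach is essentially the same as the paper's for the first two bullets: both linearize the GD update around zero, identify the top eigendirection of the class-center covariance $\mtx{M}^+$ (your $\tilde{\vct{A}}$), observe that under $1+\mu^2>\phi_1^2$ this direction carries an $\Omega(1)$ component along $\vct{v}_2$, and then run an induction to control the cubic term over the horizon $t^\ast=\Theta(\ln(\sigma_0^{-1}\sqrt{d/p}))$. The paper carries out explicitly the induction you flag as the ``main obstacle'': it tracks $\Gamma_i(t)=\|\mtx{W}_t\vct{l}_i^+\|$ along the eigendirections of $\mtx{M}^+$ together with a residual $\Gamma_B(t)$, and shows $\Gamma_1$ grows like $\alpha^t$ while every other $\Gamma$ stays bounded by $(\kappa^t+\psi)\Gamma_1(t)$ for a fixed constant $\psi$.

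Two cautions on your bullet-3 argument. First, you insert the extra hypothesis $|\mu|<\phi_2$, which is not in the theorem; with your choice $\vct{\beta}=\mtx{W}_{t^\ast}\vct{v}_2$ this is genuinely required, since the $\mu\vct{v}_2$ and $y_\sub\phi_2\vct{v}_2$ contributions to the embedding are collinear and no intercept-free linear $\vct{\beta}$ can separate them otherwise. Second, your claim $\|\mtx{W}_{t^\ast}\vct{v}_1\|=O((\sigma_0\sqrt{p/d})^{1-\phi_1^2/(\mu^2+1)})=o(1)$ holds under the pure linear dynamics, but the paper's induction only yields $\Gamma_2(t)\le(\kappa_2^t+\psi)\Gamma_1(t)$ with $\psi$ a \emph{constant}: the cubic cross-coupling feeds errors of size $\Theta(\Gamma_1(t)^2)\le\gamma^2=\Theta(1)$ into the ratio $\Gamma_2/\Gamma_1$, so extracting an $o(1)$ bound on $\|\mtx{W}_{t^\ast}\vct{v}_1\|$ would need a sharper argument than either you or the paper provides. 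Indeed, the paper's appendix restates the theorem with only the first two bullets and never proves the third.
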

The above theorem shows that there exists an epoch where the weights have constant alignment with the subclass feature and produce distinguishable subclass embeddings (proof in Appendix \ref{apdx: early}). 

The key step of our analysis is showing that early in training, GD aligns the weights with the first eigenvector of the covariance matrix of class centers. This alignment grows exponentially faster than alignments with any other directions. When $1+\mu^2 >\phi_1^2$, the subclass feature has a constant projection onto the first eigenvector and is therefore learned by the model. 

\begin{figure}
    \centering
\subfigure[Average subclass accuracy and class accuracy\label{subfig: avg_sub}]{
\includegraphics[width=.45\columnwidth]{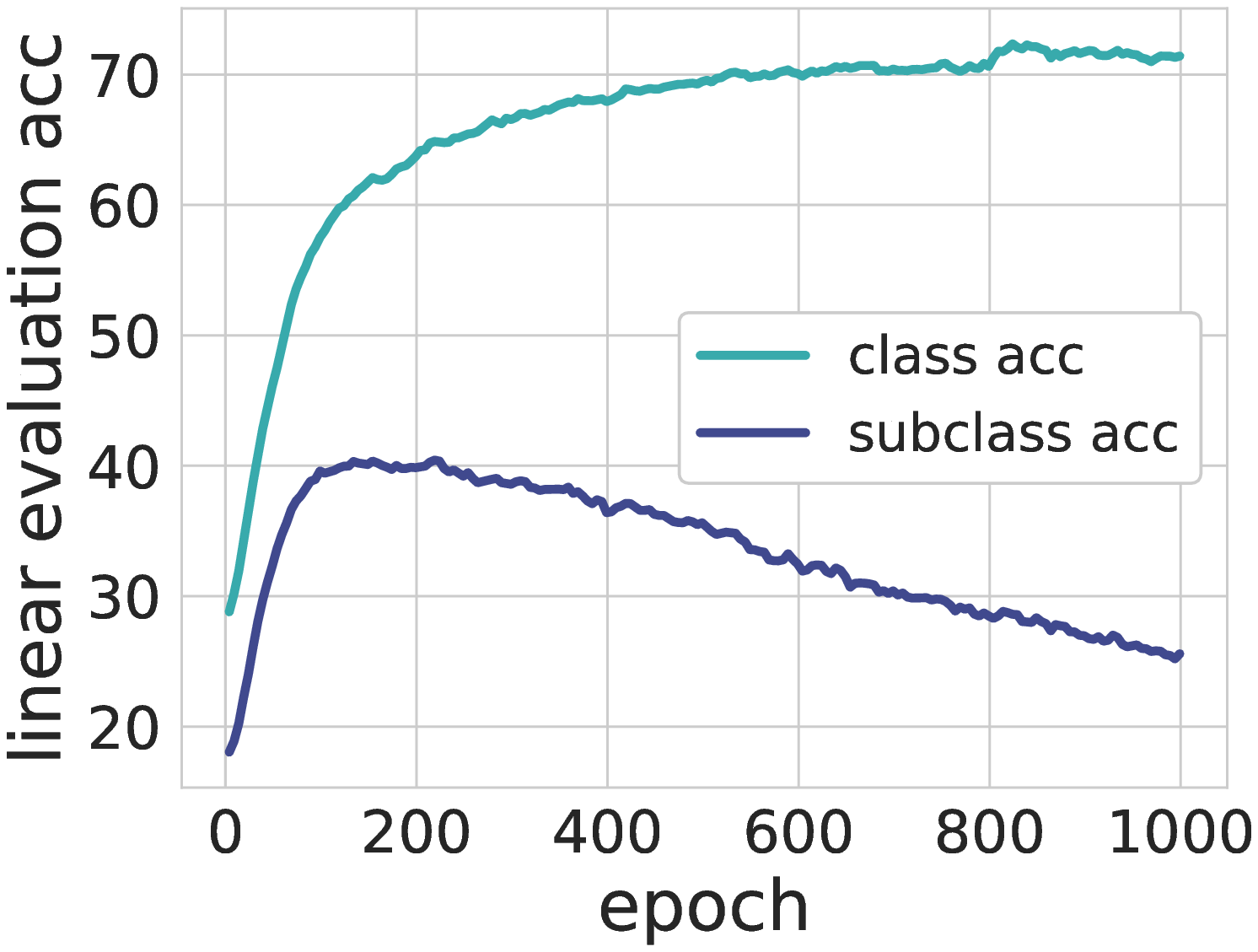}
}
\hspace{.2cm}
\subfigure[Subclasses are initially learned well but later unlearned\label{subfig: some_subclasses}]{
\includegraphics[width=.45\columnwidth]{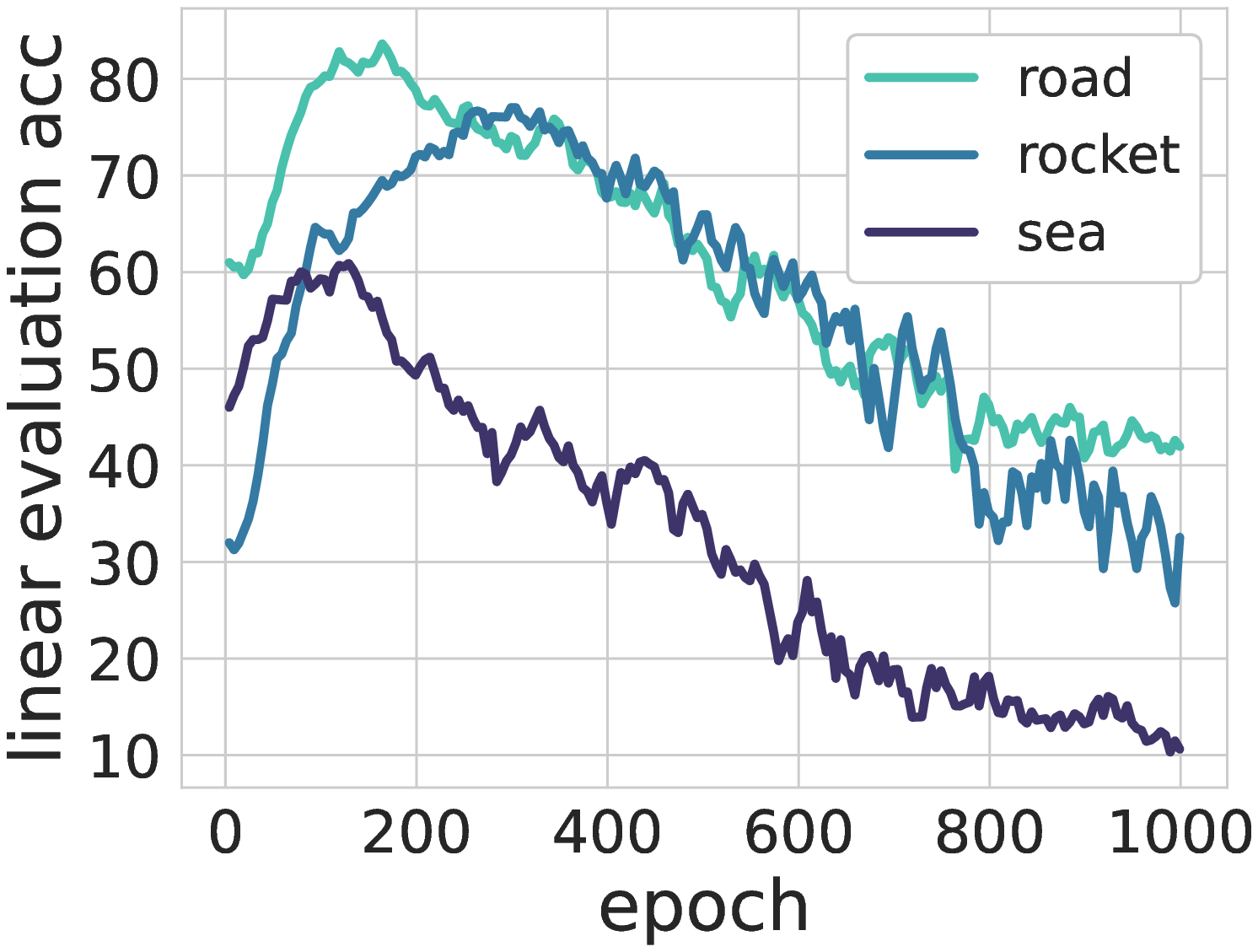}
}
\vspace{-.2cm}
    \caption{(a) Average subclass accuracy and class accuracy. (b) Accuracy in subclasses `road', `rocket' and `sea'. In both plots, the subclass accuracy increases and then decreases, which confirms that subclasses are learned early in training before class collapse happens. The class accuracy only increases during training. }
    \label{fig: exp_early_in_training}
    \vspace{-3mm}
\end{figure}

More importantly, the same phenomenon can be observed in \textit{neural networks}. We use SGD to train a ResNet18 \cite{he2016deep} on CIFAR-100 \cite{krizhevsky2009learning} with supervised CL loss \cite{khosla2020supervised} with 20 class (superclass) labels, and perform linear evaluation on embeddings of test data with 100 subclass (class) labels (see details in Appendix \ref{apdx:exp}). We observe that the subclass accuracy increases during the first 200 epochs before it starts to drop (Figure \ref{subfig: avg_sub}). Some subclasses can even achieve a  high accuracy around $80\%$ (Figure \ref{subfig: some_subclasses}). 
This is surprising as it confirms that models trained with commonly used loss functions \textit{do} learn subclass features early in training. 



\paragraph{Empirical Evidence Showing that Class Collapse Eventually Happens in (S)GD.} 

We simulate our theoretical analysis using numerical experiments to show that gradient descent converges to a minimizer that exhibits class collapse, despite learning subclasses early in training. We visualize the embeddings of test data at different epochs in Figure \ref{fig: cc_embedding}, and plot the alignment between weights and class/subclass features in Figure \ref{fig: cc_alignment}. Subclasses are perfectly separated and the weights align with both $\vct{v}_1$ and $\vct{v}_2$ after around 100 epochs of training. The model then starts unlearning $\vct{v}_2$ which causes the alignment to drop, thus subclasses are merged in the embedding space.  We also confirm that same conclusion holds for neural networks in realistic settings. In Figure \ref{fig: exp_early_in_training}, we see that the subclass accuracy drops after around 200 epochs of training and eventually reaches a low value. In contrast, the class accuracy does not drop during training.\looseness=-1

\paragraph{Minimum Norm Minimizer Exhibits Class Collapse.}
Note that in Theorem \ref{thm: early_in_training}, assuming $\mu\neq 0$ leads us to discovering that subclasses are learned early in training. Here, we extend Theorem \ref{thm: minimizer_with_cc} to this setting under asymptotic class collapse. \looseness=-1

\begin{definition}[Asymptotic Class Collapse]\label{def: asymp_cc} We say asymptotic class collapse happens when $\|\mtx{W}\vct{v}_2\|=O(\frac{\sigma_\xi}{\sqrt{mn}}) = o(1)$.\looseness=-1
\end{definition}

This definition implies that: (1) representations of subclasses are not well separated, hence it is nearly impossible to distinguish between them, and (2) the distinguishability of subclasses is at odds with generalization, which improves as number of augmented views per example $m$ and size of training data $n$ increase. Thus, while this definition is a relaxation of Definition \ref{def: exact_cc}, practically, this results in equally severe class collapse. \looseness=-1 
\begin{theorem}[Extension of Theorem \ref{thm: minimizer_with_cc} for $\mu_2 \neq 0$]\label{thm: min_norm_asymp}
Let $\mtx{\Theta}^{**}\!=\![\mtx{W}^{**}~ \vct{b}^{**}]$ be the minimum norm minimizer of $\mathcal{L}_\supcon$: 
$$
\mtx{\Theta}^{**} = \arg\min_{\mtx{\Theta^*}} \| \mtx{\Theta}^* \|_F ~s.t.~ \mtx{\Theta}^*\in\arg\min_{\mtx{\Theta}} \mathcal{L}_\supcon(\mtx{\Theta}).
$$
Then with probability at least $1-O(\frac{m^2n^2}{d})=1-o(1)$, asymptotic class collapse happens, i.e.,  $$\|\mtx{W}^{**} \vct{v}_2 \|=O(\frac{\sigma_\xi}{\sqrt{mn}})=o(1).$$
\end{theorem}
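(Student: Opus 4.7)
The plan is to extend the proof of Theorem \ref{thm: minimizer_with_cc} to the regime $\mu_2 \neq 0$ by tracking where the $\mu_2 = 0$ assumption was used and quantifying the residual that $\mu_2 \neq 0$ introduces. I would first reuse the characterization of $\min\mathcal{L}_\supcon$ established for Theorems \ref{thm: minimizer_without_cc}--\ref{thm: minimizer_with_cc}: by Proposition \ref{cc:train_emb} every minimizer makes same-class embeddings coincide, and by minimizing the spectral repulsion term the two class-center embeddings must satisfy a determined orthogonality/norm condition. Together these give an affine linear system $\mathbf{A}\,\mathrm{vec}(\mtx{\Theta}) = \mathbf{c}(\mu_2)$, and the minimum-norm minimizer is $\mathrm{vec}(\mtx{\Theta}^{**}) = \mathbf{A}^{+}\mathbf{c}(\mu_2)$, where only the right-hand side depends on $\mu_2$, through the class-mean offset in the $\vct{v}_2$ direction.

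Second, I would decompose $\vct{v}_2 = \vct{v}_2^\parallel + \vct{v}_2^\perp$ into components in the row-span and the kernel of the effective data matrix, and use that the minimum-norm property forces $\mtx{W}^{**}\vct{v}_2^\perp = 0$, so $\mtx{W}^{**}\vct{v}_2 = \mtx{W}^{**}\vct{v}_2^\parallel$. By Lemma \ref{lemma: SVD_perturb}, high-dimensional noise rotates the kernel so that $\|\vct{v}_2^\parallel\| = \Theta(\sigma_\xi/\sqrt{mn})$; this is exactly the quantity that produced the $\Theta(\sigma_\xi/\sqrt{mn})$ kernel-alignment used in the proof of Theorem \ref{thm: minimizer_without_cc}, and it serves here as the universal cap on how much $\vct{v}_2$ can possibly be expressed through the min-norm $\mtx{W}^{**}$.

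Third, I would show that the bias $\vct{b}^{**}$ absorbs the class-common $\mu_2\vct{v}_2$ offset. By Assumption \ref{assump: balanced} each class's empirical mean on $\vct{v}_2$ is exactly $\mu_2$ (on the clean part), so the $\mu_2$-dependent piece of $\mathbf{c}$ is a class-common shift that the constant-column bias block of $\mathbf{A}$ absorbs at zero additional weight-norm cost; only the noise-induced per-class fluctuations along $\vct{v}_2^\parallel$, of the same $\sigma_\xi/\sqrt{mn}$ order, propagate into $\mtx{W}^{**}\vct{v}_2$. Combining the two bounds gives $\|\mtx{W}^{**}\vct{v}_2\| = O(\sigma_\xi/\sqrt{mn}) = o(1)$ by Assumption \ref{assump: large_sample_size}, which is exactly asymptotic class collapse in the sense of Definition \ref{def: asymp_cc}.

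The main obstacle is verifying rigorously that the Moore--Penrose routing indeed sends the $\mu_2\vct{v}_2$ shift to $\vct{b}^{**}$ rather than injecting an $O(\mu_2)$ piece into $\mtx{W}^{**}\vct{v}_2$. Concretely, this requires writing the KKT/pseudo-inverse equations for the joint parameter $[\mtx{W}~\vct{b}]$ and showing that the bias block, being an all-ones column orthogonal to every within-class difference row, offers strictly smaller-norm absorption of any class-common shift than any weight direction does. Once this is established, the remainder essentially repeats the $\mu_2 = 0$ argument with a single added error term that is again controlled by Lemma \ref{lemma: SVD_perturb}.
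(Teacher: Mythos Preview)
Your proposal has a genuine gap in Step 2 that breaks the argument. You claim that Lemma \ref{lemma: SVD_perturb} gives $\|\vct{v}_2^\parallel\| = \Theta(\sigma_\xi/\sqrt{mn})$, where $\vct{v}_2^\parallel$ is the projection of $\vct{v}_2$ onto the row span of the data (equivalently $\colsp(\mtx{M})$). This is backwards: Corollary \ref{corollary: v2_proj_ker} says it is the projection onto $\ker(\mtx{M})$ that has magnitude $\Theta(\sigma_\xi/\sqrt{mn})$, so $\|\vct{v}_2^\perp\|=\Theta(\sigma_\xi/\sqrt{mn})$ and hence $\|\vct{v}_2^\parallel\|=1-o(1)$. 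Consequently, knowing that the minimum-norm property kills the kernel component tells you only that $\mtx{W}^{**}\vct{v}_2=\mtx{W}^{**}\vct{v}_2^\parallel$ with $\|\vct{v}_2^\parallel\|\approx 1$, which is no cap at all. The quantity you must actually bound is $\|\mtx{W}^{**}\vct{v}_2^\parallel\|$, and nothing in your plan controls it.

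Step 1 is also shaky: the minimizer set of $\mathcal{L}_\supcon$ is not an affine subspace in $\mtx{\Theta}$, since the between-class orthogonality/norm condition on the two center embeddings is quadratic. One can linearize by first fixing the target embeddings $\vct{c}_{\pm 1}$ (up to rotation) and then solving a linear least-norm system, but you would then still need the missing ingredient: why the pseudo-inverse solution has small action on $\vct{v}_2^\parallel$. Your Step 3 intuition that ``the bias absorbs $\mu_2\vct{v}_2$'' points in the right direction but does not supply this; the bias shifts outputs, it does not cancel an input direction, and the actual cancellation is algebraic rather than a generic absorption argument.

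The paper's route avoids all of this. It uses Lemma \ref{lemma: minimizers} to get the closed form $\mtx{W}^{**\top}\mtx{W}^{**}=\mtx{M}^\dag\mtx{M}^+\mtx{M}^\dag$, so that $\|\mtx{W}^{**}\vct{v}_2\|^2=\vct{v}_2^\top\mtx{M}^\dag\mtx{M}^+\mtx{M}^\dag\vct{v}_2$, and then computes this quadratic form directly (Corollary \ref{corollary: align_subclass}). The crux for $\mu\neq 0$ is that $\bar{\vct{x}}_y^\top\mtx{M}^\dag\vct{v}_2$ has its leading-order terms cancel exactly via the identity
\[
[\,1~~\mu\,]\begin{bmatrix}1 & \mu\\ \mu & \mu^2+\phi_2^2\end{bmatrix}^{-1}\begin{bmatrix}0\\1\end{bmatrix}=0,
\]
leaving only an $O(\sigma_\xi/\sqrt{mn})$ residual. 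This $2\times 2$ cancellation in the $(\vct{v}_0,\vct{v}_2)$ block is the rigorous version of your ``bias absorbs the shift'' heuristic, and it is the step your proposal is missing.
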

\vspace{-4mm}


\subsection{Simplicity Bias of (S)GD}

We reiterate our main findings:
\vspace{-2mm}
\begin{enumerate}
    \item Minimizing the supervised contrastive loss \textit{does not} necessarily lead to class collapse.
    \vspace{-2mm}
    \item However, \textit{simpler} minimizers of the supervised contrastive loss (e.g. \textit{minimum norm}) do suffer from class collapse.\looseness=-1
    \vspace{-2mm}
    \item Optimizing with (S)GD does learn the subclass features early in training, but eventually unlearns them, resulting in class collapse.
\end{enumerate}
\vspace{-2mm}
 These coupled with the fact that (S)GD is known to have a bias towards simpler solutions \cite{kalimeris2019sgd} prompt us to conjecture:
\begin{center}
\emph{The simplicity bias of (S)GD leads it to unlearn subclass features, thus causing class collapse}.
\end{center}


The simplicity bias of (S)GD has not been rigorously studied for CL, and our results indicate the surprising role it may play in class collapse. Note that, the supervised contrastive loss is different than common supervised objectives, where the role of such bias of (S)GD is understood better \cite{gunasekar2018implicit,soudry2018implicit,ji2019implicit,wu2019towards,lyu2021gradient}. Rather, the supervised CL objective can 
be reformulated as a matrix factorization objective (Eq. \ref{eq:matrix_fac}), where the debate on the bias of (S)GD (e.g., minimum norm \cite{gunasekar2017implicit} or rank \cite{arora2019implicit,razin2020implicit}) is still ongoing. \looseness=-1






\section{Understanding Feature Suppression in Unsupervised CL}


Empirically, feature suppression can be observed due to a variety of reasons \cite{addressing_feature_suppression_2020,chen2021intriguing,shortcut2021_robinson}. Easy features for unsupervised CL are those that allow the model to discriminate between examples (highly discriminative). Here, we consider different ways irrelevant features can be easy (highly discriminative) and characterize how this can lead to feature suppression. We show that the types of feature suppression we consider can be largely attributed to insufficient embedding dimensionality and/or poor data augmentations. Surprisingly, we find again that the minimum norm simplicity bias is critical in explaining this phenomenon.

\looseness=-1
 

\subsection{Feature Suppression due to Easy Irrelevant Features and Limited Embedding Space}

In Theorem \ref{thm: FS_1}, we show that easy (discriminative) irrelevant features can suppress the class feature when the embedding dimensionality is limited. 
For clarity, we let $\mu_2=0$. 
\looseness=-1




\begin{theorem}[Feature Suppression 1]\label{thm: FS_1}
Assume $p\leq K$. Let $L$ be the $(K+1)$-element tuple $\big[1, \phi_1^2,
\phi_2^2, \frac{\phi_3^2}{K-2}, \dots, \frac{\phi_K^2}{K-2} \big]$ whose last $K$ elements are the variances of features. If $\phi_1^2$ is not among the $p$ largest elements in $L$, then with probability at least $1-O(\frac{m^2n^2}{d})=1-o(1)$: (1) there exists a global minimizer $\mtx{\Theta}^*$ of $\mathcal{L}_\unsup$ such that $\| \vct{W}^* \vct{v}_1\| = \Omega(1)$, (2) However, the minimum norm minimizer $\mtx{\Theta}^{**}$ satisfies $\| \vct{W}^{**} \vct{v}_1\| = 0$.
\end{theorem}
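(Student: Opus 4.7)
The plan is to reformulate $\mathcal{L}_\unsup$ as a low-rank PSD matrix-approximation problem, enumerate its global minimizers, and then pin down the one selected by the minimum-norm constraint. Let $\mtx Z = \mtx W \mtx X + \vct b\mathbf{1}^\top \in \mathbb{R}^{p \times N}$ collect the $N = mn$ augmented-example embeddings and let $\mtx M \in \mathbb{R}^{N \times N}$ be the positive-pair indicator with $M_{jk} = \mathbf{1}\{i(j) = i(k)\}$ (block-diagonal with $m\times m$ all-ones blocks). A direct expansion of Eq.~(\ref{eq: loss:ucl}) gives
\begin{equation*}
\mathcal{L}_\unsup(\mtx\Theta) = \tfrac{1}{N^2}\bigl(\|\mtx Z^\top \mtx Z - m\mtx M\|_F^2 - m^2\|\mtx M\|_F^2\bigr),
\end{equation*}
so the global minimizers are exactly those $(\mtx W, \vct b)$ for which $\mtx Z^\top \mtx Z = m^2 \mtx P_V$ for some $p$-dimensional subspace $V$ of the top eigenspace $\mathcal{E} = \linspan\{\mathbf{1}_{A_i}/\sqrt m\}_{i=1}^n$ of $\mtx M$.

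Next I would set up the min-norm analysis. A union bound in the regime $d = \omega(n^2m^2)$ shows that with probability $\geq 1 - O(m^2n^2/d)$ all $N$ noise vectors are mutually orthogonal and orthogonal to every feature direction $\vct v_1, \ldots, \vct v_K$, and $\tilde{\mtx X} = [\mtx X; \mathbf{1}^\top]$ has full column rank $N$. Restricting to this event, every target $\mtx Z$ is realizable by some $(\mtx W, \vct b)$ and the min-norm one is $[\mtx W, \vct b] = \mtx Z \tilde{\mtx X}^+$ with squared norm $m^2\Tr(\tilde{\mtx C}^{-1}\mtx P_V)$ where $\tilde{\mtx C} = \tilde{\mtx X}^\top\tilde{\mtx X}$. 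On the same event, $\tilde{\mtx C}$ differs from $\sigma_\xi^2\mtx I_N$ by a signal-plus-bias term supported on $\mathcal{E}$, so $\mathcal{E}$ is an invariant subspace of $\tilde{\mtx C}$; minimizing the norm over $p$-dimensional $V \subset \mathcal{E}$ therefore amounts to picking the top $p$ eigenvectors of $\tilde{\mtx C}|_\mathcal{E}$.

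The crux is identifying these eigenvectors. In the basis $\{\mathbf{1}_{A_i}/\sqrt m\}_{i=1}^n$ the restriction equals $m(\mtx U^\top \mtx U + \mathbf{1}\mathbf{1}^\top) + \sigma_\xi^2 \mtx I_n$ with $\mtx U = [\vct u_1 \cdots \vct u_n]$. Using the balanced-dataset assumption, $\mtx U^\top \mtx U = \phi_1^2 \vct y \vct y^\top + \phi_2^2 \vct y_\sub \vct y_\sub^\top + \sum_{k=3}^K \phi_k^2 \vct w_k \vct w_k^\top$, and the label/indicator vectors $\mathbf{1}, \vct y, \vct y_\sub, \vct w_3, \ldots, \vct w_K \in \mathbb{R}^n$ are pairwise orthogonal with squared norms $n, n, n, n/(K-2), \ldots, n/(K-2)$. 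Hence the eigenvalues of $\tilde{\mtx C}|_\mathcal{E}$ are $mn$ times the entries of $L$ (shifted by $\sigma_\xi^2$), with respective eigenvectors $\mathbf{1}/\sqrt n, \vct y/\sqrt n, \vct y_\sub/\sqrt n, \vct w_k/\|\vct w_k\|$. The lift of $\vct y/\sqrt n$ to $\mathcal{E} \subset \mathbb{R}^N$ is $\tilde{\vct y}/\sqrt N$ where $\tilde{\vct y} = \sum_i y_i \mathbf{1}_{A_i}$.

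Both parts then follow from the key identity $\mtx W\vct v_1 = \phi_1 \mtx Z \tilde{\mtx C}^{-1}\tilde{\vct y}$, which comes from $\tilde{\mtx X}^\top[\vct v_1; 0] = \mtx X^\top\vct v_1 = \phi_1\tilde{\vct y}$ (exact on the concentration event since $\langle\vct\xi_j, \vct v_1\rangle = 0$) combined with $\tilde{\mtx C}\tilde{\vct y} = (mn\phi_1^2+\sigma_\xi^2)\tilde{\vct y}$. For part (1), picking $V$ to contain the eigenvector $\tilde{\vct y}/\sqrt N$ yields a global minimizer for which $\|\mtx Z\tilde{\vct y}\| = m\sqrt N$, so $\|\mtx W^*\vct v_1\| = \Omega(1)$. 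For part (2), the hypothesis that $\phi_1^2$ is not among the top $p$ entries of $L$ forces the min-norm $V$ (the top $p$ eigenvectors of $\tilde{\mtx C}|_\mathcal{E}$) to exclude $\tilde{\vct y}/\sqrt N$, and by orthogonality of the eigenbasis of the symmetric operator $\tilde{\mtx C}|_\mathcal{E}$ to be orthogonal to it; hence $\mtx Z\tilde{\vct y}=0$ and $\|\mtx W^{**}\vct v_1\|=0$. The main obstacle will be making the concentration event clean enough that $\mathcal{E}$ is \emph{exactly} $\tilde{\mtx C}$-invariant and the eigendecomposition of $\tilde{\mtx C}|_\mathcal{E}$ identifies cleanly with $L$ without noise-induced perturbation; this is the source of the $1-O(m^2n^2/d)$ failure probability and also explains why the bias direction $\mathbf{1}$ appears in $L$ on equal footing with the features (producing the ``$1$'' at the head of $L$).
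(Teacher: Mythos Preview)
Your approach is correct and arrives at the same conclusion, but it is genuinely dual to the paper's route. The paper works on the $(d{+}1)\times(d{+}1)$ covariance side: it rewrites $\mathcal{L}_\unsup$ as $-2\Tr(\tilde{\mtx M}\mtx W^\top\mtx W)+\Tr(\mtx M\mtx W^\top\mtx W\mtx M\mtx W^\top\mtx W)$, invokes its general minimizer lemma (Lemma~\ref{lemma: minimizers}) to get $\mtx W^\top\mtx W\mtx M=[\mtx M^\dag\tilde{\mtx M}]_p$, and then needs the structural Lemma~\ref{lemma: Mtilde_Mdag} (that the first $K$ eigenpairs of $\mtx M$ and $\tilde{\mtx M}$ coincide, so $\mtx M^\dag\tilde{\mtx M}=\tilde{\mtx M}^\dag\tilde{\mtx M}$) together with the SVD-perturbation Lemma~\ref{lemma: SVD_perturb} to read off the eigenvalues $\phi_1^2+\tfrac{\sigma_\xi^2}{mn}$, etc. You instead work on the $N\times N$ Gram side: you cast the loss as an Eckart--Young problem $\|\mtx Z^\top\mtx Z - c\,\mtx M_{\text{pair}}\|_F^2$, parametrize global minimizers by $p$-dimensional subspaces $V\subset\mathcal E$, express the min-norm objective as $\Tr(\tilde{\mtx C}^{-1}\mtx P_V)$, and diagonalize $\tilde{\mtx C}|_{\mathcal E}$ directly using the balanced-dataset orthogonality of $\mathbf 1,\vct y,\vct y_\sub,\vct w_k$. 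This sidesteps Lemmas~\ref{lemma: SVD_perturb} and~\ref{lemma: Mtilde_Mdag} entirely and makes the appearance of the tuple $L$ (including the leading ``$1$'' from the bias direction) more transparent. The price is that your argument is tailored to $\mathcal L_\unsup$, whereas the paper's Lemma~\ref{lemma: minimizers} is reused verbatim for $\mathcal L_\supcon$ and $\mathcal L_\intrpl$.

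Two small points to clean up. First, completing the square gives target $n\mtx M_{\text{pair}}$ with top eigenvalue $nm$, not $m\mtx M_{\text{pair}}$ with top eigenvalue $m^2$; this constant propagates through $\mtx Z^\top\mtx Z=nm\,\mtx P_V$ and $\|\mtx Z\tilde{\vct y}\|=nm$, but cancels in the final $\|\mtx W\vct v_1\|=\Theta(1)$. Second, for part~(1) you should note explicitly that on the concentration event $\tilde{\mtx X}$ has full column rank, so \emph{every} target $\mtx Z$ with $\mtx Z^\top\mtx Z=nm\,\mtx P_V$ is realizable; otherwise the Eckart--Young lower bound might not be attained and the ``global minimizers are exactly'' claim would be incomplete. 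You already flag this, but it is the only place where the $1-O(m^2n^2/d)$ event is actually used in your argument, so it deserves to be stated as a lemma rather than an aside.
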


We prove the theorem in Appendix \ref{apdx: FS}. The elements except the first one in tuple $L$ can be interpreted as the variance of examples at each coordinate $\vct{v}_k, k=1,2,\dots, K$, which indicates how much the examples are discriminated by each feature.
The theorem shows that when the embedding space is not large enough to represent all the $K$ features (which requires $K+1$ dimensions), the minimum norm minimizer only picks the most discriminative ones. In practice, the embedding space in unsupervised CL is relatively low-dimensional (compared to input dimensionality) and thus the model cannot fit all the information about inputs into the embedding space. As is suggested by Theorem \ref{thm: FS_1}, if the training algorithm prefers functions with certain simple structures, only the easiest (most discriminative) features that can be mapped into the embedding space by less complex functions (e.g., smaller norm) are learned. The class features are suppressed if they are not amongst the easiest ones. \looseness=-1

\begin{remark}
 Following the same analysis we can also show that when $\phi_1$ is among the $p$ largest elements in $L$, i.e., the class feature is among the easiest (most discriminative) ones, the class feature $\vct{v}_1$ is learned by the minimum norm minimizer; when $\phi_1$ is exactly on par with some other element as the $p$-th largest, there exist both minimum norm minimizers that learn and do not learn the class feature $\vct{v}_1$.
\end{remark}

\begin{figure}[!t]
    \centering
 \subfigure[$K=3, p=3$]{
\includegraphics[width=0.4\columnwidth]{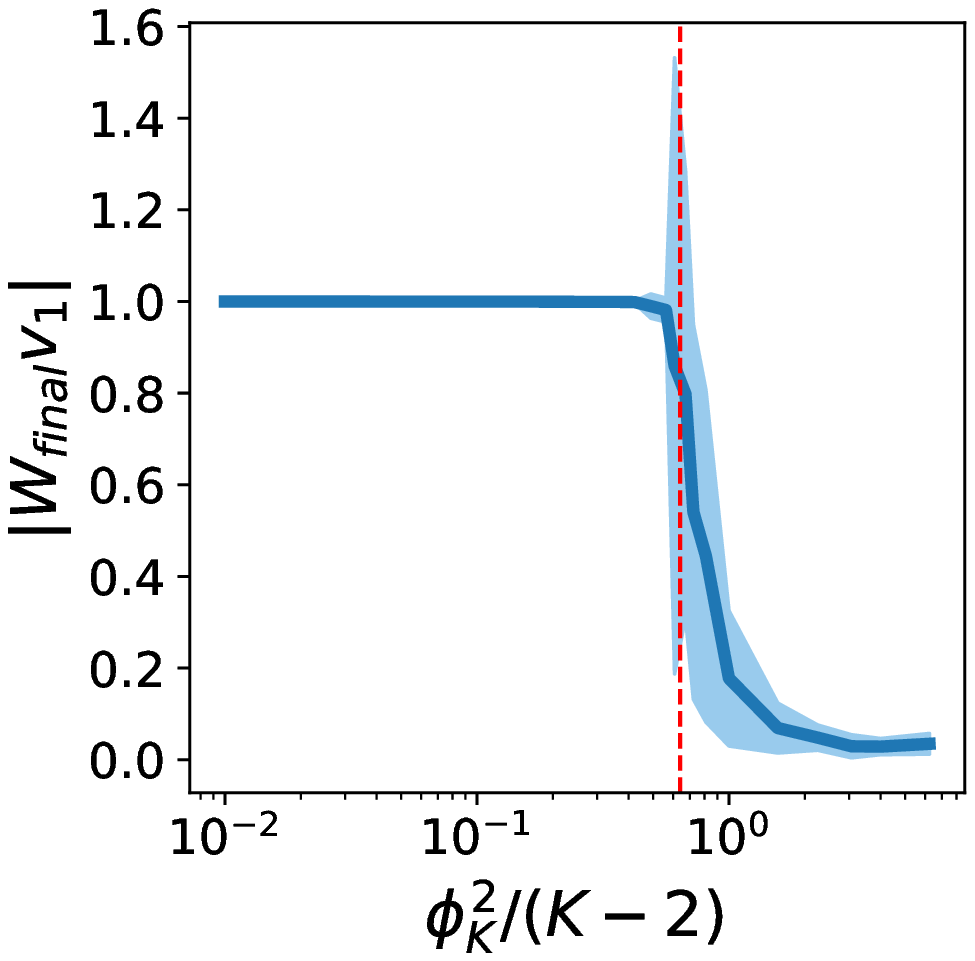}
}
 \subfigure[$K=50, p=50$]{
\includegraphics[width=0.4\columnwidth]{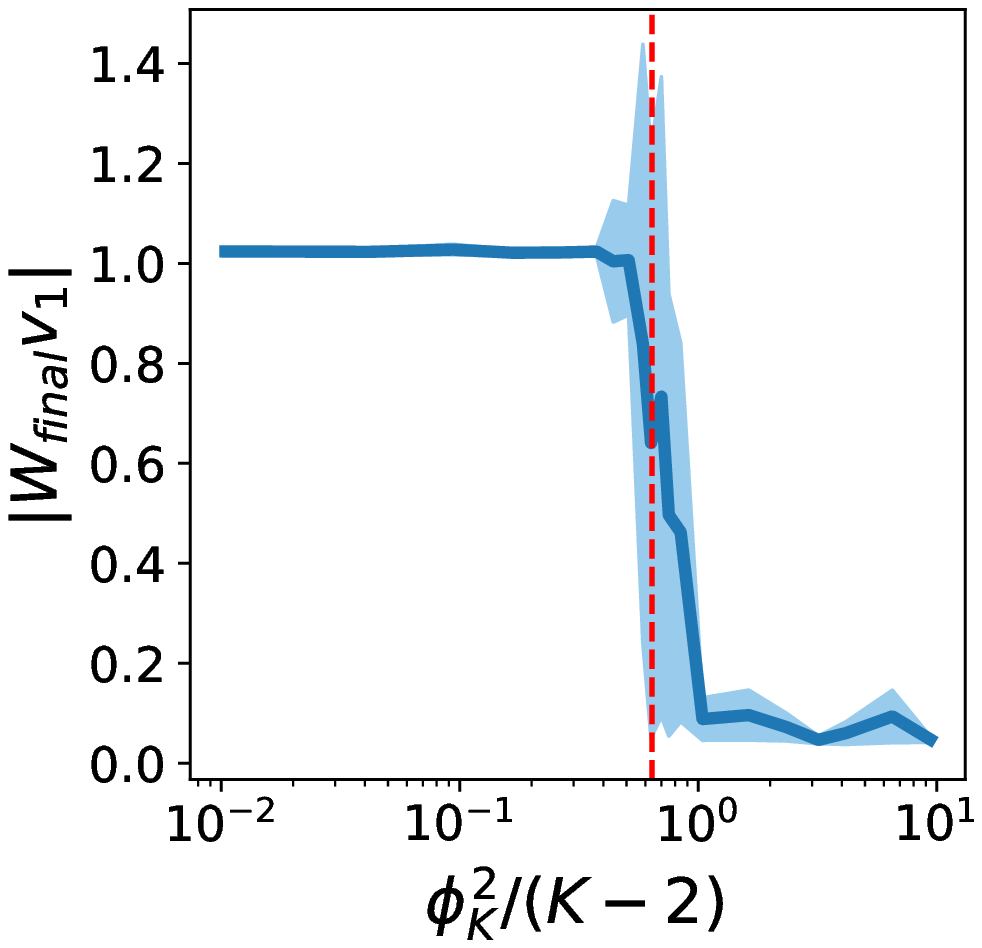}
}
\vspace{-.2cm}
    \caption{{\bf The irrelevant feature suppresses the class feature when its variance is beyond the variance of the class feature (the red vertical line).} We let $d=2000, p=K$, $\phi_1=0.8, \phi_2=1, \mu=0$, $\frac{\phi_k^2}{K-2}>\phi_1, \forall k\in [K-1]$ and vary $\phi_K$. Thus whether $\phi_1^2$ is among the $p$ largest variances only depends on $\phi_K$. We train the linear model to convergence. Plots show that the alignment between the trained weights and $\vct{v}_1$ drops when $\phi_K$ increases. We report the average of 10 runs. The result diverges at $\frac{\phi_K^2}{K-2}= \phi_1^2$ indicating that the model can learn either $\vct{v}_1$ or $\vct{v}_K$ in this case.  \looseness=-1 }
    \label{fig: fs_alignment}
    \vspace{-.2cm}
\end{figure}

\begin{figure}[!t]
    \centering
    \includegraphics[width=0.5\columnwidth]{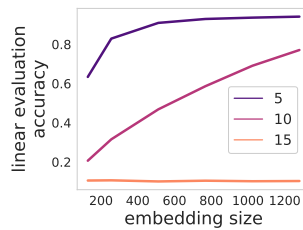}
    \label{fig: fs_embedding_space}
    \vspace{-.2cm}
    \caption{Effect of embedding size on feature suppression in MNIST RandBit\cite{chen2021intriguing}. Legends show the number of bits in the extra channel which indicates how easy (discriminative) the irrelevant features are. We observe that (1) increasing the easiness of irrelevant features exacerbates feature suppression; (2) increasing the embedding size alleviates feature suppression. \looseness=-1 }
    \vspace{-.4cm}
\end{figure}

\paragraph{Numerical Experiments with  GD.} 
Our theory for the minimum norm minimizer matches the experimental results for models trained with GD. We let $p=K$ 
and 
let $1\geq \phi_2^2\geq \frac{\phi_3^2}{K-2}\geq\dots\geq\frac{\phi_{K-1}^2}{K-2} > \phi_1^2$
so that $\phi_1^2$  must be among the smallest two variances i.e. $\vct{v}_1$ is among the two most difficult features. Then we vary $\phi_K$ and see how the trained weights align with $\vct{v}_1$. Consistent with Theorem 1, Figure \ref{fig: fs_alignment} shows that $\vct{v}_1$ is suppressed when $\frac{\phi_K^2}{K-2}> \phi_1^2$. Interestingly, we also see that the result at $\frac{\phi_K^2}{K-2}=\phi_1^2$ diverges, indicating that GD can find both minimizers that learn and do not learn $\vct{v}_1$ when the variances at $\vct{v}_1$ and $\vct{v}_K$ are the same.

\begin{table}[!t]
    \caption{Effect of embedding size on feature suppression in CIFAR-10/100 RandBit. `Acc' refers to class accuracy and `Sub Acc' refers to subclass accuracy. We see that increasing embedding size alleviates feature suppression, improving class/subclass accuracy. \looseness=-1}
    \label{tab:cifar_10_increasing_width}
    \vspace{1.5mm}
    \centering
    \begin{tabular}{|c |c c |c c|}
    \hline
       \multirow{2}{*}{$w$} & \multicolumn{2}{|c|}{CIFAR-10 RandBit} & \multicolumn{2}{|c|}{CIFAR-100 RandBit} \\
     & Sub Acc & Acc & Sub Acc & Acc \\
    \hline
        4 & 34.38 & 86.73 & 11.67 & 23.53 \\
        64 & 71.96 & 96.82 & 34.11 & 52.32 \\
        128 & 76.69 & 97.65 & 38.51 & 57.40 \\
    \hline
    \end{tabular}
    \vspace{-4mm}
\end{table}

\textbf{Empirically Verifying Benefits of Larger Embedding Size.}
Theorem \ref{thm: FS_1} also provides one practical solution for feature suppression due to limited embedding size:
increasing the embedding size so that every feature can be learned by the model. To provide empirical evidence for this, we conduct two sets of experiments: 
    
    First, we train 5-layer convolutional networks on the RandomBit dataset with the same setup as in \cite{chen2021intriguing}, but we vary the embedding size (see details in Appendix \ref{apdx:exp}). Varying the \# bits in the extra channel intuitively controls how discriminative the irrelevant feature are, i.e., how easy-to-learn it is for CL. In this setting, the random bit can suppress the MNIST digits. We make two observations in Figure \ref{fig: fs_embedding_space}: (1) with a fixed embedding size, increasing easiness (number of random bits) of the irrelevant features exacerbates feature suppression; (2) with a fixed easiness of irrelevant features, increasing the embedding size alleviates feature suppression. \looseness=-1
    
    Second, we train ResNet18 \cite{he2016deep} on the CIFAR-10/100 RandBit Dataset, constructed similarly to the MNIST RandBit dataset but with images from CIFAR-10/100 \cite{krizhevsky2009learning} (see Appendix \ref{apdx:datasets}). For CIFAR-10, we use 2 random bits, and for CIFAR-100, we use one random bit as the class irrelevant features. Table \ref{tab:cifar_10_increasing_width} presents the test performance for different values of the model width $w$, where a larger $w$ indicates a larger embedding size (see Appendix \ref{apdx:additional_embedding_size} for details). On both datasets, increasing the embedding size alleviates feature suppression, leading to improvements in both class and subclass accuracies. We also provide additional experiments and discussion in Appendix \ref{apdx:additional_embedding_size}. \looseness=-1
Both experimental results confirm the conclusion drawn from the theoretical analysis.\looseness=-1



\subsection{Feature Suppression due to High-dimensional Irrelevant Features and Imperfect Augmentation}\label{sec: fs_2}
Empirically, another form of feature suppression has been observed that cannot be remedied by larger embedding dimensionality \cite{addressing_feature_suppression_2020}. 
We characterize this form of feature suppression by defining easy irrelevant features as being: (1) drawn from a high dimensional space so that the collection of irrelevant features is large and discriminating based on irrelevant features is easier, (2) less altered by data augmentation compared to the class feature. 

For (1), formally we assume $K=\omega(n^2)$, as opposed to assumption \ref{assump: balanced} which implies that $K$ is smaller than $n$. A consequence of this assumption is that with high probability the $n$ original examples each have a unique irrelevant feature. For (2) we consider the following imperfect data augmentation:
\begin{definition}[Imperfect data augmentation $\badaug{\cdot}$]
For a given example $\vct{x}=\vct{\mu}+\vct{\xi}\in\hat{\mathcal{D}}_\orig$, \looseness=-1
\begin{align}
    \nonumber
    \badaug{\vct{x}} =& \vct{u} + \zeta'\vct{v}_1 + \zeta''\vct{v}_2+ \vct{\xi}',
\end{align}
where $\zeta'\sim\mathcal{N}(0, \sigma_\zeta'^2)$, $\zeta''\sim\mathcal{N}(0, \sigma_\zeta''^2)$, $\sigma_\zeta'^2,\sigma_\zeta''^2 \neq 0$ and $\vct{\xi}'$ is a new random variable drawn from $\mathcal{N}(\vct{\xi}, \mtx{\Sigma}_{\vct{\xi}})$ with $\text{rank}{(\mtx{\Sigma}_{\vct{\xi}})} \leq  \frac{m}{2}$.
\end{definition}
In the definition, the data augmentation adds small perturbations ($\zeta'$ and $\zeta''$) to class and subclass features, weakly alters the noise, but preserves the irrelevant features. For example, on Colorful-Moving-MNIST \cite{tian2020makes} 
constructed by assigning each MNIST digit a background object image selected randomly from STL-10, the colorful background objects are high-dimensional and the colors are invariant to data augmentations without color distortion. \looseness=-1




\begin{theorem}[Feature Suppression 2]\label{thm: FS_2}
If $K = \omega(n^2)$ and augmentation is $\badaug{\cdot}$, with probability $\geq 1-o(\frac{n^2m^2}{d} + \frac{1}{n}) = 1-o(1)$, the minimum norm minimizer $\mtx{\Theta}^*=[\mtx{W}^*, \vct{b}^*]$ satisfies $\|\mtx{W}^* \vct{v}_1\| = 0$.
\end{theorem}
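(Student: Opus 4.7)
The plan is to exploit the combinatorial structure created by $K=\omega(n^2)$: with high probability, each of the $n$ original examples picks a \emph{distinct} irrelevant-feature index $k_i$, giving the model $n$ orthonormal directions $\vct{v}_{k_1},\dots,\vct{v}_{k_n}$ that perfectly identify the originals and, crucially, are preserved \emph{exactly} by $\badaug{\cdot}$. The class direction $\vct{v}_1$, by contrast, is corrupted in every augmented view by the i.i.d.\ Gaussian perturbation $\zeta'$. Together these facts let us build a global minimiser of $\mathcal{L}_\unsup$ supported entirely on $\linspan\{\vct{v}_{k_1},\dots,\vct{v}_{k_n}\}$, after which the minimum-norm principle removes any residual $\vct{v}_1$ content.

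Concretely, I would proceed in four steps. First, a birthday-style union bound on the uniform draws $k_1,\dots,k_n\in\{3,\dots,K\}$ gives distinct indices with probability $1-O(n^2/K)=1-o(1)$; condition on this event. Second, recycle the matrix-factorisation reformulation of the spectral contrastive loss already used for Theorems~\ref{thm: minimizer_with_cc} and~\ref{thm: min_norm_asymp} (cf.\ Eq.~\ref{eq:matrix_fac}): $\mathcal{L}_\unsup$ depends on $\vct{\Theta}$ only through the Gram map $\vct{\Theta}\mapsto\mtx{F}\mtx{F}^{\top}$ on the augmented dataset, so its global minimisers form an affine fibre over a target positive-pair matrix $\mtx{P}$. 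Third, construct one feasible minimiser $\vct{\Theta}_\circ=[\mtx{W}_\circ,\vct{b}_\circ]$ with $\mtx{W}_\circ=\sum_{i=1}^{n}\vct{c}_i\vct{v}_{k_i}^{\top}$: because each $\vct{v}_{k_i}$-coordinate is constant across the $m$ augmentations of example $i$, orthogonal across distinct examples, and untouched by $\badaug{\cdot}$, the coefficients $\vct{c}_i$ and bias can be chosen so that the resulting Gram matrix matches $\mtx{P}$ up to the vanishing noise contribution controlled in Step~4. This produces a minimiser with $\mtx{W}_\circ\vct{v}_1=0$.

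The final and technically hardest step is to upgrade this from ``some'' minimiser to ``the min-norm'' minimiser. The min-norm point is the orthogonal projection of the origin onto the affine fibre, so it suffices to exhibit, for any candidate minimiser with $\mtx{W}\vct{v}_1\neq 0$, a direction in the kernel of the Gram-map differential along which $\|\vct{\Theta}\|_F^{2}$ strictly decreases. The key observation is that the $\vct{v}_1$-coordinate of $\badaug{\vct{x}_{i,j}}$ equals $y_i\phi_1+\mu_1+\zeta'_{i,j}$: the signal part $y_i\phi_1+\mu_1$ contributes only a rank-$2$ pattern to the Gram map (through the binary label $y_i$ and the constant $1$), and because $y_i$ is a deterministic function of the example index $i$, this rank-$2$ pattern can be perfectly absorbed into linear combinations of the $\vct{c}_i$'s. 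The residual $\zeta'_{i,j}$ part is i.i.d.\ mean-zero and uncorrelated with the positive-pair target, so any non-zero $\mtx{W}\vct{v}_1$ adds pure norm without improving the Gram fit; trading it for a compensating adjustment of the $\vct{c}_i$'s strictly decreases $\|\vct{\Theta}\|_F^{2}$ while leaving the Gram map invariant.

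The main obstacle is handling the three imperfect-augmentation noise sources simultaneously ($\zeta'$ on $\vct{v}_1$, $\zeta''$ on $\vct{v}_2$, and the rank-$\leq m/2$ correlated Gaussian $\vct{\xi}'$ with covariance $\mtx{\Sigma}_{\vct{\xi}}$), and making the absorption argument rigorous in the empirical rather than the population sense. I would control the empirical covariance in the high-dimensional regime $d=\omega(n^2m^2)$ via Lemma~\ref{lemma: SVD_perturb}, combine it with standard Gaussian concentration for $\zeta',\zeta''$, and use the rank bound on $\mtx{\Sigma}_{\vct{\xi}}$ to ensure that the correlated noise subspace does not couple $\vct{v}_1$ to the positive-pair constraint. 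A final union bound with the $O(n^2/K)$ birthday event then yields the claimed probability $1-o(n^2m^2/d+1/n)$.
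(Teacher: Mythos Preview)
Your high-level intuition matches the paper's: the birthday bound on $K=\omega(n^2)$ gives distinct irrelevant-feature directions $\vct{v}_{k_1},\dots,\vct{v}_{k_n}$ that are preserved by $\badaug{\cdot}$ and perfectly identify the originals, while $\vct{v}_1$ is corrupted by $\zeta'$. But the execution diverges substantially from the paper's and contains a real gap.

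The paper does \emph{not} argue variationally. It proves a structural lemma (Lemma~\ref{lemma:feature_suppression2}): whenever the augmented data splits as $\vct{z}_{c(i)}+\vct{a}_i$ with $Z\perp A$ (here the ``classes'' are the $n$ originals, $Z$ carries the augmentation-invariant part including $\vct{v}_{k_i}$, and $A$ carries everything that varies across augmentations, in particular the entire $\vct{v}_1$-coordinate $y_i\phi_1+\zeta'_{i,j}$), and the within-class covariance $\mtx{A}-\mtx{A}^+$ is nondegenerate on $\im\mtx{A}$, then the min-norm solution is exactly $\mtx{W}^{\top}\mtx{W}=\mtx{Z}^{\dagger}$. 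The core of the proof is the algebraic identity $\mtx{A}^{+}=\E[\vct{a}_i\vct{z}_{c(i)}^{\top}]\,\mtx{Z}^{\dagger}\,\E[\vct{z}_{c(i)}\vct{a}_i^{\top}]$, from which $\mtx{M}\mtx{Z}^{\dagger}\mtx{M}=\mtx{M}^{+}$ follows by direct expansion; minimality of norm is then the invertibility of $\mtx{M}$ on $\im\mtx{Z}\oplus\im\mtx{A}$, which is exactly where the $(\mtx{A}-\mtx{A}^{+})$ hypothesis is used. Since $\vct{v}_1\in A\perp Z$, the conclusion $\mtx{Z}^{\dagger}\vct{v}_1=0$ is immediate and \emph{exact}.

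Your Step~4 is where the plan breaks. The global-minimiser set of the spectral loss is \emph{not} an affine fibre in $\vct{\Theta}$: by Lemma~\ref{lemma: minimizers} it is characterised by the quadratic constraint $\mtx{W}^{\top}\mtx{W}\mtx{M}=[\mtx{M}^{\dagger}\tilde{\mtx{M}}]_p$, so ``projection onto an affine fibre'' and ``kernel of the Gram-map differential'' do not describe it. Your absorption argument is also aimed at the wrong object: you propose to replicate the rank-two \emph{signal} pattern $y_i\phi_1+\mu_1$ with the $\vct{c}_i$'s, but what actually needs to be shown is that the $\zeta'$-varying part of the $\vct{v}_1$-coordinate is not required to satisfy the \emph{exact} minimiser equation. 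That is precisely the role of the paper's $(\mtx{A}-\mtx{A}^{+})$ condition, which you never isolate; without it one cannot rule out that the min-norm $\mtx{W}^{\top}\mtx{W}$ picks up a $\vct{v}_1$-component through $\mtx{M}^{\dagger}$. Relatedly, your Step~3 only claims matching ``up to vanishing noise contribution'', which does not establish $\vct{\Theta}_{\circ}$ as a global minimiser at all. If you want to rescue the variational route, work at the level of $\mtx{G}=\mtx{W}^{\top}\mtx{W}$ (where the constraint \emph{is} affine modulo PSD, and Lemma~\ref{lemma: minimizers} already gives $\mtx{G}^{**}=\mtx{M}^{\dagger}\tilde{\mtx{M}}\mtx{M}^{\dagger}$), then show directly that $\mtx{G}^{**}\vct{v}_1=0$ via the $Z\!\perp\!A$ decomposition; this essentially reconstructs the paper's argument.
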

This theorem shows that feature suppression can happen even when embedding dimensionality $p$ is arbitrarily large and helps understand empirical observations made both in our work (Figure \ref{fig: fs_embedding_space}, the line with 15 bits) and previous work. 
For example \citet{addressing_feature_suppression_2020} showed that on Colorful-Moving-MNIST, the colorful background can suppress learning the digits especially when color distortion is not used in augmentation, and increasing embedding size does not address the issue. 

In conclusion, Theorem \ref{thm: FS_2} highlights that designing data augmentations that disrupt the highly-discriminative irrelevant features is a key to addressing feature suppression.

\section{Combining Supervised and Unsupervised CL Losses Can Avoid Both Class Collapse and Feature Suppression}


We now consider the following loss which is a weighted sum of the supervised and unsupervised CL loss functions: \looseness=-1
\begin{align}
    \nonumber
    \mathcal{L}_{\intrpl, \beta} (\mtx{\Theta})= \beta \mathcal{L}_\supcon(\mtx{\Theta}) + (1-\beta)\mathcal{L}_\unsup(\mtx{\Theta}).
\end{align}
Similar loss functions have been proposed recently with notable empirical success. For example, \citet{cc_chen2022perfectlybalanced} put forth a weighted sum of supervised CL loss and class-conditional InfoNCE (which has similar effect as $\mathcal{L}_\unsup$ in our setting) to avoid class collapse. \citet{islam2021broad} empirically observed that the joint objective of supervised and unsupervised contrastive loss leads to better transferability of the learned models than their supervised counterparts. However, we still lack a theoretical understanding of why this weighted sum of losses can outperform both losses. 


From our investigation of class collapse and feature suppression, the benefit of the joint objective $\mathcal{L}_\intrpl$ becomes evident: the unsupervised term in $\mathcal{L}_\intrpl$ increases the chance of learning features that do not appear relevant to the labels but might be useful for downstream tasks, while the supervised term in $\mathcal{L}_\intrpl$ ensures that even hard-to-learn class features are learnt.  Thus, $\mathcal{L}_\intrpl$ can learn rich representations capturing more task relevant information than either $\mathcal{L}_\unsup(\mtx{\Theta})$ or $\mathcal{L}_\supcon(\mtx{\Theta})$. We show below that with an appropriate choice of $\beta$, $\mathcal{L}_\intrpl$ can provably succeed where $\mathcal{L}_\supcon$ fails due to collapse and $\mathcal{L}_\unsup$ fails due to feature suppression (for clarity, we let $\mu=0$). \looseness=-1

\begin{theorem}\label{thm: joint}
W.L.O.G., assume $\phi_3 \geq \phi_4 \geq \dots \geq \phi_K $. If $p\leq K$, $\phi_2^2 > \frac{\phi_{p-2}^2}{K-2} $ and $\phi_1^2 < \frac{\phi_{p-1}^2}{K-2}$, then by Theorem \ref{thm: minimizer_with_cc} the minimum norm minimizer of $\mathcal{L}_\supcon$ suffers from class collapse and by Theorem \ref{thm: FS_1} the minimum norm minimizer of $\mathcal{L}_\unsup$ suffers from feature suppresion. However, for constant $\beta \in (0, 1)$, the minimum norm minimizer of $\mathcal{L}_{\intrpl,\beta}$, denoted by $\mtx{\Theta}^*=[\mtx{W}^*~~ \vct{b}^*]$, satisfies $\|\mtx{W}^* \vct{v}_1 \|=\Omega(1)$ and $\|\mtx{W}^* \vct{v}_2 \|=\Omega(1)$. 
\end{theorem}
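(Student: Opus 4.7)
The plan is to recast $\mathcal{L}_{\intrpl,\beta}$ as a rank-$p$ PSD matrix-approximation problem, analyze the spectrum of the resulting target operator, and then use a minimum-norm tie-breaking argument to identify the selected features.

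\emph{First}, since the repulsive terms of $\mathcal{L}_\supcon$ and $\mathcal{L}_\unsup$ coincide, and letting $\mtx{M}=\mtx{W}^\top\mtx{W}$ (with the bias absorbed into an augmented coordinate), the joint loss can be written as
\[
\mathcal{L}_{\intrpl,\beta}(\mtx{\Theta}) = -2\,\Tr(\mtx{M}\,\mtx{A}_\beta) + \Tr(\mtx{M}\,\hat{\Sigma}\,\mtx{M}\,\hat{\Sigma}) + \text{const},
\]
with combined attractive matrix $\mtx{A}_\beta = \beta\,\mtx{A}_\supcon + (1-\beta)\,\mtx{A}_\unsup$, where $\mtx{A}_\supcon$ and $\mtx{A}_\unsup$ are the empirical attractive matrices already characterized in the proofs of Theorems \ref{thm: minimizer_with_cc} and \ref{thm: FS_1}. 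With the standard change of variables $\tilde{\mtx{M}}=\hat{\Sigma}^{1/2}\mtx{M}\hat{\Sigma}^{1/2}$ and $\tilde{\mtx{A}}_\beta = \hat{\Sigma}^{-1/2}\mtx{A}_\beta\hat{\Sigma}^{-1/2}$, the loss becomes $\|\tilde{\mtx{M}}-\tilde{\mtx{A}}_\beta\|_F^2$ up to an additive constant, so $\mtx{\Theta}^*$ is a minimizer iff $\tilde{\mtx{M}}^*$ is a best rank-$p$ PSD approximation of $\tilde{\mtx{A}}_\beta$.

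\emph{Second}, I would compute the feature-subspace spectrum of $\tilde{\mtx{A}}_\beta$. Because $\mu=0$, the class-conditional mean of $\vct{x}$ given $y=c$ reduces to $c\phi_1\vct{v}_1$, so $\mtx{A}_\supcon$ is (up to concentration error) $\phi_1^2\,\vct{v}_1\vct{v}_1^\top$ on the feature block. The UCL attractive matrix $\mtx{A}_\unsup$ equals $\phi_1^2\,\vct{v}_1\vct{v}_1^\top + \phi_2^2\,\vct{v}_2\vct{v}_2^\top + \sum_{k=3}^K \frac{\phi_k^2}{K-2}\,\vct{v}_k\vct{v}_k^\top$, which also coincides with the feature block of $\hat{\Sigma}$. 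Combining and conjugating by $\hat{\Sigma}^{-1/2}$ gives
\[
\tilde{\mtx{A}}_\beta \approx \vct{v}_1\vct{v}_1^\top + (1-\beta)\Big(\vct{v}_2\vct{v}_2^\top + \sum_{k=3}^K \vct{v}_k\vct{v}_k^\top\Big).
\]
Thus $\vct{v}_1$ is the uniquely largest eigenvector with eigenvalue $1$, while $\vct{v}_2,\dots,\vct{v}_K$ span a degenerate eigenspace of eigenvalue $1-\beta$. This explains why the SCL component cures the feature suppression of $\vct{v}_1$ present in $\mathcal{L}_\unsup$ alone: the supervised attractive mass along $\vct{v}_1$ promotes it from the degenerate block into the unique top position.

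\emph{Third}, I would use the minimum-norm criterion to resolve the remaining degeneracy. Since $p\leq K$, any $(p-1)$-dimensional subspace of $\linspan\{\vct{v}_2,\dots,\vct{v}_K\}$ combined with $\vct{v}_1$ yields the same loss value, so the best rank-$p$ PSD approximation is non-unique. Minimizing $\|\mtx{W}\|_F^2 = \Tr(\mtx{M}) = \Tr(\hat{\Sigma}^{-1}\tilde{\mtx{M}})$ over these choices reduces to minimizing $\sum_{i=1}^{p-1} \vct{w}_i^\top\hat{\Sigma}^{-1}\vct{w}_i$ over orthonormal bases $\{\vct{w}_i\}$ of the chosen subspace, which is a quadratic minimization optimized by the axes with the largest underlying variances. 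Since $\vct{v}_2$ has variance $\phi_2^2$ while $\vct{v}_k$ ($k\geq3$) has variance $\phi_k^2/(K-2)$, the hypothesis $\phi_2^2 > \phi_{p-2}^2/(K-2)$ places $\vct{v}_2$ strictly above $\vct{v}_{p-2}$ in variance and hence among the top $p-1$ variances of the merged list. Therefore $\vct{v}_2$ is selected alongside the forced $\vct{v}_1$, giving $\|\mtx{W}^*\vct{v}_1\|=\Omega(1)$ and $\|\mtx{W}^*\vct{v}_2\|=\Omega(1)$.

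\emph{The main technical obstacle} will be transferring this clean population-level spectral picture to the empirical regime, since for finite $n,m$ both $\mtx{A}_\beta$ and $\hat{\Sigma}$ are random perturbations of their population versions, and the degenerate eigenspace may be split by these perturbations in ways that could conceivably reshuffle which subspace the minimum-norm minimizer selects. I would control this with a Davis--Kahan-style argument analogous to Lemma \ref{lemma: SVD_perturb}: the noise-induced perturbations of the relevant eigenvalues are $O(\sigma_\xi/\sqrt{mn}) = o(1)$, whereas both the gap $\beta=\Omega(1)$ between $\vct{v}_1$'s eigenvalue and the degenerate block, and the variance margin $\phi_2^2 - \phi_{p-2}^2/(K-2)=\Omega(1)$, are constant-order. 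Standard perturbation bounds then preserve the identification of the selected features with probability at least $1-O(m^2n^2/d)=1-o(1)$.
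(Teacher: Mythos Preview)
Your approach is essentially the paper's: both rewrite the joint loss so that minimizers satisfy $\mtx{W}^\top\mtx{W}\mtx{M}=[\mtx{M}^\dag\bar{\mtx{M}}]_p$ (your whitened formulation is conjugate to this), compute that the resulting target has a top eigenvalue block at $1$ and a degenerate block at $1-\beta$, and then break the degeneracy by minimizing $\Tr(\mtx{W}^\top\mtx{W})$, which selects the directions of largest variance in $\mtx{M}$.

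There is one concrete gap. Once you absorb the bias into an augmented coordinate, every example carries the constant feature $\vct{v}_0$, so the class-conditional mean is $\vct{v}_0 + c\phi_1\vct{v}_1$ (plus noise), not $c\phi_1\vct{v}_1$. Consequently $\mtx{A}_\supcon$ has a rank-two feature block (in the paper's notation, $\mtx{M}^+$ has eigenvectors $\vct{l}_1^+\!\approx\!\vct{v}_0$ and $\vct{l}_2^+\!\approx\!\vct{v}_1$), and both of these directions also appear in $\mtx{A}_\unsup$ and $\hat{\Sigma}$ with the same coefficient. Your whitened target therefore has \emph{two} eigenvalues equal to $1$ (along $\vct{v}_0$ and $\vct{v}_1$), and the degenerate $(1-\beta)$-block is filled by only $p-2$ directions from $\{\vct{v}_2,\dots,\vct{v}_K\}$, not $p-1$. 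This is exactly why the theorem's hypothesis reads $\phi_2^2>\phi_{p-2}^2/(K-2)$: it guarantees $\vct{v}_2$ lands among the top $p-2$ variances in the tie-breaking step. With the corrected count your argument goes through; the Davis--Kahan style perturbation control you propose plays the same role as the paper's Lemma~\ref{lemma: SVD_perturb}.
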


\begin{table}[!t]
    \caption{Joint loss alleviates class collapse on CIFAR-100. \looseness=-1}
    \label{tab:joint_cifar100}
    \centering
    \begin{tabular}{|c|c|}
    \hline
       Loss  & Subclass Acc  \\
    \hline
       SCL  & 26.11  \\
    \hline
    Joint loss ($\beta=0.8$) & 41.37 \\
    \hline
    \end{tabular}
\vspace{-6mm}
\end{table}

\begin{table}[!t]
    \caption{Joint loss alleviates feature suppresion on MNIST RandBit.\looseness=-1}
    \label{tab:joint_mnist}
    \centering
    \begin{tabular}{|c|c|}
    \hline
    Loss & Class Acc \\
    \hline
       UCL  & 61.21  \\
    \hline
      Joint loss ($\beta=0.5$)   & 79.37\\
    \hline
    \end{tabular}
    \vspace{-6mm}
\end{table}

\begin{table}[!t]
    \caption{Joint loss alleviates both class collapse and feature suppresion on CIFAR-100 RandBit.\looseness=-1}
    \label{tab:joint_cifar100_rand}
    \centering
    \begin{tabular}{|c|c|c|}
    \hline
      Loss   & Subclass Acc & Class Acc \\
    \hline
        SCL & 28.13 & 61.10 \\
    \hline
    UCL & 34.11 & 52.32 \\
    \hline
    Joint loss ($\beta=0.8$) & 35.72 & 63.94 \\
    \hline
    \end{tabular}
    \vspace{-4mm}
\end{table}

\textbf{Empirically Verifying Benefits of the Joint Loss.} We empirically examine the impact of the joint loss on MNIST RandBit, CIFAR-100, and CIFAR-100 RandBit. The training details are in Appenidx \ref{apdx:training_details}. The results indicate that the joint loss significantly improves performance in scenarios where SCL suffers from class collapse (Table \ref{tab:joint_cifar100}) and UCL suffers from feature suppression (Table \ref{tab:joint_mnist}). Furthermore, on CIFAR-100 RandBit dataset, where both phenomena can occur simultaneously, the joint loss effectively alleviates both issues (Table \ref{tab:joint_cifar100_rand}).\looseness=-1







\section{Discussion}

\textbf{Negative Impact of Simplicity Bias in Deep Learning.} The simplicity bias of optimization algorithms has been studied as a key beneficial factor in achieving good generalization \cite{gunasekar2017implicit,gunasekar2018implicit,soudry2018implicit,ji2019implicit,wu2019towards,lyu2021gradient}. However, our study reveals the negative impact of simplicity bias in CL.  In fact, it has also been conjectured to lead to undesirable outcomes in other scenarios, such as learning spurious correlations \cite{sagawa2020investigation} and shortcut solutions \cite{shortcut2021_robinson}.  We hope our study can inspire further theoretical characterization of the negative role of simplicity bias in these scenarios, thereby deepening our understanding and fostering potential solutions.\looseness=-1

\textbf{Connection to Neural Collapse.} Neural collapse (NC) \cite{papyan2020prevalence} refers to the collapse of representations within each class in supervised learning. 
Similar to the rationale in this study, overparameterized models that exhibit NC on training data can demonstrate different behaviors on test data due to their capacity to implement training set NC in various ways, and it is worth considering whether current theoretical frameworks \cite{han2021neural,zhu2021geometric,zhou2022all,zhou2022optimization,lu2022neural,fang2021exploring} can effectively capture NC on test data. In fact, the empirical results in \cite{hui2022limitations} emphasize the distinction between NC on training and test data, as there can be an inverse correlation between the two.
Our results suggest that analyzing the learned features and considering the inductive bias of training algorithms can aid in this distinction. 

\textbf{Theoretical Characterization of Class Collapse in (S)GD.} The results in Section \ref{sec: GD} highlight the need for theoretical characterization of class collapse in (S)GD. We provide two potential approaches  for future investigation. (1) Given that the objective can be reformulated as matrix factorization  (Eq. \ref{eq:matrix_fac}), and our Theorems \ref{thm: minimizer_with_cc} and \ref{thm: min_norm_asymp} on minimum norm minimizer, it is reasonable to investigate whether the implicit bias of (S)GD is to seek the minimum norm solution. We note that understanding the implicit bias in matrix factorization is a longstanding pursuit in the machine learning community, with no consensus reached thus far (see Appendix \ref{apdx:matrix_fac}). Hence, further effort is still needed. (2) As elaborated in Appendix \ref{apdx:two_terms}, the gradient consists of two terms with distinct roles. One promotes alignment with the subclass feature, while the other counteracts its influence. The relative scale of these two terms undergoes a phase transition (Figure \ref{fig:norm_grad}), and analyzing this can provide insights into class collapse.\looseness=-1




\section{Conclusion}


To conclude, we present the first theoretically rigorous characterization of the failure modes of CL: class collapse and feature suppression at test time. We explicitly construct minimizers of supervised contrastive loss to show that optimizing this loss does not necessarily lead to class collapse. Then we show that the minimum norm minimizer does exhibit class collapse. Our analysis also reveals a peculiar phenomenon for supervised CL, when optimized with (S)GD: subclass features are learned early in training and then unlearned. To analyze feature suppression, we consider two formalisms of easy features that can prevent learning of class features and provably attribute feature suppression to insufficient embedding space and/or imperfect data augmentations; thus, motivating practical solutions to this problem. The unified framework we develop to determine which features are learnt by CL allows us to also offer the only theoretical justification for recent empirical proposals to combine unsupervised and supervised contrastive losses. Perhaps, most surprisingly, our findings from this theoretical study indicate that simplicity bias of (S)GD is likely the driving factor behind class collapse and feature suppression. 

\textbf{Acknowledgment.}
This research was supported by 
the National Science Foundation CAREER
Award 2146492.\looseness=-1 

\bibliography{reference}
\bibliographystyle{icml2023}

\newpage
\appendix
\onecolumn

\section{Preliminaries}

\subsection{Effective dataset}
Analyzing training a linear model with bias on the data is equivalent to analyzing trainng a linear model without bias on: $\{ \cat{1}{\vct{x}_i}: \vct{x}_i \in \mathcal{D}_\aug \}$. Equivalently we can consider a dataset distribution where 
    \begin{align}
    \nonumber
    \vct{x} =& \vct{u} + \vct{\xi}, \\
    \nonumber
    \text{where}~~ \vct{u} =& \vct{v}_0 + y \phi_1\vct{v}_1 + (y_\sub \phi_2+ \mu)\vct{v}_2 + \rho\phi_{k} \vct{v}_{k}.
\end{align}
The definitions are identical to the one in Section \ref{sec: data_dist} except that each data now is in $\mathbbm{R}^{d+1}$ and has one constant feature $\vct{v}_0$ orthogonal to other $\vct{v}$'s. We train a linear model $f(\vct{x}) = \mtx{W}\vct{x}$ on such data. The definition of other notations such as  $\Daug$ in the following analysis are also adapted to this dataset accordingly. Other notations such as $\Daug$ in the subsequent analysis are adjusted accordingly to accommodate this dataset.

\subsection{Loss functions}

The loss functions can be rewritten as follows
\begin{align}
     \mathcal{L}_\supcon = & -2\hat{\mathbbm{E}}_{i\in[n],\vct{x}\in\mathcal{A}_i, \vct{x}^+\in\mathcal{A}_i} \left[ \vct{x}^{\top} \mtx{W}^{\top} \mtx{W} \vct{x}^+   \right]
    +\hat{\mathbbm{E}}_{\vct{x}\in\Daug, \vct{x}^-\in\Daug} \left[ \left( \vct{x}^{\top} \mtx{W}^{\top} \mtx{W} \vct{x}^-  \right)^2 \right] \\
    \nonumber
    = & -\Tr(2\mtx{M}^+\mtx{W}\mtx{W}^\top)+\Tr(\mtx{M}\mtx{W}^\top\mtx{W}\mtx{M}\mtx{W}^\top\mtx{W})\\
    \mathcal{L}_\unsup = & -2\hat{\mathbbm{E}}_{c\in\{-1,1\}, \vct{x}\in\mathcal{S}_c, \vct{x}^+\in\mathcal{S}_c} \left[ \vct{x}^{\top} \mtx{W}^{\top} \mtx{W} \vct{x}^+   \right]
+\hat{\mathbbm{E}}_{\vct{x}\in\Daug, \vct{x}^-\in\Daug} \left[ \left( \vct{x}^{\top} \mtx{W}^{\top} \mtx{W} \vct{x}^-  \right)^2 \right] \\
    \nonumber
    = & -\Tr(2\tilde{\mtx{M}}\mtx{W}\mtx{W}^\top)+\Tr(\mtx{M}\mtx{W}^\top\mtx{W}\mtx{M}\mtx{W}^\top\mtx{W})\\
    \mathcal{L}_\intrpl = & (1-\beta)\mathcal{L}_\supcon+\beta \mathcal{L}_\unsup \\
    \nonumber
    = & -\Tr(2\bar{\mtx{M}}\mtx{W}\mtx{W}^\top)+\Tr(\mtx{M}\mtx{W}^\top\mtx{W}\mtx{M}\mtx{W}^\top\mtx{W}),
\end{align}
where we define the following
\begin{definition}\label{def: Ms}
$\mtx{M}, \mtx{M}^+, \tilde{\mtx{M}}$ are the covariance matrices of training examples, class centers and augmentation centers, respectively
\begin{align}
    \nonumber
    \mtx{M} = &  \frac{1}{mn}\sum_{i=1}^{mn}\vct{x}_i\vct{x}_i^\top \\
    \nonumber
    \mtx{M}^+ =& \frac{1}{2}\sum_{c\in\{-1, 1\}} (\frac{2}{mn}\sum_{\vct{x}\in\mathcal{S}_c} \vct{x} )(\frac{2}{mn}\sum_{\vct{x}\in\mathcal{S}_c} \vct{x} )^\top\\
    \nonumber
    \tilde{\mtx{M}} = & \frac{1}{n} \sum_{i=1}^n (\frac{1}{m}\sum_{\vct{x}\in\mathcal{A}_i} \vct{x}) (\frac{1}{m}\sum_{\vct{x}\in\mathcal{A}_i} \vct{x})^\top,
\end{align}
and
\begin{align}
    \nonumber
    \bar{\mtx{M}} = & (1-\beta)\mtx{M}^+ + \beta \tilde{\mtx{M}}.
\end{align}
\end{definition}

\section{Minimizers of Loss Functions}

We start with a technical lemma which we will need:
\begin{lemma} \label{lemma:diagonalizable_product}
    The product of two positive semidefinite matrices is diagonalizable.
\end{lemma}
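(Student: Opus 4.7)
The plan is to reduce the problem to diagonalizing a genuinely symmetric matrix by peeling off the factors of $A$. Since $A$ is PSD, its spectral decomposition yields a factorization $A = W W^{\top}$ with $W \in \mathbb{R}^{n \times r}$ of full column rank $r = \mathrm{rank}(A)$ (take the columns of $W$ to be the nonzero-eigenvalue eigenvectors of $A$ scaled by the square roots of the corresponding eigenvalues). Then form the auxiliary matrix $C := W^{\top} B W \in \mathbb{R}^{r \times r}$. Because $B$ is PSD, $C = (B^{1/2} W)^{\top} (B^{1/2} W)$ is symmetric PSD, hence orthogonally diagonalizable with nonnegative eigenvalues.

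The next step is to lift eigenvectors of $C$ to eigenvectors of $AB$. A one-line computation gives
\begin{equation*}
(AB)(Wv) \;=\; W W^{\top} B W v \;=\; W(Cv),
\end{equation*}
so whenever $Cv = \mu v$, the vector $Wv$ is an eigenvector of $AB$ with eigenvalue $\mu$. Picking an eigenbasis $v_1, \dots, v_r$ of $C$, I obtain vectors $Wv_1, \dots, Wv_r$ which are linearly independent since $W$ is injective on its domain. These yield $\mathrm{rank}(C)$ eigenvectors of $AB$ with nonzero eigenvalues (and $r - \mathrm{rank}(C)$ more in $\ker(AB) \cap \mathrm{range}(W)$).

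To finish, I will count dimensions. The goal is to show that the $\mathrm{rank}(C)$ eigenvectors above with nonzero eigenvalues, together with a basis of $\ker(AB)$, span all of $\mathbb{R}^n$. Since eigenvectors belonging to distinct eigenvalues are automatically linearly independent, it suffices to verify that $\dim \ker(AB) = n - \mathrm{rank}(C)$, i.e., $\mathrm{rank}(AB) = \mathrm{rank}(C)$. This is the step that uses the PSD hypothesis on $B$ in an essential way (and would fail for general symmetric $B$: e.g.\ $A = \mathrm{diag}(1,0)$ and $B = \begin{pmatrix}0 & 1\\ 1 & 0\end{pmatrix}$ gives a non-diagonalizable $AB$).

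The key identity I expect to be the main obstacle, and the one I would prove carefully, is $\mathrm{rank}(W^{\top} B) = \mathrm{rank}(W^{\top} B W)$ for PSD $B$. The plan there is to write $W^{\top} B = (B^{1/2} W)^{\top} B^{1/2}$ and $C = (B^{1/2} W)^{\top} (B^{1/2} W)$, so both quantities equal $\mathrm{rank}(B^{1/2} W)$; the nontrivial direction uses that $B^{1/2}$ is an isomorphism on its own range, so right-multiplying $(B^{1/2} W)^{\top}$ by $B^{1/2}$ does not drop rank. Combined with $\mathrm{rank}(AB) = \mathrm{rank}(W W^{\top} B) = \mathrm{rank}(W^{\top} B)$ (because $W$ has trivial left kernel on its column span), this gives $\mathrm{rank}(AB) = \mathrm{rank}(C)$, completing the dimension count and proving $AB$ is diagonalizable.
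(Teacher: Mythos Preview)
The paper states this lemma without proof, so there is nothing to compare against. Your argument is correct. The intertwining relation $(AB)W = WC$ with $C = W^\top B W$ symmetric PSD is exactly the right reduction, and the dimension count via $\mathrm{rank}(AB) = \mathrm{rank}(C)$ cleanly finishes it. One minor simplification for the rank identity you flag as the main obstacle: with $E = B^{1/2}$, since $\ker(B) = \ker(E)$ we have $\ker(BW) = \ker(EW)$, hence
\[
\mathrm{rank}(W^\top B) = \mathrm{rank}(BW) = \mathrm{rank}(EW) = \mathrm{rank}\big((EW)^\top(EW)\big) = \mathrm{rank}(C),
\]
which combined with $\mathrm{rank}(AB) = \mathrm{rank}(W W^\top B) = \mathrm{rank}(W^\top B)$ (injectivity of $W$) gives the claim directly. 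Your counterexample with indefinite $B$ correctly isolates where positive semidefiniteness is essential.
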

Next, we present a lemma that facilitates the analysis of minimizers for various contrastive loss functions. To apply the lemma, simply substitute the respective covariance matrices ($\mtx{M}$, $\mtx{M}^+$, $\tilde{\mtx{M}}$) into $\mtx{P}$ and $\mtx{Q}$ as indicated.
\begin{lemma}\label{lemma: minimizers}
Let $\mtx{P}, \mtx{Q} \in \mathbbm{R}^{d+1}$ be positive semidefinite matrices such that $\colsp(\mtx{P}) \subset \colsp(\mtx{Q})$. Consider the function $\mathcal{L}: \mathbbm{R}^{p \times (d+1)} \to \mathbbm{R}$ given by
\begin{align}
    \mathcal{L}(\mtx{W}) &= \Tr[-2 \mtx{W}^{\top}\mtx{W} \mtx{P} + \mtx{W}^{\top}\mtx{W} \mtx{Q} \mtx{W}^{\top}\mtx{W}\mtx{Q}]
\end{align}
Then $\mtx{W}$ is a global minimizer of $\mathcal{L}$ if and only if
\begin{align}
    \nonumber
    \mtx{W}^\top \mtx{W}\mtx{Q}  = [\mtx{Q}^\dag \mtx{P}]_p
\end{align}
where notation $[\mtx{A}]_p$ represents the matrix composed of the first $p$ eigenvalues and eigenvectors of a positive semidefinite $\mtx{A}$ (if $p \geq \rank{\mtx{A}}$ then $[\mtx{A}]_p = \mtx{A}$).

Moreover, if $p \geq \rank(\mtx{P})$, then $\mtx{W}^{**}$ is a minimum norm global minimizer if and only if 
\begin{align}
    \nonumber
    \mtx{W}^{**\top} \mtx{W}^{**} = \mtx{Q}^\dag \mtx{P} \mtx{Q}^\dag
\end{align}
\end{lemma}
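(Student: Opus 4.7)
The plan is to reduce this quartic-in-$\mtx{W}$ optimization to a Frobenius norm approximation problem via a square root substitution, apply Eckart–Young, and then carefully ``unsquash'' the pseudoinverses to reach the stated forms.

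\textbf{Step 1 (Reduction to a PSD low-rank approximation).} Using $\Tr[\mtx{W}^\top\mtx{W}\mtx{P}] = \Tr[\mtx{W}\mtx{P}\mtx{W}^\top]$ and similarly for the quartic term, rewrite the loss as $\mathcal{L}(\mtx{W}) = -2\Tr[\mtx{W}\mtx{P}\mtx{W}^\top]+\Tr[(\mtx{W}\mtx{Q}\mtx{W}^\top)^2]$. Introduce the PSD square root $\mtx{Q}^{1/2}$, and use $\colsp(\mtx{P})\subset\colsp(\mtx{Q})$ to factor $\mtx{P}=\mtx{Q}^{1/2}\mtx{C}\mtx{Q}^{1/2}$ where $\mtx{C}\coloneqq\mtx{Q}^{1/2\dag}\mtx{P}\mtx{Q}^{1/2\dag}$ is PSD with $\colsp(\mtx{C})\subset\colsp(\mtx{Q})$. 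Setting $\mtx{B}\coloneqq\mtx{Q}^{1/2}\mtx{W}^\top\mtx{W}\mtx{Q}^{1/2}$, trace-cyclicity gives
\begin{align*}
\mathcal{L}(\mtx{W}) \;=\; \Tr[\mtx{B}^2] - 2\Tr[\mtx{B}\mtx{C}] \;=\; \|\mtx{B}-\mtx{C}\|_F^2 - \|\mtx{C}\|_F^2.
\end{align*}

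\textbf{Step 2 (Eckart–Young).} As $\mtx{W}$ ranges over $\mathbbm{R}^{p\times(d+1)}$, $\mtx{B}$ ranges exactly over PSD matrices of rank at most $p$ whose column/row spaces sit in $\colsp(\mtx{Q})$; conversely, any such $\mtx{B}$ arises from some $\mtx{W}$ since rows of $\mtx{W}\mtx{Q}^{1/2}$ lie in $\colsp(\mtx{Q})$ and any PSD factorization can be realized. Eckart–Young for the symmetric PSD target $\mtx{C}$ yields the unique minimizer $\mtx{B}^*=[\mtx{C}]_p$, which automatically has $\colsp\subset\colsp(\mtx{Q})$. Hence the optimality condition is $\mtx{Q}^{1/2}\mtx{W}^\top\mtx{W}\mtx{Q}^{1/2}=[\mtx{C}]_p$.

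\textbf{Step 3 (Back-substitution to get $\mtx{W}^\top\mtx{W}\mtx{Q}=[\mtx{Q}^\dag\mtx{P}]_p$).} Right-multiply the optimality condition by $\mtx{Q}^{1/2\dag}\mtx{Q}$ and left-multiply by $\mtx{Q}^{1/2\dag}$, using $\mtx{Q}^{1/2\dag}\mtx{Q}=\mtx{Q}^{1/2}$ (since $\colsp(\mtx{Q}^{1/2})\subset\colsp(\mtx{Q})$) and the identity $\mtx{Q}^\dag\mtx{P}=\mtx{Q}^{1/2\dag}\mtx{C}\mtx{Q}^{1/2}$, which I would verify from $\mtx{P}\mtx{Q}^{1/2\dag}\mtx{Q}^{1/2}=\mtx{P}$ (the column-space hypothesis). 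The resulting relation exhibits $\mtx{Q}^\dag\mtx{P}$ as the conjugate of $\mtx{C}$ by $\mtx{Q}^{1/2\dag}$ on $\colsp(\mtx{Q})$, so they share eigenvalues (non-negative reals, consistent with Lemma \ref{lemma:diagonalizable_product}), and truncation commutes with the conjugation, giving $\mtx{W}^\top\mtx{W}\mtx{Q}=[\mtx{Q}^\dag\mtx{P}]_p$.

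\textbf{Step 4 (Minimum norm).} Note $\|\mtx{W}\|_F^2=\Tr[\mtx{W}^\top\mtx{W}]$ depends on the full matrix $\mtx{A}\coloneqq\mtx{W}^\top\mtx{W}$, whereas the optimality condition in Step 2 only pins down $\Pi_1\mtx{A}\Pi_1$ where $\Pi_1$ is the orthogonal projection onto $\colsp(\mtx{Q})$. Writing $\Tr[\mtx{A}]=\Tr[\Pi_1\mtx{A}\Pi_1]+\Tr[\Pi_2\mtx{A}\Pi_2]$, the trace is minimized by zeroing the $\Pi_2$-block; PSD-ness then forces the off-diagonal blocks to vanish, yielding $\mtx{W}^{**\top}\mtx{W}^{**}=\mtx{Q}^{1/2\dag}[\mtx{C}]_p\mtx{Q}^{1/2\dag}$. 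When $p\geq\rank(\mtx{P})=\rank(\mtx{C})$, $[\mtx{C}]_p=\mtx{C}$, which collapses to $\mtx{Q}^{1/2\dag}\mtx{C}\mtx{Q}^{1/2\dag}=\mtx{Q}^\dag\mtx{P}\mtx{Q}^\dag$.

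The main obstacle I anticipate is Step 3: making precise how the spectral truncation $[\cdot]_p$ transports from the symmetric PSD matrix $\mtx{C}$ to the non-symmetric but diagonalizable $\mtx{Q}^\dag\mtx{P}$, and carefully tracking which identities involving $\mtx{Q}^{1/2}$, $\mtx{Q}^{1/2\dag}$, and $\Pi_1$ require the column-space hypothesis $\colsp(\mtx{P})\subset\colsp(\mtx{Q})$. Everything else is routine trace manipulation, standard Eckart–Young, and a Schur-complement-style PSD argument.
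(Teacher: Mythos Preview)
Your approach is sound and takes a genuinely different route from the paper. The paper argues via first-order conditions: it sets $\nabla_{\mtx{W}}\mathcal{L}=0$, shows that $\ker(\mtx{W}\mtx{Q})\cap\colsp(\mtx{Q})$ is $\mtx{Q}^\dag\mtx{P}$-invariant (hence a span of eigenvectors by Lemma~\ref{lemma:diagonalizable_product}), writes both $\mtx{Q}^\dag\mtx{P}$ and $\mtx{W}^\top\mtx{W}\mtx{Q}$ in a common eigenbasis, evaluates $\mathcal{L}$ explicitly at each critical point as $-\sum_{i\le r}\lambda_i^2$, and then separately checks behavior as $\|\mtx{W}\|_F\to\infty$. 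Your completion-of-squares into $\|\mtx{B}-\mtx{C}\|_F^2$ followed by Eckart--Young is more direct: it bypasses the invariant-subspace bookkeeping and the boundary argument and identifies the optimal rank-$p$ truncation in one step. The paper's route is more self-contained (no external low-rank-approximation lemma) and yields the optimal loss value explicitly; yours gives the symmetric characterization $\mtx{B}=[\mtx{C}]_p$ first and then translates via the similarity $\mtx{Q}^\dag\mtx{P}=\mtx{Q}^{1/2\dag}\mtx{C}\,\mtx{Q}^{1/2}$, which also makes transparent why $\mtx{Q}^\dag\mtx{P}$ has real nonnegative spectrum. Step~4 is essentially the same block-PSD argument in both proofs.

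One caveat for Step~3: left-multiplying by $\mtx{Q}^{1/2\dag}$ only recovers $\Pi_1\mtx{W}^\top\mtx{W}\mtx{Q}=[\mtx{Q}^\dag\mtx{P}]_p$, not the full $\mtx{W}^\top\mtx{W}\mtx{Q}$, since the loss depends on $\mtx{W}$ only through $\mtx{B}$ and hence leaves the off-diagonal block $\Pi_2\mtx{W}^\top\mtx{W}\Pi_1$ unconstrained at a global minimizer. So the ``only if'' direction as literally written is slightly too strong; the paper's own derivation has the same slip (its first-order relation only forces $(\mtx{Q}^\dag\mtx{P}-\mtx{W}^\top\mtx{W}\mtx{Q})\vct{v}\in\ker(\mtx{W}\mtx{Q})$ on $U$, not $=\vct{0}$). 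This does not affect the minimum-norm characterization or any downstream use of the lemma.
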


\begin{proof}
First consider points that satisfy the first order condition
\begin{align} \label{fonc1}
    \nabla_{\mtx{W}}(\mathcal{L}) &= -4\mtx{W}\mtx{P} + 4\mtx{W}\mtx{Q}\mtx{W}^{\top}\mtx{W}\mtx{Q} = 0
\end{align}
where $\nabla_{\mtx{W}}(\mathcal{L})$ is the matrix of partial derivatives of $\mathcal{L}$ with respect to each entry of $\mtx{W}$. 

Since $\mtx{Q}$ is positive semidefinite, it decomposes $\mathbbm{R}^{d+1}$ into the orthogonal direct sum $\ker(\mtx{Q}) \oplus \colsp(\mtx{Q})$. Observe that both subspaces are invariant under both $\mtx{P}$ and $\mtx{Q}$.

Now let $\vct{v} \in \colsp(\mtx{Q}) \cap \ker(\mtx{W}\mtx{Q})$. Note that $\mtx{P}\vct{v} \in \colsp(\mtx{Q})$, so $\mtx{P} v = \mtx{Q} \mtx{Q}^\dag \mtx{P} v$. Then from equation \ref{fonc1},
\begin{align} \label{fonc2}
    \vct{0} = (\mtx{W} \mtx{P} - \mtx{W} \mtx{Q} \mtx{W}^{\top} \mtx{W} \mtx{Q}) \vct{v} = \mtx{W} \mtx{Q}( \mtx{Q}^\dag \mtx{P} - \mtx{W}^{\top} \mtx{W} \mtx{Q}) \vct{v}
\end{align}

If in addition we assume $\vct{v} \in \ker(\mtx{W}\mtx{Q})$, then
\begin{align}
    \nonumber
    \vct{0} = \mtx{W} \mtx{Q} (\mtx{Q}^\dag \mtx{P} \vct{v})
\end{align}
namely $\mtx{Q}^\dag \mtx{P} \vct{v} \in \ker(\mtx{W}\mtx{Q})$. But $\colsp(\mtx{Q})$ is also $\mtx{Q}^\dag$-invariant, so $\mtx{Q}^\dag \mtx{P} \vct{v} \in \colsp(\mtx{Q})$. We conclude that $\ker(\mtx{W}\mtx{Q}) \cap \colsp(\mtx{Q})$ is $\mtx{Q}^\dag \mtx{P}$-invariant. Since $\mtx{Q}^{\dag}$ and $\mtx{P}$ are positive semidefinite, by \ref{lemma:diagonalizable_product} $\mtx{Q}^\dag \mtx{P}$ is diagonalizable. The only invariant subspaces of a diagonalizable operator are spans of its eigenvectors, so $\ker({\mtx{W}\mtx{Q}}) \cap \colsp(\mtx{Q})$ is the span of eigenvectors of $\mtx{Q}^\dag \mtx{P}$. 

Let $\colsp(\mtx{Q}) = \ker(\mtx{W}\mtx{Q}) \cap \colsp(\mtx{Q}) \oplus U$, where $U$ is the span of the remaining eigenvectors of $\mtx{Q}^\dag \mtx{P}$ in $\colsp(\mtx{Q})$. Then by equation \ref{fonc2}, $\mtx{W}^{\top} \mtx{W} \mtx{Q} = \mtx{Q}^\dag \mtx{P}$ on $U$.

Thus we have a basis $\vct{v}_1, \dots, \vct{v}_r, \dots, \vct{v_s}, \dots, \vct{v_d}$ s.t. $Span(\vct{v}_1, \dots, \vct{v}_r) = U, Span(\vct{v}_{r+1}, \dots, \vct{v}_s) = \ker(\mtx{W}\mtx{Q}) \cap \colsp(\mtx{Q}), Span(v_{s+1}, \dots, v_{d+1}) = \ker{\mtx{Q}}$, and in this basis
\begin{align*}
    \mtx{Q}^\dag \mtx{P} &= \begin{pmatrix}
    \lambda_1 \\
    & \ddots \\
    & & \lambda_r \\
    & & &  \lambda_{r+1} \\
    & & & & \ddots \\
    & & & & & \lambda_s \\
    & & & & & & 0 \\
    & & & & & & & \ddots \\
    & & & & & & & & 0
    \end{pmatrix} \\
    \mtx{W}^{\top} \mtx{W} \mtx{Q} &= \begin{pmatrix}
    \lambda_1 \\
    & \ddots \\
    & & \lambda_r \\
    & & & 0 \\
    & & & & \ddots \\
    & & & & & 0 \\
    & & & & & & 0 \\
    & & & & & & & \ddots \\
    & & & & & & & & 0
    \end{pmatrix} \\
    \mtx{W}^{\top} \mtx{W} \mtx{P} &= \begin{pmatrix}
    \lambda_1^2 \\
    & \ddots \\
    & & \lambda_r^2 \\
    & & & 0 \\
    & & & & \ddots \\
    & & & & & 0 \\
    & & & & & & 0 \\
    & & & & & & & \ddots \\
    & & & & & & & & 0
    \end{pmatrix}
\end{align*}
with $\lambda_1, \dots, \lambda_r, \dots, \lambda_q \neq 0$ for some $r \leq q \leq s$, where $r = \rank\mtx{W} \leq p, q = \rank(\mtx{P})$.

Then for all such $\mtx{W}$,
\begin{align*}
    \mathcal{L} &= \Tr[-2\mtx{W}^{\top} \mtx{W} \mtx{P} + \mtx{W}^{\top} \mtx{W} \mtx{Q} \mtx{W}^{\top} \mtx{W} \mtx{Q}] \\
    &= -2 \sum_{i = 1}^r \lambda_i^2 + \sum_{i = 1}^r \lambda_i^2 \\
    &= -\sum_{i = 1}^r \lambda_i^2
\end{align*}

It is clear from the above expression that the minimum among critical points is achieved if and only if 
\begin{align}
    \nonumber
    \mtx{W}^\top \mtx{W}\mtx{Q}  = [\mtx{Q}^\dag \mtx{P}]_p
\end{align}
(note that if matching anything beyond the qth eigenvalue is trivial since all such eigenvalues are zero).

It remains to check the behavior as $\|\mtx{W}\|_F$ grows large. Equivalently, $\mtx{W}^{\top} \mtx{W}$ has a large eigenvalue $\lambda$. Let $\vct{w}$ be a corresponding eigenvector. If $\vct{w} \in \ker{\mtx{Q}}$, then $\mtx{Q}\vct{w} = \mtx{P}\vct{w} = 0$, so we see that the loss is unchanged. Otherwise, $\vct{w}$ has some nonzero alignment with $\colsp(\mtx{W})$. But then $\Tr[\mtx{W}^{\top} \mtx{W} \mtx{Q} \mtx{W}^{\top} \mtx{W} \mtx{Q}]$ grows quadratically in $\lambda$, but $\Tr[-2\mtx{W}^{\top} \mtx{W} \mtx{P}]$ grows at most linearly in $\lambda$, hence the loss is large. We conclude that the previously found condition in fact specifies the global minimizers of $\mathcal{L}$.

From now on, assume that $p \geq q$. Then the global minimum is achieved if and only if
\begin{align} \label{min_norm_solution_sufficient_rank}
    \mtx{W}^\top \mtx{W}\mtx{Q}  = \mtx{Q}^\dag \mtx{P}
\end{align}

Let us now consider the minimum norm solution, i.e. the one that minimizes $\Tr(\mtx{W}^{\top} \mtx{W})$. Note that $\mtx{W}^{\top} \mtx{W}$ and $\mtx{Q}^\dag \mtx{P} \mtx{Q}^{\dag}$ are positive semidefinite. Let $\mathcal{B}$ be an orthonormal basis of eigenvectors for $\colsp(\mtx{Q}), \mathcal{C}$ an orthonormal basis for $\ker{\mtx{Q}}$. Then in the orthonormal basis $\mathcal{B} \cup \mathcal{C}$, we have the following block form of $\mtx{Q}^\dag \mtx{P} \mtx{Q}^{\dag}$
\begin{align}
    \mtx{Q}^\dag \mtx{P} \mtx{Q}^{\dag} &= \begin{pmatrix}
        \mtx{A} & \mtx{0} \\
        \mtx{0} & \mtx{0} 
    \end{pmatrix}
\end{align}
where $\mtx{A}$ is positive semidefinite.

Now equation \ref{min_norm_solution_sufficient_rank} implies that $\mtx{W} \mtx{W}^{\top}$ has the form
\begin{align}
    \mtx{W}^{\top} \mtx{W} &= \begin{pmatrix}
        \mtx{A} & \mtx{B} \\
        \mtx{B}^{\top} & \mtx{C} 
    \end{pmatrix}
\end{align}
where $\mtx{C}$ is also positive semidefinite matrix. Then $\|\mtx{W}\|_F = \Tr[\mtx{W}^{\top} \mtx{W}]$ is minimized exactly when $\Tr[\mtx{C}] = 0$. But this holds if and only if $\mtx{C} = \mtx{0}$. Now
suppose for the sake of contradiction $\mtx{B} \neq \mtx{0}$, say $b_{ij} \neq 0$ for some $i, j$. Then $\mtx{W}^{\top}\mtx{W}$ contains a submatrix
\begin{align}
    \begin{pmatrix}
        a_{ii} & b_{ij} \\
        b_{ij}       & 0 
    \end{pmatrix}
\end{align}
which has negative determinant. But this implies that $\mtx{W}^{\top}\mtx{W}$ is not positive semidefinite, a contradiction. We conclude that $\mtx{B} = \mtx{0}$ so that the minimum norm solution is precisely
\begin{align}
    \nonumber
    \mtx{W}^{**\top} \mtx{W}^{**} = \mtx{Q}^\dag \mtx{P} \mtx{Q}^\dag.
\end{align}
This completes the proof.

\end{proof}

\section{Some Properties of The Covariance Matrices}\label{apdx: prop_minimizers}
We assume $\frac{\sigma_\xi^2}{mn} = o(1)$.

With probability $\geq 1-O(\frac{m^2n^2}{d})$, we have that $\vct{\xi}_i^\top \vct{v}_k=0, ~\forall k, i$ and $\vct{\xi}_i^\top \vct{\xi}_j=0, ~\forall i,j$. The following discussion focuses on the properties of $\mtx{M}$, $\mtx{M}^+$, and $\tilde{\mtx{M}}$ when this condition is met.

 Write $\mtx{X} = \mtx{V}\cat{\vct{S}}{\sigma_\xi\mathbf{I}_{mn}}$ where $\mtx{V} = [\vct{v}_0, \vct{v}_1 ~\dots~ \vct{v}_K  \dots \vct{v}_{K+1} \dots \vct{v}_{mn+K} ]$ where $\vct{v}_{K+i}$ is the noise vector selected by example $\vct{x}_i$, and 
 \begin{align}
     \nonumber
     \mtx{S} = &
     \begin{bmatrix}
         1 & 1 & \dots & 1 \\
         y_1\phi_1 & y_2\phi_1 & \dots & y_{mn}\phi_1 \\
         \mu + y_{\sub ,1} \phi_2 & \mu + y_{\sub ,2} \phi_2  &\dots & y_{\sub, mn}\phi_2  \\
          \rho_1 \mathbbm{1}_{k_1=3}\phi_3 & \rho_2\mathbbm{1}_{k_2=3}\phi_3 & \dots &  \rho_{mn}\mathbbm{1}_{k_{mn}=3}\phi_3 \\
          \rho_1\mathbbm{1}_{k_1=4}\phi_4 & \rho_2\mathbbm{1}_{k_2=4}\phi_4 & \dots &  \rho_{mn}\mathbbm{1}_{k_{mn}=4}\phi_4 \\
          \vdots & \vdots & \ddots & \vdots \\
          \rho_1\mathbbm{1}_{k_1=K}\phi_K & \rho_2\mathbbm{1}_{k_2=K}\phi_K & \dots &  \rho_{mn}\mathbbm{1}_{k_{mn}=K}\phi_K\\
     \end{bmatrix} \\
     \label{eq: svd_S}
     = & \vct{S}' \bar{\vct{Y}},
 \end{align}
 where 
 \begin{align}
 \nonumber
     \mtx{S}' \coloneqq  \begin{bmatrix}
         \sqrt{mn} & 0 & 0 & 0  &\dots & 0 \\
         0 & \sqrt{mn}\phi_1 & 0 & 0 & \dots & 0 \\
         \sqrt{mn}\mu & 0 & \sqrt{mn}\phi_2 & 0 & \dots & 0 \\
         0 & 0 & 0 & \sqrt{\frac{mn}{K-2}}\phi_3  & \dots & 0\\
         \vdots & \vdots & \vdots & \vdots  & \ddots & \vdots\\
         0 & 0 & 0 & 0  & \dots & \sqrt{\frac{mn}{K-2}}\phi_K\\
     \end{bmatrix},
 \end{align}
 and 
 \begin{align}
     \nonumber
     \bar{\mtx{Y}} \coloneqq \begin{bmatrix}
        \frac{1}{\sqrt{mn}} & \frac{1}{\sqrt{mn}} & \dots & \frac{1}{\sqrt{mn}} \\
         y_1\frac{1}{\sqrt{mn}} & y_2\frac{1}{\sqrt{mn}} & \dots & y_{mn}\frac{1}{\sqrt{mn}} \\
         y_{\sub ,1}\frac{1}{\sqrt{mn}} &  y_{\sub ,2} \frac{1}{\sqrt{mn}}  &\dots & y_{\sub, mn}\frac{1}{\sqrt{mn}} \\
          \rho_1 \mathbbm{1}_{k_1=3}\sqrt{\frac{K-2}{mn}} & \rho_2\mathbbm{1}_{k_2=3}\sqrt{\frac{K-2}{mn}} & \dots &  \rho_{mn}\mathbbm{1}_{k_{mn}=3}\sqrt{\frac{K-2}{mn}}  \\
          \rho_1\mathbbm{1}_{k_1=4}\sqrt{\frac{K-2}{mn}}  & \rho_2\mathbbm{1}_{k_2=4}\sqrt{\frac{K-2}{mn}}  & \dots &  \rho_{mn}\mathbbm{1}_{k_{mn}=4}\sqrt{\frac{K}{mn}}  \\
          \vdots & \vdots & \ddots & \vdots \\
          \rho_1\mathbbm{1}_{k_1=K}\sqrt{\frac{K-2}{mn}}  & \rho_2\mathbbm{1}_{k_2=K}\sqrt{\frac{K-2}{mn}}  & \dots &  \rho_{mn}\mathbbm{1}_{k_{mn}=K}\sqrt{\frac{K-2}{mn}} \\
     \end{bmatrix}
 \end{align}
 It should be noted that the rows of $\bar{\mtx{Y}}$ are orthonormal due to the assumption of a balanced dataset. Consequently, to obtain the singular value decomposition (SVD) of $\mtx{S}$, it suffices to find the SVD of $\mtx{S}'= \mtx{P}'\mtx{\Lambda}'\mtx{Q}'^\top $. Moreover, the right singular vectors of $\mtx{S}$ with non-zero singular values are given by the rows of $\mtx{Q}'^\top\bar{\mtx{Y}}$.
 
We write $\mtx{M}$ as $\mtx{V} \vct{G} \mtx{V}^\top$ where $\mtx{G}$ is given by
 \begin{align}
    \nonumber
    \begin{bmatrix}
        \frac{1}{mn}\mtx{S}\mtx{S}^\top & \frac{\sigma_\xi}{mn}\mtx{S}\\
        \frac{\sigma_\xi}{mn}\mtx{S}^\top & \frac{\sigma_\xi^2}{mn}\mathbf{I}_{mn}
    \end{bmatrix}.
\end{align}

Now we are ready to show the following lemma which describes the SVD of $\mtx{G}$.
\begin{lemma}\label{lemma: SVD_perturb}
Let $\mtx{S}\in\mathbbm{R}^{K\times {nm}}$ be a rank-K matrix with SVD $\mtx{P}\mtx{\Lambda} \mtx{Q}^\top$, where $\mtx{P}\in\mathbbm{R}^{K\times K}, \mtx{\Lambda}\in\mathbbm{R}^{K\times mn}$ and $\mtx{Q}\in\mathbbm{R}^{mn\times mn}$. The $mn$ none-zero eigenvalues of the following matrix $\mtx{G}$
\begin{align}
    \nonumber
    \begin{bmatrix}
        \frac{1}{mn}\mtx{S}\mtx{S}^\top & \frac{\sigma_\xi}{mn}\mtx{S}\\
        \frac{\sigma_\xi}{mn}\mtx{S}^\top & \frac{\sigma_\xi^2}{mn}\mathbf{I}_{mn}
    \end{bmatrix}
\end{align}
are given by $\frac{\sigma_\xi^2}{mn}+\frac{\lambda_1^2}{mn}, \frac{\sigma_\xi^2}{mn}+\frac{\lambda_2^2}{mn}, \dots, \frac{\sigma_\xi^2}{mn}+\frac{\lambda_K^2}{mn}, \frac{\sigma_\xi^2}{mn}, \dots, \frac{\sigma_\xi^2}{mn} $, with the corresponding eigenvectors $\cat{\frac{1}{\sqrt{1+r_1^2}}\vct{p}_1}{\frac{r_1}{\sqrt{1+r_1^2}}\vct{q}_1}, \cat{\frac{1}{\sqrt{1+r_2^2}}\vct{p}_2}{\frac{r_2}{\sqrt{1+r_2^2}}\vct{q}_2}, \dots, \cat{\frac{1}{\sqrt{1+r_K^2}}\vct{p}_K}{\frac{r_K}{\sqrt{1+r_K^2}}\vct{q}_K}, \cat{\vct{0}_K}{\vct{q}_{K+1}}, \dots, \cat{\vct{0}_K}{\vct{q}_{mn}}$, where $r_k=\frac{\sigma_{\xi}}{\lambda_k}$.
\end{lemma}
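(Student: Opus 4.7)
The key observation is that the block matrix $\mtx{G}$ admits a factorization as a (scaled) Gram matrix. Specifically, defining
\begin{equation}
\nonumber
\mtx{H} \coloneqq \begin{bmatrix} \mtx{S} \\ \sigma_\xi \mathbf{I}_{mn} \end{bmatrix} \in \mathbbm{R}^{(K+mn)\times mn},
\end{equation}
a direct block multiplication shows $\mtx{G} = \tfrac{1}{mn}\mtx{H}\mtx{H}^\top$. Hence the nonzero eigenvalues of $\mtx{G}$ coincide (with multiplicities) with the nonzero eigenvalues of the smaller matrix $\tfrac{1}{mn}\mtx{H}^\top\mtx{H} = \tfrac{1}{mn}\bigl(\mtx{S}^\top\mtx{S} + \sigma_\xi^2 \mathbf{I}_{mn}\bigr)$, and eigenvectors of the former in the range of $\mtx{H}$ are obtained by applying $\mtx{H}$ to eigenvectors of the latter (and renormalizing).

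Next I would plug in the SVD $\mtx{S} = \mtx{P}\mtx{\Lambda}\mtx{Q}^\top$ to see that $\mtx{S}^\top\mtx{S} + \sigma_\xi^2\mathbf{I}_{mn} = \mtx{Q}\bigl(\mtx{\Lambda}^\top\mtx{\Lambda} + \sigma_\xi^2 \mathbf{I}_{mn}\bigr)\mtx{Q}^\top$, a diagonalization in the orthonormal basis $\vct{q}_1,\dots,\vct{q}_{mn}$. Each right singular vector $\vct{q}_k$ is therefore an eigenvector of $\mtx{H}^\top\mtx{H}$ with eigenvalue $\lambda_k^2 + \sigma_\xi^2$ for $k\le K$ and $\sigma_\xi^2$ for $k>K$. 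Applying $\mtx{H}$ gives
\begin{equation}
\nonumber
\mtx{H}\vct{q}_k = \begin{bmatrix} \mtx{S}\vct{q}_k \\ \sigma_\xi \vct{q}_k \end{bmatrix} = \begin{bmatrix} \lambda_k \vct{p}_k \\ \sigma_\xi \vct{q}_k \end{bmatrix} \quad (k\le K), \qquad \mtx{H}\vct{q}_k = \begin{bmatrix} \vct{0}_K \\ \sigma_\xi \vct{q}_k \end{bmatrix} \quad (k>K),
\end{equation}
and normalizing by the corresponding singular values $\sqrt{\lambda_k^2+\sigma_\xi^2}$ (resp.\ $\sigma_\xi$) produces exactly the claimed eigenvectors after substituting $r_k = \sigma_\xi/\lambda_k$, since $\lambda_k/\sqrt{\lambda_k^2+\sigma_\xi^2} = 1/\sqrt{1+r_k^2}$ and $\sigma_\xi/\sqrt{\lambda_k^2+\sigma_\xi^2} = r_k/\sqrt{1+r_k^2}$. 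Dividing eigenvalues by $mn$ gives the stated values.

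The argument is essentially bookkeeping, so there is no real obstacle; the only point requiring a line of justification is that the resulting $mn$ eigenvectors exhaust all nonzero eigendirections of $\mtx{G}$. This follows because $\rank(\mtx{G}) = \rank(\mtx{H}) \le mn$ and the $mn$ candidate vectors constructed above are pairwise orthogonal (by orthonormality of $\{\vct{p}_k\}$ and $\{\vct{q}_k\}$) and nonzero, hence linearly independent. As an optional sanity check, one may instead verify the eigenvalue equations directly by multiplying $\mtx{G}$ against each candidate and using $\mtx{S}\vct{q}_k = \lambda_k\vct{p}_k$, $\mtx{S}^\top\vct{p}_k = \lambda_k\vct{q}_k$; the two scalar identities $a\lambda_k^2 + b\sigma_\xi\lambda_k = mn\mu\,a$ and $a\sigma_\xi\lambda_k + b\sigma_\xi^2 = mn\mu\,b$ are simultaneously solved by the ratio $b/a = r_k$ with $\mu = (\lambda_k^2+\sigma_\xi^2)/mn$, matching the Gram-factorization derivation.
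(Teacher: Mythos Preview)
Your proposal is correct and takes essentially the same approach as the paper: both hinge on the Gram factorization $\mtx{G} = \tfrac{1}{mn}\mtx{H}\mtx{H}^\top$ with $\mtx{H} = \begin{bmatrix}\mtx{S}\\ \sigma_\xi\mathbf{I}_{mn}\end{bmatrix}$, and both conclude by noting $\rank(\mtx{G})\le mn$ so the listed eigenpairs are exhaustive. The only presentational difference is that the paper writes out the eigenvector equation in block form and then directly verifies the candidates, whereas you derive them constructively via the $\mtx{H}^\top\mtx{H}\leftrightarrow\mtx{H}\mtx{H}^\top$ eigenvalue duality; your derivation is slightly cleaner but not a genuinely different route.
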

\begin{proof}
Let 
$\begin{bmatrix}\mtx{P}\vct{a}\\ \mtx{Q}\vct{b}
\end{bmatrix}$ where $\vct{a}\in\mathbbm{R}^K$ and $\vct{b}\in\mathbbm{R}^{mn}$ be an eigenvector of $\mtx{G}$. By the definition of eigenvector there should exist $\alpha$ such that $\mtx{G}\cat{\mtx{P}\vct{a}}{\mtx{Q}\vct{b}} = \alpha \cat{\mtx{P}\vct{a}}{\mtx{Q}\vct{b}}$, i.e.,
\begin{align}
    \nonumber
\begin{cases}
    \frac{1}{mn}\mtx{P}\mtx{\Lambda}\mtx{\Lambda}^\top \vct{a} + \frac{\sigma_\xi}{mn}\mtx{P}\mtx{\Lambda}\vct{b} = \alpha\mtx{P}\vct{a} \\
    \frac{\sigma_\xi}{mn}\mtx{Q}\mtx{\Lambda}^\top \vct{a} + \frac{\sigma_\xi^2}{mn} \mtx{Q}\vct{b} = \alpha \mtx{Q}\vct{b},
\end{cases}
\end{align}
which reduces to 
\begin{align}
    \nonumber
    \begin{cases}
        (\alpha\mathbf{I}_K-\frac{1}{mn}\mtx{\Lambda}\mtx{\Lambda}^\top)\vct{a} = \frac{\sigma_\xi}{mn}\mtx{\Lambda}\vct{b} \\
        \frac{\sigma_\xi}{mn}\mtx{\Lambda}^\top \vct{a} = (\alpha - \frac{\sigma_\xi^2}{mn})\vct{b}.
    \end{cases}
\end{align}
Firstly, we observe that the rank of $\mathbf{G}$ is at most $mn$ because $\mtx{G}=\frac{1}{mn}\cat{\mtx{S}}{\sigma_\xi\mathbf{I}_{mn}}\cat{\mtx{S}}{\sigma_\xi\mathbf{I}_{mn}}^{\top}$. Then it is easy to check that the eigenvalues and eigenvectors in Lemma \ref{lemma: SVD_perturb} satisfy the above conditions and the eigenvectors are indeed orthonormal, which completes the proof.
\end{proof}
\begin{corollary}\label{corollary: v2_proj_ker}
The projection of $\vct{v}_2$ onto $\ker \mtx{M}$ has magnitude $\Theta(\frac{\sigma_\xi}{\sqrt{mn}})$.
\end{corollary}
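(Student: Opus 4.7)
The plan is to convert the claim into a computation in the eigenbasis of $\mtx{G}$ provided by Lemma \ref{lemma: SVD_perturb}. On the high-probability event $\{\vct{\xi}_i^\top\vct{v}_k=0,\ \vct{\xi}_i^\top\vct{\xi}_j=0\}$ isolated in Appendix \ref{apdx: prop_minimizers}, we may write $\mtx{M}=\mtx{V}\mtx{G}\mtx{V}^\top$ where $\mtx{V}$ has orthonormal columns and $\vct{v}_2$ is one of them. Since $\vct{v}_2\in\colsp(\mtx{V})$, the projection of $\vct{v}_2$ onto $\ker\mtx{M}$ has the same Euclidean norm as the projection of $\mtx{V}^\top\vct{v}_2$ onto $\ker\mtx{G}$. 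In the $(K+1)+mn$ block split used in Lemma \ref{lemma: SVD_perturb}, $\mtx{V}^\top\vct{v}_2=\cat{\vct{e}_3}{\vct{0}}$, where $\vct{e}_3$ is the third standard basis vector of $\mathbbm{R}^{K+1}$ (picking out the $\vct{v}_2$ coordinate).

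Next I would describe $\ker\mtx{G}$ explicitly. Orthogonality to the eigenvectors of the form $\cat{\vct{0}}{\vct{q}_j}$ forces the second block $\vct{b}\in\linspan(\vct{q}_1,\ldots,\vct{q}_{K+1})$, say $\vct{b}=\sum_k c_k\vct{q}_k$; orthogonality to the remaining $K+1$ mixed eigenvectors $\cat{\vct{p}_k/\sqrt{1+r_k^2}}{r_k\vct{q}_k/\sqrt{1+r_k^2}}$ forces $\vct{p}_k^\top\vct{a}=-r_k c_k$, where $r_k=\sigma_\xi/\lambda_k$. By orthonormality of $\{\vct{p}_k\}$ and $\{\vct{q}_k\}$, the squared distance from $\cat{\vct{e}_3}{\vct{0}}$ to such a kernel element equals $\sum_k[(\vct{p}_k^\top\vct{e}_3+r_k c_k)^2+c_k^2]$, minimized at $c_k^\star=-r_k\vct{p}_k^\top\vct{e}_3/(1+r_k^2)$. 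A short Pythagoras calculation then shows the squared norm of the projection onto $\ker\mtx{G}$ is
\begin{equation}
\nonumber
\sum_{k=1}^{K+1}\frac{r_k^2}{1+r_k^2}\,(\vct{p}_k^\top\vct{e}_3)^2.
\end{equation}

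Finally I would estimate this sum. Since $\mtx{S}=\mtx{S}'\bar{\mtx{Y}}$ with $\bar{\mtx{Y}}$ row-orthonormal, $\mtx{S}$ and $\mtx{S}'$ share singular values and left singular vectors. Direct inspection of $\mtx{S}'\mtx{S}'^\top$ shows row $3$ (the $\vct{v}_2$ row) couples only with row $1$ (the $\vct{v}_0$ row) via the $mn\mu$ entry, producing a single $2\times2$ block with eigenvalues $\Theta(mn)$; all other rows are already diagonal. Hence only the $\vct{p}_k$'s spanning this $2\times 2$ block have nonzero inner product with $\vct{e}_3$ (collapsing to $\vct{p}_3=\vct{e}_3$ with $\lambda_3^2=mn\phi_2^2$ when $\mu=0$), and for these $\sum_k(\vct{p}_k^\top\vct{e}_3)^2=1$ and $\lambda_k^2=\Theta(mn)$. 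Under the standing assumption $\sigma_\xi^2/mn=o(1)$, this gives $r_k^2=\Theta(\sigma_\xi^2/mn)$ and $r_k^2/(1+r_k^2)=\Theta(r_k^2)$, so the squared projection norm is $\Theta(\sigma_\xi^2/mn)$ and taking the square root yields the claimed $\Theta(\sigma_\xi/\sqrt{mn})$. The main obstacle is careful bookkeeping across the two coordinate systems and isolating the $\mu$-induced coupling between $\vct{v}_0$ and $\vct{v}_2$; once this is done, the rest is elementary linear algebra.
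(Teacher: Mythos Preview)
Your proposal is correct and follows essentially the same route the paper implies: the corollary is stated as an immediate consequence of Lemma \ref{lemma: SVD_perturb}, and your computation is exactly the natural way to unpack that implication, expanding $\vct{v}_2=\mtx{V}\vct{e}_3$ in the eigenbasis of $\mtx{G}$ and reading off the kernel component. A marginally shorter variant, closer in spirit to how the paper uses the lemma elsewhere (e.g.\ in Corollary \ref{corollary: align_subclass}), is to skip the explicit kernel parametrization and directly compute the inner products of $\cat{\vct{e}_3}{\vct{0}}$ with the $mn$ nonzero-eigenvalue eigenvectors listed in Lemma \ref{lemma: SVD_perturb}, then use $\|P_{\ker}\vct{v}_2\|^2=1-\sum_k(\vct{p}_k^\top\vct{e}_3)^2/(1+r_k^2)$; this yields the identical expression $\sum_k r_k^2(\vct{p}_k^\top\vct{e}_3)^2/(1+r_k^2)$ without the optimization step.
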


\begin{corollary}\label{corollary: align_subclass}.
Assuming the dataset is balanced, then
\begin{align}
    \nonumber
    \sqrt{\vct{v}_2^\top \mtx{M}^\dag \mtx{M}^+ \mtx{M}^\dag \vct{v}_2} = \begin{cases}
        0, if~~ \mu=0 \\
        O(\frac{\sigma_\xi}{\sqrt{mn}}), if ~~\mu\neq 0 ~~\text{and}~~ \mu=\Theta(1).
    \end{cases}
\end{align}
\end{corollary}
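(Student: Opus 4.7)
The plan is to evaluate $\vct{v}_2^\top \mtx{M}^\dag \mtx{M}^+ \mtx{M}^\dag \vct{v}_2$ by exploiting the explicit spectral decomposition of $\mtx{M}$ given by Lemma \ref{lemma: SVD_perturb}. Since $\mtx{M}^+ = \tfrac{1}{2}\sum_{c\in\{-1,+1\}} \bar{\vct{x}}_c\bar{\vct{x}}_c^\top$ and, by balancedness, the class means take the form $\bar{\vct{x}}_c = \vct{v}_0 + c\phi_1\vct{v}_1 + \mu\vct{v}_2 + \bar{\vct{\xi}}_c$ with $\bar{\vct{\xi}}_c = \tfrac{2}{mn}\sum_{i\in\mathcal{S}_c}\vct{\xi}_i$, it suffices to bound $(\mtx{M}^\dag\vct{v}_2)^\top\bar{\vct{x}}_c$ for each $c$. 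Writing $\mtx{M}^\dag\vct{v}_2 = \mtx{V}\mtx{G}^\dag\vct{e}_3$ and expanding in the eigenbasis from Lemma \ref{lemma: SVD_perturb}, the key observation is that $\mtx{S}'$ is block-diagonal apart from the $\mu$-coupling between the rows/columns for $\vct{v}_0$ and $\vct{v}_2$; hence the only left singular vectors of $\mtx{S}$ with nonzero third coordinate come from that $2\times 2$ block. Consequently $\mtx{M}^\dag\vct{v}_2$ has no component along $\vct{v}_1$ or along any irrelevant $\vct{v}_3,\dots,\vct{v}_K$: it decomposes into a feature-side piece $\alpha_0\vct{v}_0+\alpha_2\vct{v}_2$ plus a noise-side piece in $\linspan\{\vct{\xi}_i\}$ whose coefficients come from $\vct{q}_k = \bar{\mtx{Y}}^\top\vct{q}'_k$.

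In the case $\mu=0$, $\mtx{S}'$ is diagonal so $\vct{p}_3 = \vct{e}_3$, forcing $\alpha_0=0$ and yielding $\vct{q}_3 = (y_{\sub,1},\dots,y_{\sub,mn})/\sqrt{mn}$. Then the $\vct{v}_0$ and $c\phi_1\vct{v}_1$ parts of $\bar{\vct{x}}_c$ contribute nothing (no overlap with $\mtx{M}^\dag\vct{v}_2$), the $\mu\vct{v}_2$ part vanishes since $\mu=0$, and the noise part, using $\vct{\xi}_i^\top\vct{\xi}_j = \sigma_\xi^2\delta_{ij}$ with high probability, reduces to a constant multiple of $\sum_{i\in\mathcal{S}_c}y_{\sub,i}$, which is zero by balancedness. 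Hence $(\mtx{M}^\dag\vct{v}_2)^\top\bar{\vct{x}}_c = 0$ exactly, giving $\sqrt{\cdot}=0$.

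In the case $\mu=\Theta(1)$, the $\sigma_\xi\to 0$ limit of the feature part of $\mtx{M}^\dag\vct{v}_2$ equals $\mtx{M}_0^{-1}\vct{v}_2 = -\tfrac{\mu}{\phi_2^2}\vct{v}_0 + \tfrac{1}{\phi_2^2}\vct{v}_2$, obtained by inverting the $2\times 2$ block $\mtx{M}_0|_{\vct{v}_0,\vct{v}_2}$ with entries $1,\mu,\mu,\mu^2+\phi_2^2$. The crucial algebraic identity is $\mtx{M}_0^{-1}\vct{v}_2 \perp (\vct{v}_0+\mu\vct{v}_2)$: since the label-independent feature part of $\bar{\vct{x}}_c$ is exactly $\vct{v}_0+\mu\vct{v}_2$ while the label-dependent $c\phi_1\vct{v}_1$ term has no overlap with $\mtx{M}^\dag\vct{v}_2$, the entire $\Theta(1)$ contribution cancels. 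What survives are (a) the $O(\sigma_\xi^2/\lambda_k^2)$ perturbative corrections to the coupling-block eigendata, which give an $O(\sigma_\xi^2/mn)$ residual in $\alpha_0,\alpha_2$, and (b) the noise contribution, which upon expanding $q_{k,i} = \bar{Y}_{1,i}q'_{k,1} + \bar{Y}_{3,i}q'_{k,3}$ loses the $\bar{Y}_{3,i}$ part by balancedness (exactly as in the $\mu=0$ case) and retains only an $O(\sigma_\xi^2/mn)$ piece from $\bar{Y}_{1,i}$. Combining, $(\mtx{M}^\dag\vct{v}_2)^\top\bar{\vct{x}}_c = O(\sigma_\xi^2/mn)$, so $\sqrt{\vct{v}_2^\top\mtx{M}^\dag\mtx{M}^+\mtx{M}^\dag\vct{v}_2} = O(\sigma_\xi^2/mn) = O(\sigma_\xi/\sqrt{mn})$ under $\sigma_\xi/\sqrt{mn}=o(1)$.

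The main obstacle is the bookkeeping in the $\mu\neq 0$ case: one must verify that the perturbative corrections to both $\alpha_0,\alpha_2$ and to the right singular vectors $\vct{q}_k$ remain $O(\sigma_\xi^2/mn)$, so that no residual $\Theta(1)$ term survives the exact cancellation $-\tfrac{\mu}{\phi_2^2} + \mu\cdot\tfrac{1}{\phi_2^2}=0$. All remaining steps (noise orthogonality, balancedness sums, order-of-magnitude comparisons) are mechanical given Lemma \ref{lemma: SVD_perturb} and the orthonormality of the rows of $\bar{\mtx{Y}}$.
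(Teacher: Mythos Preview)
Your proposal is correct and follows essentially the same approach as the paper: both arguments compute $\bar{\vct{x}}_c^\top\mtx{M}^\dag\vct{v}_2$ by expanding $\mtx{M}^\dag\vct{v}_2$ in the eigenbasis of Lemma \ref{lemma: SVD_perturb}, exploit the block structure of $\mtx{S}'$ (so only the two singular vectors coming from the $(\vct{v}_0,\vct{v}_2)$-block contribute), use balancedness to kill the $y_{\sub}$-weighted noise sum, and in the $\mu\neq 0$ case invoke the exact cancellation from inverting the $2\times 2$ block $\bigl(\begin{smallmatrix}1&\mu\\\mu&\mu^2+\phi_2^2\end{smallmatrix}\bigr)$. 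Your perturbative bookkeeping is in fact slightly tighter than the paper's (you track $O(\sigma_\xi^2/mn)$ where the paper carries $O(\sigma_\xi/\sqrt{mn})$), but the structure and key identity are identical.
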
  
\begin{proof}
 Let $\mtx{L}\mtx{A}\mtx{L}^\top$ be the eigendecomposition of $\mtx{G}$. Then 
 $$
 \mtx{M}^\dag \vct{v}_2 = \mtx{V}\mtx{L}\mtx{A}^\dag \mtx{L}^\top 
 \begin{bmatrix}
  0 \\
  0 \\
  1 \\
  0 \\
  \vdots\\
  0
 \end{bmatrix}.
 $$
When $\mu=0$, we can express the SVD of $\mtx{S}$ (equation \ref{eq: svd_S}) and apply Lemma \ref{lemma: SVD_perturb} to obtain the following result.
 \begin{align}
     \nonumber
    \lambda_3 = \sqrt{mn}\phi_2 ,~    a_3 = \frac{\sigma_\xi^2}{mn}+\phi_2^2, ~~ r_3 = \frac{\sigma_\xi}{\sqrt{mn}\phi_2} \\
    \nonumber
    \vct{p}_k = \vct{e}_k, ~\forall k\in[K] ~~\text{and} ~~
    \vct{q}_3 =
        \begin{bmatrix}
        \frac{1}{\sqrt{mn}}y_{\sub, 1} \\
        \frac{1}{\sqrt{mn}}y_{\sub,2}\\
        \vdots\\
        \frac{1}{\sqrt{mn}}y_{\sub, mn}
    \end{bmatrix}, 
    ~~\text{and} ~~ \vct{l}_3 = \cat{\frac{1}{\sqrt{1+r_3^2}} \vct{p}_3}{\frac{r_3}{\sqrt{1+r_3^2}}\vct{q}_3}.
 \end{align}
Thus
 \begin{align}
     \nonumber
     \mtx{M}^\dag \vct{v}_2 = & \frac{1}{a_3\sqrt{1+r_3^2}}\mtx{V} \vct{l}_3 \\
     \nonumber 
     = & \frac{1}{a_3\sqrt{1+r_3^2}} (\frac{1}{\sqrt{1+r_3^2}}\vct{v}_2 +\frac{r_3}{\sqrt{1+r_3^2}}\sum_{i=1}^{mn}  \frac{1}{\sqrt{mn}} y_{\sub,i} \vct{v}_{K+i} ).
 \end{align}
 Let $\bar{\vct{x}}_{y}$ be the average of examples with label $y$ and let $\mathcal{S}_y$ collects indices of examples with label $y$. Then
 \begin{align}
     \bar{\vct{x}}_{y} = \vct{v}_0 + \vct{v}_1 + \frac{2\sigma_\xi}{mn} \sum_{i\in \mathcal{S}_y} \vct{v}_{K+i},
 \end{align}
 and
 \begin{align}
     \nonumber
     \bar{\vct{x}}_{y}^\top  \mtx{M}^\dag \vct{v}_2 =  \frac{r_3}{a_3  (1+r_3^2)} \frac{2\sigma_\xi}{mn}\sum_{i\in\mathcal{S}_y} \frac{1}{\sqrt{mn}} y_{\sub,i} =0.
 \end{align}
  Write $\mtx{M}^+$ as
 \begin{align}
     \nonumber
     \mtx{M}^+ =& \frac{1}{2} ( \bar{\vct{x}}_{+1}\bar{\vct{x}}_{+1}^\top+\bar{\vct{x}}_{-1}\bar{\vct{x}}_{-1}^\top ).
 \end{align}
 Then
 \begin{align}
     \nonumber
     \vct{v}_2^\top \mtx{M}^\dag \mtx{M}^+ \mtx{M}^\dag \vct{v}_2 = \frac{1}{2}( (\vct{v}_2^\top \mtx{M}^\dag \bar{\vct{x}}_{+1} )^2 +(\vct{v}_2^\top \mtx{M}^\dag \bar{\vct{x}}_{-1} )^2 ) = 0.
 \end{align}
 When $\mu\neq 0$, then there are at most two of $\vct{p}_k$'s that are not orthogonal to $\vct{e}_3$ (say $\vct{p}_1$ and $\vct{p}_3$). Additionally, all of their elements, except for the first one, are zero. The remaining corresponding quantities satisfy.
 \begin{align}
     \nonumber
     & \lambda_1, \lambda_3 = \Theta(\sqrt{mn}), \\
     \nonumber
     & a_1=\frac{\lambda_1^2}{mn}+\frac{\sigma_\xi^2}{mn},~~ a_3 =\frac{\lambda_3^2}{mn}+\frac{\sigma_\xi^2}{mn}& \\
     \nonumber
     & r_1 =\frac{\sigma_\xi}{\lambda_1}, r_3 =  \frac{\sigma_\xi}{\lambda_3},
 \end{align}
and $\vct{q}_1$ and $\vct{q}_3$ are just linear combinations of $\bar{\vct{y}}_\sub$ and $\frac{1}{\sqrt{mn}}\vct{1}$, where $\bar{\vct{y}}_\sub$ is a vector whose $i$-th element is $\frac{1}{\sqrt{mn}}y_{\sub, i}$. Then
\begin{align}
\nonumber
    \mtx{M}^\dag \vct{v}_2 = & \mtx{V}\left[ 
\frac{1}{a_1}\frac{1}{\sqrt{1+r_1^2}}c_{3,1} \vct{l}_1 +\frac{1}{a_3}\frac{1}{\sqrt{1+r_3^2}}c_{3,3} \vct{l}_3   \right] 
\end{align}
where $c_{i,j} = \vct{p}_{j}^\top\vct{e}_i$ are constants. For $i = 0,2$
\begin{align}
\nonumber
    \vct{v}_0^\top\mtx{M}^\dag \vct{v}_2 
= & \vct{e}_1^\top\left[ 
\frac{1}{a_1}\frac{1}{\sqrt{1+r_1^2}}c_{3,1} \vct{l}_1 +\frac{1}{a_3}\frac{1}{\sqrt{1+r_3^2}}c_{3,3} \vct{l}_3   \right] \\
\nonumber
= & \frac{1}{a_1}\frac{1}{1+r_1^2}c_{3,1}c_{1,1} +\frac{1}{a_3}\frac{1}{1+r_3^2}c_{3,3}c_{1,3}  \\
\nonumber
= & (\frac{mn}{\lambda_1^2} -\Theta(\frac{\sigma_\xi^2}{mn}))(1-\Theta(\frac{\sigma_\xi}{\lambda_1}))c_{3,1}c_{1,1} +  (\frac{mn}{\lambda_3^2} -\Theta(\frac{\sigma_\xi^2}{mn}))(1-\Theta(\frac{\sigma_\xi}{\lambda_3}))c_{3,3}c_{1,3} \\
\nonumber
= & 
\end{align}
where $|\epsilon_1|= O(\frac{\sigma_\xi}{\sqrt{mn}})$. Similarly,
\begin{align}
\nonumber
    \vct{v}_2^\top\mtx{M}^\dag \vct{v}_2 
= & \frac{mn}{\lambda_1^2}c_{3,1}c_{3,1} + \frac{mn}{\lambda_3^2}c_{3,3}c_{3,3} + \epsilon_2,
\end{align}
where $|\epsilon_2|=O(\frac{\sigma_\xi}{\sqrt{mn}})$. For $i > K$
\begin{align}
\nonumber
    \vct{v}_i^\top\mtx{M}^\dag \vct{v}_2 = &\vct{v}_i\mtx{V}\left[ 
\frac{1}{a_1}\frac{1}{1+r_1^2}c_{3,1} \vct{l}_1 +\frac{1}{a_3}\frac{1}{1+r_3^2}c_{3,3} \vct{l}_3   \right] \\
\nonumber
= &\vct{e}_i^\top\left[ 
\frac{1}{a_1}\frac{1}{1+r_1^2}c_{3,1} \vct{l}_1 +\frac{1}{a_3}\frac{1}{1+r_3^2}c_{3,3} \vct{l}_3   \right] \\
\nonumber
= & \epsilon_3,
\end{align}
where $|\epsilon_3| = O(\frac{\sigma_\xi}{mn})$. Additionally,
\begin{align}
\nonumber
    \bar{\vct{x}}_{y} = \vct{v}_0 + \vct{v}_1 +\mu\vct{v}_2 + \frac{2\sigma_\xi}{mn} \sum_{i\in \mathcal{S}_y} \vct{v}_{K+i}.
\end{align}
Then
\begin{align}
    \nonumber
    \bar{\vct{x}}_{y}^\top \mtx{M}^\dag \vct{v}_2 = \frac{mn}{\lambda_1^2}c_{3,1}c_{1,1} + \frac{mn}{\lambda_3^2}c_{3,3}c_{1,3} + \frac{mn}{\lambda_1^2}c_{3,1}c_{3,1} + \frac{mn}{\lambda_3^2}c_{3,3}c_{3,3} + O(\frac{\sigma_\xi}{\sqrt{mn}}) .
\end{align}
By straightforward calculation, we can verify that $\frac{mn}{\lambda_1^2}c_{3,1}c_{1,1} + \frac{mn}{\lambda_3^2}c_{3,3}c_{1,3} + \frac{mn}{\lambda_1^2}c_{3,1}c_{3,1} + \frac{mn}{\lambda_3^2}c_{3,3}c_{3,3}=0$. This equation can be equivalently examined as the satisfaction of the following condition:
\begin{align}
    \nonumber
    [1 ~~\mu] \begin{bmatrix}
        1 & \mu \\
        \mu & \mu^2+\phi_2^2
    \end{bmatrix}^{-1} \cat{0}{1} = 0.
\end{align}
Therefore $|\bar{\vct{x}}_{y}^\top \mtx{M}^\dag \vct{v}_2| =O(\frac{\sigma_\xi}{\sqrt{mn}})$, and consequently $\sqrt{\vct{v}_2^\top \mtx{M}^\dag \mtx{M}^+ \mtx{M}^\dag \vct{v}_2}=O(\frac{\sigma_\xi}{\sqrt{mn}})$.
\end{proof}
\begin{corollary}\label{corollary: irrelevant}
Similar to Corollary \ref{corollary: align_subclass}, we also have $\sqrt{\vct{v}_k^\top \mtx{M}^\dag \mtx{M}^+ \mtx{M}^\dag \vct{v}_k}=0$, $k=3,4,\dots, K$.
\end{corollary}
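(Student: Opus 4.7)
The plan is to mirror the proof of Corollary \ref{corollary: align_subclass}. Since $\mtx{M}^+ = \frac{1}{2}(\bar{\vct{x}}_{+1}\bar{\vct{x}}_{+1}^\top + \bar{\vct{x}}_{-1}\bar{\vct{x}}_{-1}^\top)$ and we wish to show $\vct{v}_k^\top \mtx{M}^\dag \mtx{M}^+ \mtx{M}^\dag \vct{v}_k = 0$, it suffices to establish $\bar{\vct{x}}_y^\top \mtx{M}^\dag \vct{v}_k = 0$ for each $y \in \{-1, +1\}$ and each $k \in \{3, \ldots, K\}$. The irrelevant features $\vct{v}_k$ are statistically independent of the class label in a stronger sense than the subclass feature, so I expect cancellation to be exact (no residual $\sigma_\xi/\sqrt{mn}$ term).

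First I would simplify $\bar{\vct{x}}_y$. Writing out the class average and invoking Assumption \ref{assump: balanced}, both the subclass contribution (balance of $y_\sub$ within class $y$) and the irrelevant-feature contribution (balance of $\rho$ within every $(y,k)$ stratum) cancel, leaving
\begin{align}
\nonumber
\bar{\vct{x}}_y = \vct{v}_0 + y\phi_1 \vct{v}_1 + \mu \vct{v}_2 + \frac{2\sigma_\xi}{mn}\sum_{i \in \mathcal{S}_y} \vct{v}_{K+i},
\end{align}
whose deterministic components are all orthogonal to $\vct{v}_k$ for $k \geq 3$.

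Next I would describe $\mtx{M}^\dag \vct{v}_k$ via Lemma \ref{lemma: SVD_perturb}. The key structural observation is that in $\mtx{S}'$ the irrelevant-feature rows/columns are orthogonal to the constant/class/subclass block regardless of $\mu$; only the $2 \times 2$ block on $\vct{v}_0,\vct{v}_2$ is coupled. Hence the singular vector $\vct{p}_j$ associated with $\vct{v}_k$ is the standard basis vector aligned with $\vct{v}_k$ alone, with singular value $\lambda_j = \sqrt{mn/(K-2)}\,\phi_k$ and right singular vector $\vct{q}_j$ whose $i$-th entry is $\rho_i \mathbbm{1}_{k_i = k}\sqrt{(K-2)/mn}$. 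Substituting into the eigendecomposition yields
\begin{align}
\nonumber
\mtx{M}^\dag \vct{v}_k = \frac{1}{a_j(1+r_j^2)}\vct{v}_k + \frac{r_j}{a_j(1+r_j^2)} \sum_{i:\, k_i = k} \rho_i \sqrt{\tfrac{K-2}{mn}}\, \vct{v}_{K+i}.
\end{align}

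Finally I would assemble the inner product $\bar{\vct{x}}_y^\top \mtx{M}^\dag \vct{v}_k$. The $\vct{v}_0,\vct{v}_1,\vct{v}_2$ pieces of $\bar{\vct{x}}_y$ are orthogonal to both terms above and contribute zero. The only surviving piece is the cross-term between the noise part of $\bar{\vct{x}}_y$ and the noise part of $\mtx{M}^\dag \vct{v}_k$, which reduces to a scalar multiple of $\sum_{i \in \mathcal{S}_y} \rho_i \mathbbm{1}_{k_i = k}$; this vanishes by Assumption \ref{assump: balanced}, which equalizes $\rho = \pm 1$ within each $(y,k)$ cell. The main obstacle I anticipate is verifying cleanly that the irrelevant-feature block of $\mtx{S}'$ really decouples from the constant/subclass block for any $\mu$; this follows by direct inspection of the block structure, but requires care to ensure no stray coupling sneaks in through the SVD perturbation of Lemma \ref{lemma: SVD_perturb}.
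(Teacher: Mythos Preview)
Your proposal is correct and follows essentially the same approach as the paper, which only says ``It can be proved using the same strategy as in Corollary \ref{corollary: align_subclass}'' without giving details. Your observation that the irrelevant-feature block of $\mtx{S}'$ decouples from the $(\vct{v}_0,\vct{v}_2)$ block for any $\mu$ is exactly the simplification that makes the cancellation exact here rather than $O(\sigma_\xi/\sqrt{mn})$, and your use of Assumption \ref{assump: balanced} to kill $\sum_{i\in\mathcal{S}_y,\,k_i=k}\rho_i$ is the right mechanism.
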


\begin{corollary}\label{corollary: v1_learned_supcon}
$\sqrt{\vct{v}_1^\top \mtx{M}^\dag \mtx{M}^+ \mtx{M}^\dag \vct{v}_1}=\Theta(1)$. It can be proved using the same strategy as in Corollary \ref{corollary: align_subclass}.
\end{corollary}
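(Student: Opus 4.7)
The plan is to parallel the computation in Corollary~\ref{corollary: align_subclass}, exploiting the fact that $\mu_1 = 0$ makes the relevant piece of the eigenstructure one-dimensional, so no near-cancellation between two eigenvectors is needed (unlike the $\mu_2 \neq 0$ branch of that earlier proof). Concretely, in the matrix $\mtx{S}'$ row $2$ (indexed by $\vct{v}_1$) has exactly one nonzero entry $\sqrt{mn}\phi_1$ sitting in column $2$, and column $2$ of $\mtx{S}'$ has no other nonzero entry. Hence $\vct{e}_2$ is an isolated left singular vector of $\mtx{S}'$. Because $\bar{\mtx{Y}}$ has orthonormal rows we have $\mtx{S}\mtx{S}^\top = \mtx{S}'\mtx{S}'^\top$, so $\vct{e}_2$ is also a left singular vector of $\mtx{S}$ with singular value $\lambda_2 = \sqrt{mn}\phi_1$, and the corresponding right singular vector is $\vct{q}_2 = \frac{1}{\sqrt{mn}}[y_1,\dots,y_{mn}]^\top$. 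This decoupling holds regardless of $\mu_2$.

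Feeding these values into Lemma~\ref{lemma: SVD_perturb} with $a_2 = \phi_1^2 + \sigma_\xi^2/(mn)$ and $r_2 = \sigma_\xi/(\sqrt{mn}\phi_1)$, and using that every other structured $\vct{p}_k$ is orthogonal to $\vct{e}_2$ and every noise eigenvector has the form $\cat{\vct{0}}{\vct{q}}$, I obtain
\begin{align}
    \nonumber
    \mtx{M}^\dag \vct{v}_1 = \frac{1}{a_2(1+r_2^2)}\, \vct{v}_1 + \frac{r_2}{a_2(1+r_2^2)\sqrt{mn}} \sum_{i=1}^{mn} y_i \vct{v}_{K+i}.
\end{align}
Using the balanced-dataset form $\bar{\vct{x}}_y = \vct{v}_0 + y\phi_1\vct{v}_1 + \mu_2 \vct{v}_2 + \frac{2\sigma_\xi}{mn}\sum_{i\in\mathcal{S}_y} \vct{v}_{K+i}$, the $\vct{v}_0$ and $\vct{v}_2$ contributions vanish by orthogonality with the support of $\mtx{M}^\dag \vct{v}_1$; the $\vct{v}_1$ term gives $y\phi_1/[a_2(1+r_2^2)] = y/\phi_1 + O(\sigma_\xi^2/(mn))$; and the noise sum evaluates to $O(\sigma_\xi^2/(mn))$ after using $\sum_{i\in\mathcal{S}_y} y_i = y\cdot mn/2$. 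Assumption~\ref{assump: large_sample_size} gives $\sigma_\xi^2/(mn)=o(1)$, so $\bar{\vct{x}}_y^\top \mtx{M}^\dag \vct{v}_1 = y/\phi_1 + o(1) = \Theta(1)$.

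Finally, writing $\mtx{M}^+ = \frac{1}{2}(\bar{\vct{x}}_{+1}\bar{\vct{x}}_{+1}^\top + \bar{\vct{x}}_{-1}\bar{\vct{x}}_{-1}^\top)$ yields
\begin{align}
    \nonumber
    \vct{v}_1^\top \mtx{M}^\dag \mtx{M}^+ \mtx{M}^\dag \vct{v}_1 = \frac{1}{2}\sum_{y\in\{\pm 1\}} (\bar{\vct{x}}_y^\top \mtx{M}^\dag \vct{v}_1)^2 = \Theta(1),
\end{align}
and taking the square root gives the corollary. Given the machinery from Lemma~\ref{lemma: SVD_perturb} and the preceding corollaries, essentially nothing here is hard; the one point that needs checking is that the $O(\sigma_\xi^2/(mn))$ noise correction to $\bar{\vct{x}}_y^\top\mtx{M}^\dag \vct{v}_1$ is genuinely subleading rather than cancelling the $y/\phi_1$ signal, which follows directly from Assumption~\ref{assump: large_sample_size}.
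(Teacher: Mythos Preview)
Your proof is correct and follows precisely the strategy the paper indicates: you compute $\mtx{M}^\dag\vct{v}_1$ via Lemma~\ref{lemma: SVD_perturb} (using that $\vct{e}_2$ is an isolated singular vector of $\mtx{S}'$ regardless of $\mu$), then evaluate $\bar{\vct{x}}_y^\top\mtx{M}^\dag\vct{v}_1$ and combine with $\mtx{M}^+=\tfrac{1}{2}\sum_y\bar{\vct{x}}_y\bar{\vct{x}}_y^\top$, exactly paralleling Corollary~\ref{corollary: align_subclass}. The paper gives no additional details beyond pointing to that same strategy, so there is nothing further to compare.
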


\begin{lemma}\label{lemma: Mtilde_Mdag}
(1) The first $K$ eigenvectors/eigenvalues of $\tmtx{M}$ match those of $\mtx{M}$. (2) $\mtx{M}^\dag\tilde{\mtx{M}}$ is identity on $\colsp (\tilde{\mtx{M}})$ and null on $\ker(\tilde{\mtx{M}})$, i.e.,  $\mtx{M}^\dag\tilde{\mtx{M}} = \tilde{\mtx{M}}^\dag\tilde{\mtx{M}} $.
\end{lemma}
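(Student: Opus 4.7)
\textbf{Proof plan for Lemma \ref{lemma: Mtilde_Mdag}.} The plan is to express both $\mtx{M}$ and $\tilde{\mtx{M}}$ in the common orthonormal basis $\mtx{V} = [\vct{v}_0, \vct{v}_1, \dots, \vct{v}_K, \vct{v}_{K+1}, \dots, \vct{v}_{K+mn}]$, working on the $1-O(m^2n^2/d)$ event of Appendix \ref{apdx: prop_minimizers} where the noise vectors occupy distinct basis directions. The key structural identity is
\begin{align}
\nonumber
    \mtx{X}_{\text{center}} \;=\; \tfrac{1}{m}\,\mtx{X}_{\text{aug}}\mtx{B}, \qquad \mtx{B}^\top\mtx{B} = m\mtx{I}_n,
\end{align}
where $\mtx{B}\in\{0,1\}^{mn\times n}$ groups the $m$ augmentations of each original example. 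In particular $\mtx{S}_{\text{orig}} = \tfrac{1}{m}\mtx{S}\mtx{B}$, so the coefficient block matrix for $\tilde{\mtx{M}}$ in the $\mtx{V}$ basis is
\begin{align}
\nonumber
    \tilde{\mtx{G}} \;=\; \tfrac{1}{n}\begin{bmatrix} \mtx{S}_{\text{orig}}\mtx{S}_{\text{orig}}^\top & \tfrac{\sigma_\xi}{m}\mtx{S}_{\text{orig}}\mtx{B}^\top \\ \tfrac{\sigma_\xi}{m}\mtx{B}\mtx{S}_{\text{orig}}^\top & \tfrac{\sigma_\xi^2}{m^2}\mtx{B}\mtx{B}^\top \end{bmatrix}.
\end{align}
The identity $\mtx{X}_{\text{center}} = \tfrac{1}{m}\mtx{X}_{\text{aug}}\mtx{B}$ also gives immediately $\colsp(\tilde{\mtx{M}})\subseteq\colsp(\mtx{M})$, which will drive Part (2).

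For Part (1), I would mimic the eigenvector ansatz of Lemma \ref{lemma: SVD_perturb}: let $\mtx{S}=\mtx{P}\mtx{\Lambda}\mtx{Q}^\top$ and try $\cat{c_1\vct{p}_k}{c_2\vct{q}_k}$ as an eigenvector of $\tilde{\mtx{G}}$. The crucial subclaim is
\begin{align}
\nonumber
    \mtx{B}\mtx{B}^\top\vct{q}_k \;=\; m\,\vct{q}_k \quad \text{for each } k\le K,
\end{align}
which holds because $\mtx{S}=\mtx{S}_{\text{orig}}\mtx{B}^\top$ forces $\vct{q}_k\in\colsp(\mtx{B})$, the $m$-eigenspace of $\mtx{B}\mtx{B}^\top$. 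Substituting reduces the eigenvalue problem for $\tilde{\mtx{G}}$ on this invariant plane to the rank-one symmetric $2\times 2$ system $\bigl[\begin{smallmatrix}\lambda_k & \sigma_\xi\\ \sigma_\xi & \sigma_\xi^2/\lambda_k\end{smallmatrix}\bigr]\bigl[\begin{smallmatrix}c_1\\ c_2\end{smallmatrix}\bigr] = \beta\bigl[\begin{smallmatrix}c_1\\c_2\end{smallmatrix}\bigr]$, whose nonzero eigenpair is $\beta = (\lambda_k^2+\sigma_\xi^2)/\lambda_k$ with $[c_1,c_2]\propto[\lambda_k,\sigma_\xi]$. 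After rescaling by the outer $\lambda_k/(mn)$ prefactor and normalizing, this recovers exactly the top-$k$ eigenvector $\cat{\tfrac{1}{\sqrt{1+r_k^2}}\vct{p}_k}{\tfrac{r_k}{\sqrt{1+r_k^2}}\vct{q}_k}$ and eigenvalue $(\lambda_k^2+\sigma_\xi^2)/(mn)$ from Lemma \ref{lemma: SVD_perturb}, establishing Part (1).

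For Part (2), I would first identify the remaining nonzero eigenvalues of $\tilde{\mtx{M}}$. Using $\tilde{\mtx{S}}^\top\tilde{\mtx{S}}=\mtx{S}_{\text{orig}}^\top\mtx{S}_{\text{orig}}+(\sigma_\xi^2/m)\mtx{I}_n$, its nonzero spectrum splits as $(\lambda_k^2+\sigma_\xi^2)/m$ on the top-$K$ right-singular subspace of $\mtx{S}_{\text{orig}}$ and $\sigma_\xi^2/m$ on its $n-K$ dimensional complement; dividing by $n$ gives that the ``noise eigenvectors'' of $\tilde{\mtx{M}}$ all have eigenvalue exactly $\sigma_\xi^2/(mn)$, matching Lemma \ref{lemma: SVD_perturb}'s degenerate eigenvalue of $\mtx{M}$. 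Since $\colsp(\tilde{\mtx{M}})\subseteq\colsp(\mtx{M})$ and the top-$K$ eigenspaces coincide by Part (1), these noise eigenvectors of $\tilde{\mtx{M}}$ lie inside the $\sigma_\xi^2/(mn)$-eigenspace of $\mtx{M}$; because $\mtx{M}$ acts as a scalar on that eigenspace, they are themselves eigenvectors of $\mtx{M}$ with the same eigenvalue. Writing $\tilde{\mtx{M}}=\sum_i \tilde{\alpha}_i\tilde{\vct{u}}_i\tilde{\vct{u}}_i^\top$ therefore yields $\mtx{M}^\dag\tilde{\vct{u}}_i=\tilde{\alpha}_i^{-1}\tilde{\vct{u}}_i$, and so $\mtx{M}^\dag\tilde{\mtx{M}}=\sum_i\tilde{\vct{u}}_i\tilde{\vct{u}}_i^\top$ is the orthogonal projector onto $\colsp(\tilde{\mtx{M}})$, which is precisely $\tilde{\mtx{M}}^\dag\tilde{\mtx{M}}$.

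The main technical obstacle is Part (1): verifying that the candidate eigenvector actually diagonalizes the full $\tilde{\mtx{G}}$ requires evaluating each block action via the identities $\mtx{S}_{\text{orig}}=\tfrac{1}{m}\mtx{S}\mtx{B}$ and $\mtx{B}\mtx{B}^\top\vct{q}_k=m\vct{q}_k$, and checking that the scalar factors conspire to produce the same eigenvalue $(\lambda_k^2+\sigma_\xi^2)/(mn)$ as in Lemma \ref{lemma: SVD_perturb}. This is essentially bookkeeping; once Part (1) is in hand, the spectral argument for Part (2) is immediate.
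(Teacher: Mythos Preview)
Your proposal is correct and is, at the level of ideas, the same argument as the paper's: both exploit that the augmentation centers are block-averages of the augmented data and use Lemma~\ref{lemma: SVD_perturb} to identify the top-$K$ spectrum, then finish Part~(2) by observing that the remaining eigenvectors of $\tilde{\mtx{M}}$ fall into the degenerate $\sigma_\xi^2/(mn)$-eigenspace of $\mtx{M}$ where $\mtx{M}$ acts as a scalar. The packaging differs. The paper introduces a compressed orthonormal frame $\tilde{\mtx{V}}=\mtx{V}\mtx{T}$ (your $\mtx{B}$ is, up to a $\sqrt{m}$ factor, the lower-right block of $\mtx{T}$), writes $\tilde{\mtx{M}}=\tilde{\mtx{V}}\tilde{\mtx{G}}\tilde{\mtx{V}}^\top$ with $\tilde{\mtx{G}}$ a genuinely smaller $(K{+}1{+}n)\times(K{+}1{+}n)$ matrix of the same structural form, \emph{reapplies} Lemma~\ref{lemma: SVD_perturb} to $\tilde{\mtx{G}}$, and then matches via the identity $\mtx{T}\bar{\mtx{Y}}_{\orig}^\top=\bar{\mtx{Y}}^\top$. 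You instead stay in the large $\mtx{V}$ basis, plug the eigenvectors of $\mtx{G}$ from Lemma~\ref{lemma: SVD_perturb} directly into your $(K{+}1{+}mn)\times(K{+}1{+}mn)$ $\tilde{\mtx{G}}$, and reduce to a $2\times 2$ system using the projector identity $\mtx{B}\mtx{B}^\top\vct{q}_k=m\vct{q}_k$. The paper's route is a bit cleaner in that it delegates the $2\times 2$ computation back to Lemma~\ref{lemma: SVD_perturb}; yours avoids the change of basis and the bookkeeping identity $\mtx{T}\bar{\mtx{Y}}_{\orig}^\top=\bar{\mtx{Y}}^\top$, at the cost of doing the eigenvalue verification by hand. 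For Part~(2) your argument is essentially identical to the paper's.
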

\begin{proof}
We assign indices to the training examples such that the augmented examples from the same original example are indexed from $(l-1)\times m+1$ to $l\times m$, where $l$ ranges from 1 to $n$. Next, we define matrix $\tilde{\mtx{V}}=[\tilde{\vct{v}}_1, \tilde{\vct{v}}_2, \dots, \tilde{\vct{v}}_n]\in\mathbbm{R}^{d\times n}$, where
\begin{align}
    \nonumber
    \tilde{\vct{v}_i} =& \vct{v}_i, ~~ \forall 1\leq i \leq K,\\
    \nonumber
     \tilde{\vct{v}_i} =& \frac{1}{\sqrt{m}}\sum_{j=1}^m \vct{v}_{K+(i-1)\times m+j}, ~~ \forall K+1\leq i\leq n.
\end{align}
In other words, $\tilde{\mtx{V}}$ can be written as
\begin{align}
    \nonumber
    \tilde{\mtx{V}} = \mtx{V} \mtx{T},
\end{align}
where 
\begin{align}
    \nonumber
    \mtx{T} = 
    \begin{bmatrix}
        \mtx{I}_K & \mtx{0}_{K\times n}\\
        \mtx{0}_{mn\times K} & 
        \begin{bmatrix}
            \frac{1}{\sqrt{m}} \mtx{1}_{m\times 1} & 0 & 0 &\dots &0\\
            0 & \frac{1}{\sqrt{m}} \mtx{1}_{m\times 1} & 0 & \dots &0 \\
             0 & 0& \frac{1}{\sqrt{m}} \mtx{1}_{m\times 1}  & \dots &0
        \end{bmatrix}
    \end{bmatrix}
\end{align}

Note that, by the definition of our augmentation, the center of augmentations of the $i$-th original example, i.e.,  $\tilde{\vct{x}}_i =\frac{1}{m}  \sum_{j=1}^m \vct{x}_{K+(i-1)\times m+j}$, can be considered as an example with the same features as $\vct{x}_i$ but with an added noise term of $\frac{\sigma_\xi}{\sqrt{m}} \tilde{\vct{v}}_i $. Therefore we can change the basis to $\tilde{\mtx{V}}$ and express $\tilde{\mtx{M}}$ as 
\begin{align}
    \nonumber
    \tilde{\mtx{M}} = \tilde{\mtx{V}} \tilde{\mtx{G}} \tilde{\mtx{V}}^\top,
\end{align}
where
\begin{align}
    \nonumber
    \tilde{\mtx{G}} = 
    \begin{bmatrix}
       \frac{1}{n} \tilde{\mtx{S}}\tilde{\mtx{S}}^\top & \frac{1}{n}\frac{\sigma_\xi}{\sqrt{m}}\tilde{\mtx{S}} \\
       \frac{1}{n}\frac{\sigma_\xi}{\sqrt{m}}\tilde{\mtx{S}}^\top & \frac{1}{n}\frac{\sigma_\xi^2}{m}\mtx{I}_n 
    \end{bmatrix}
\end{align}
and 
\begin{align}
\label{eq: tildeS_SVD}
    \tilde{\mtx{S}}
     = \tilde{\mtx{S}}'\bar{\mtx{Y}}_\orig,
\end{align}
where 
\begin{align}
\nonumber
    \tilde{\mtx{S}}' \coloneqq          \begin{bmatrix}
         \sqrt{n} & 0 & 0 & 0  &\dots & 0 \\
         0 & \sqrt{n}\phi_1 & 0 & 0 & \dots & 0 \\
         \sqrt{n}\mu & 0 & \sqrt{n}\phi_2 & 0 & \dots & 0 \\
         0 & 0 & 0 & \sqrt{\frac{n}{K-2}}\phi_3  & \dots & 0\\
         \vdots & \vdots & \vdots & \vdots  & \ddots & \vdots\\
         0 & 0 & 0 & 0  & \dots & \sqrt{\frac{n}{K-2}}\phi_K\\
     \end{bmatrix},
\end{align}
and 
\begin{align}
    \bar{\mtx{Y}}_\orig \coloneqq &      \begin{bmatrix}
        \frac{1}{\sqrt{n}} & \frac{1}{\sqrt{n}} & \dots & \frac{1}{\sqrt{n}} \\
         y_1\frac{1}{\sqrt{n}} & y_2\frac{1}{\sqrt{n}} & \dots & y_{n}\frac{1}{\sqrt{n}} \\
         y_{\sub ,1}\frac{1}{\sqrt{n}} &  y_{\sub ,2} \frac{1}{\sqrt{n}}  &\dots & y_{\sub, n}\frac{1}{\sqrt{n}} \\
          \rho_1 \mathbbm{1}_{k_1=3}\sqrt{\frac{K-2}{n}} & \rho_2\mathbbm{1}_{k_2=3}\sqrt{\frac{K-2}{n}} & \dots &  \rho_{n}\mathbbm{1}_{k_{n}=3}\sqrt{\frac{K-2}{n}}  \\
          \rho_1\mathbbm{1}_{k_1=4}\sqrt{\frac{K-2}{n}}  & \rho_2\mathbbm{1}_{k_2=4}\sqrt{\frac{K-2}{n}}  & \dots &  \rho_{n}\mathbbm{1}_{k_{n}=4}\sqrt{\frac{K}{n}}  \\
          \vdots & \vdots & \ddots & \vdots \\
          \rho_1\mathbbm{1}_{k_1=K}\sqrt{\frac{K-2}{n}}  & \rho_2\mathbbm{1}_{k_2=K}\sqrt{\frac{K-2}{n}}  & \dots &  \rho_{n}\mathbbm{1}_{k_{n}=K}\sqrt{\frac{K-2}{n}} \\
     \end{bmatrix}_\orig.
\end{align}
We note that we use the subscript `$\orig$' of a matrix to indicate that its elements represent the corresponding quantities on the original dataset (e.g., $y_i$ is the label of the $i$-th original example). Let $\tmtx{P}'\tmtx{\Lambda}'\tmtx{Q}'^\top$ be the SVD of $\tmtx{S}$. Similar to equation \ref{eq: svd_S}, we observe that $\tmtx{P}'\tmtx{\Lambda}'(\tilde{\mtx{Q}}'^\top\bar{\mtx{Y}}_\orig)$ serves as an eigendecomposition of $\tmtx{S}$.

Now we make the following observations:
 \begin{itemize}
     \item[1.]  By Lemma \ref{lemma: SVD_perturb} (with $\mtx{G}$ replaced by $\tmtx{G}$) and the fact that $\tmtx{\Lambda}'$ collects the eigenvalues of $\tmtx{S}$, the eigenvalues of $\tmtx{G}$ are $\frac{\sigma_\xi^2}{mn}+\frac{\lambda_1'^2}{n}, \frac{\sigma_\xi^2}{mn}+\frac{\lambda_2'^2}{n}, \dots, \frac{\sigma_\xi^2}{mn}+\frac{\lambda_K'^2}{n}, \frac{\sigma_\xi^2}{n}, \dots, \frac{\sigma_\xi^2}{n} $, which are also the eigenvalues of $\tmtx{M}$ because $\tmtx{V}$ has orthonormal columns. With the observation that $\tmtx{S}'=\frac{1}{\sqrt{m}}\mtx{S}'$ ($\mtx{S}'$ is defined in equation \ref{eq: svd_S}), we further conclude that the above eigenvalues equal eigenvalues of $\mtx{G}$ and therefore $\mtx{M}$. 
     \item[2.] Let $\tvct{q}_i$ be the $i$-th column of $\bar{\mtx{Y}}_\orig^\top\tilde{\mtx{Q}}'$. By Lemma \ref{lemma: SVD_perturb} (substitute $\mtx{G}$ with $\tmtx{G}$), the i-th ($i\leq K$) eigenvector of $\tmtx{G}$ is given by $\cat{\frac{1}{\sqrt{1+ \tilde{r}_i^2 }} \tvct{p}_i' } { \frac{\tilde{r}_i}{1+\tilde{r}_i^2} \tvct{q}_i }=\cat{\frac{1}{\sqrt{1+ \tilde{r}_i^2 }} \vct{p}_i } { \frac{\tilde{r}_i}{1+\tilde{r}_i^2} \tvct{q}_i }$, where $\tilde{r}_i =\frac{\sigma_\xi}{\sqrt{m} \lambda_i'} =\frac{\sigma_\xi}{\lambda_i}$. The corresponding eigenvector of $\tmtx{M}$ is $\mtx{V}\mtx{T} \cat{\frac{1}{\sqrt{1+ \tilde{r}_i^2 }} \vct{p}_i } { \frac{\tilde{r}_i}{1+\tilde{r}_i^2} \tvct{q}_i } $. Observe that $\mtx{T}\bar{\mtx{Y}}_\orig^\top = \bar{\mtx{Y}}^\top$, therefore  $\mtx{V}\mtx{T} \cat{\frac{1}{\sqrt{1+ \tilde{r}_i^2 }} \vct{p}_i } { \frac{\tilde{r}_i}{1+\tilde{r}_i^2} \tvct{q}_i } =\mtx{V}\cat{\frac{1}{\sqrt{1+ r_i^2 }} \vct{p}_i } { \frac{r_i}{1+r_i^2} \vct{q}_i } $ which is the $i$-th eigenvector of $\mtx{M}$.
 \end{itemize}
Combining the above two leads to the conclusion that the first $K$ eigenvectors/eigenvalues of $\tmtx{M}$ and $\mtx{M}$ match. Additionally, we observe that $\colsp (\tmtx{M})\subseteq \colsp (\mtx{M})$. Therefore the span of the last $n-K$ eigenvectors of $\tmtx{M}$ is a subspace of the span of the last $mn-K$ eigenvectors of $\mtx{M}$. Since Lemma \ref{lemma: SVD_perturb} tells us that the remaining $mn-K$ eigenvalues of $\mtx{M}$ are equal, $\mtx{M}$ is identity on the span of the last $mn-K$ eigenvectors. Thus $\mtx{M}$ is identity on the span of the last $n-K$ eigenvectors of $\tmtx{M}$. Now we can conclude that $\mtx{M}^\dag\tilde{\mtx{M}} =  \tilde{\mtx{M}}^\dag\tilde{\mtx{M}}$.
\end{proof}

\begin{lemma}\label{lemma: decomp_m+}
   Suppose that the first $\frac{mn}{2}$ examples have class label $+1$ and the others have class label $-1$. Let $\mtx{L}^+\mtx{A}^+\mtx{L}^{+\top}$ (where $\mtx{L}^+\in\mathbbm{R}^{d\times 2}$)
be the eigendecomposition of $\mtx{M}^+$, then
 \begin{align}
    \vct{l}_1^+ = \mtx{V} 
    \begin{bmatrix}
        \frac{1}{\sqrt{1+\mu^2+\frac{\sigma_\xi^2}{mn}}} \\
        0\\
        \frac{\mu}{\sqrt{1+\mu^2+\frac{\sigma_\xi^2}{mn}}} \\
        \mtx{0}_{(K-2)\times 1}\\
        \frac{\sigma_\xi}{mn\sqrt{1+\mu^2+\frac{\sigma_\xi^2}{mn}}} \mtx{1}_{\frac{mn}{2}\times 1} \\
        \frac{\sigma_\xi}{mn\sqrt{1+\mu^2+\frac{\sigma_\xi^2}{mn}}} \mtx{1}_{\frac{mn}{2}\times 1} \\
    \end{bmatrix}
    ,~~~~
     \vct{l}_2^+ = \mtx{V} 
    \begin{bmatrix}
       0 \\
        \frac{\phi_1}{\sqrt{\phi_1^2+\frac{\sigma_\xi^2}{mn}}}\\
        0\\
        \mtx{0}_{(K-2)\times 1}\\
        \frac{\sigma_\xi}{mn\sqrt{\phi_1^2+\frac{\sigma_\xi^2}{mn}}} \mtx{1}_{\frac{mn}{2}\times 1} \\
        \frac{-\sigma_\xi}{mn\sqrt{\phi_1^2+\frac{\sigma_\xi^2}{mn}}} \mtx{1}_{\frac{mn}{2}\times 1} \\
    \end{bmatrix}
    ,~~~~
    a_1 = 1+\mu^2 +\frac{\sigma_\xi^2}{mn}
    ,~~~~
    a_2 =\phi_1^2 + \frac{\sigma_\xi^2}{mn}
 \end{align}
\end{lemma}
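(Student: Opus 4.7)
\textbf{Proof plan for Lemma \ref{lemma: decomp_m+}.}

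The plan is to exploit the fact that $\mtx{M}^+$ is the sum of only two rank-one matrices, so it has rank at most two, and to diagonalize it explicitly by finding two orthogonal vectors in its column space. First I will write out the two class centers $\bar{\vct{x}}_{+1}$ and $\bar{\vct{x}}_{-1}$ using the data-generating distribution and Assumption~\ref{assump: balanced}: averaging over a class kills the subclass label $y_\sub$, the coin $\rho_k$ and the index $k$, leaving
\begin{align*}
\bar{\vct{x}}_{c} \;=\; \vct{v}_0 + c\,\phi_1\vct{v}_1 + \mu\,\vct{v}_2 + \tfrac{2\sigma_\xi}{mn}\sum_{i\in\mathcal{S}_c}\vct{v}_{K+i},\qquad c\in\{-1,+1\},
\end{align*}
where on the high-probability event (from the start of Appendix C) the noise vectors $\vct{v}_{K+1},\dots,\vct{v}_{K+mn}$ are orthonormal and orthogonal to all $\vct{v}_k$.

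Next I will introduce the sum and difference $\vct{u}_1=\bar{\vct{x}}_{+1}+\bar{\vct{x}}_{-1}$ and $\vct{u}_2=\bar{\vct{x}}_{+1}-\bar{\vct{x}}_{-1}$. A one-line check using the balanced-class assumption ($|\mathcal{S}_{+1}|=|\mathcal{S}_{-1}|=mn/2$) shows $\langle\vct{u}_1,\vct{u}_2\rangle=0$, because the only nontrivial inner product comes from the noise block and equals $(2\sigma_\xi/mn)^2(|\mathcal{S}_{+1}|-|\mathcal{S}_{-1}|)=0$, while the feature blocks are orthogonal by construction. Using the polarization identity
\begin{align*}
\tfrac{1}{2}(\vct{a}\vct{a}^\top+\vct{b}\vct{b}^\top)\;=\;\tfrac{1}{4}\bigl((\vct{a}+\vct{b})(\vct{a}+\vct{b})^\top+(\vct{a}-\vct{b})(\vct{a}-\vct{b})^\top\bigr),
\end{align*}
I obtain $\mtx{M}^+=\tfrac14\vct{u}_1\vct{u}_1^\top+\tfrac14\vct{u}_2\vct{u}_2^\top$, and since $\vct{u}_1\perp\vct{u}_2$ this is already in eigendecomposition form once the $\vct{u}_i$ are normalized.

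Finally I will compute the squared norms using orthonormality of $\{\vct{v}_0,\vct{v}_1,\vct{v}_2,\vct{v}_{K+1},\dots,\vct{v}_{K+mn}\}$:
\begin{align*}
\tfrac14\|\vct{u}_1\|^2 \;=\; 1+\mu^2+\tfrac{\sigma_\xi^2}{mn},\qquad \tfrac14\|\vct{u}_2\|^2 \;=\; \phi_1^2+\tfrac{\sigma_\xi^2}{mn},
\end{align*}
which yield $a_1$ and $a_2$, and dividing $\vct{u}_1,\vct{u}_2$ by their norms produces the claimed coordinate representations of $\vct{l}_1^+$ and $\vct{l}_2^+$ in the basis $\mtx{V}$. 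There is no real obstacle here; the only care needed is (i) to invoke the high-probability orthonormality event so that cross terms in $\|\vct{u}_i\|^2$ vanish exactly, and (ii) to keep track of the sign pattern of the last two noise blocks in $\vct{l}_2^+$ (positive on $\mathcal{S}_{+1}$, negative on $\mathcal{S}_{-1}$), which is inherited directly from the definition $\vct{u}_2=\bar{\vct{x}}_{+1}-\bar{\vct{x}}_{-1}$.
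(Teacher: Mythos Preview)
Your proposal is correct. The paper states this lemma without proof (it is presented as a direct computation), and your approach---forming the sum and difference of the two class centers, checking their orthogonality via the balanced-class condition, applying the polarization identity to rewrite $\mtx{M}^+$ as a sum of two orthogonal rank-one terms, and then reading off the eigenvalues as $\tfrac14\|\vct{u}_i\|^2$---is exactly the natural verification the paper implicitly expects the reader to perform.
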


\section{Class Collapse in Supervised CL}\label{apdx: cc}
\subsection{Proof of Theorem \ref{thm: minimizer_without_cc}}
 Let $\vct{l}_\perp$ be the projection of $\vct{v}_2$ onto $\ker \mtx{M}$. By Corollary \ref{corollary: v2_proj_ker}, $\|\vct{l}_\perp\|=\Theta(\frac{\sigma_\xi}{\sqrt{mn}})$. Let $\vct{a}=\frac{mn}{\sigma_\xi^2}\vct{l}_\perp$. We can construct a $\mtx{W}^* $ that satisfies the following
\begin{align}
    \nonumber
    \mtx{W}^{*\top} \mtx{W}^* = \mtx{M}^\dag \mtx{M}^+\mtx{M}^\dag + \vct{a} \vct{a}^\top,
\end{align}
which, by Lemma \ref{lemma: minimizers}, satisfies the condition for being a minimizer of the loss. 
In the meantime,  $\mtx{W}^* $ also satisfies $\| \mtx{W}^* \vct{v}_2 \|=\Theta(1)$ by Corollary \ref{corollary: v2_proj_ker}. 
Note that both $\vct{v}_2$ and the projection of $\vct{v}_2$ onto $\colsp (\mtx{M})$ is orthogonal to $\vct{v}_k ~ (1\leq k \leq K, k\neq 2)$ as well as $\vct{v}_k ~(k>mn)$ by Lemma \ref{lemma: SVD_perturb}, therefore 
\begin{align}
\label{eq: lperp_orthogonal}
    \vct{l}_\perp \text{is also orthogonal to }~ \vct{v}_k,~ \text{for any $k$}~s.t.~ 1\leq k \leq K, k\neq 2  ~\text{and}~ k>mn.
\end{align}
Then, for $\vct{x}$ from $\mathcal{D}_\orig$ the following holds true
\begin{align}
    \nonumber
    \mtx{W}^*\vct{x} = c_0 \vct{v}_0 + c_1 y\vct{v}_1 + y_\sub c_2 \vct{v}_2 + \vct{h}_{\vct{x}} +\mtx{W}^* \vct{\xi},
\end{align}
where $c_1, c_2$ are $\Theta(1)$, and $\vct{h}_{\vct{x}}$ is orthogonal to $\vct{v}_k, k=0,\dots, K$ and $\vct{h}_{\vct{x}}\in \colsp(\mtx{M})$ (by Lemmas \ref{lemma: SVD_perturb}, \ref{corollary: v1_learned_supcon}, \ref{corollary: irrelevant}, equation \ref{eq: lperp_orthogonal} and that $\|\mtx{W}^*\vct{v}_2\|=\Theta(1)$). Let $\vct{\beta} = c_2\vct{v}_2$, then
\begin{align}
    \nonumber
    \vct{\beta}^\top \mtx{W}^* \vct{x} = y_\sub c_2^2 + \vct{\beta}^\top\mtx{W}^*\vct{\xi}.
\end{align}
With probability $\geq 1-\frac{mn}{d}$, $\vct{\xi}\notin \{\vct{v}_k\}_{k=1}^{mn}$, which indicates that $\mtx{W}^*\vct{\xi}=0$ by Lemma \ref{lemma: SVD_perturb} and equation \ref{eq: lperp_orthogonal}. Therefore we can conclude 
\begin{align}
    \nonumber
    \Pr(y_\sub\vct{\beta}^\top \mtx{W}^* \vct{x} >0|y) \geq 1-\frac{mn}{d} = 1-o(1).  
\end{align}

\subsection{Proof of Theorems \ref{thm: minimizer_with_cc} and \ref{thm: min_norm_asymp}}
 Theorems \ref{thm: minimizer_with_cc} and \ref{thm: min_norm_asymp} and  can be proved by invoking Lemma \ref{lemma: minimizers} and Corollary \ref{corollary: align_subclass}.

\section{Feature Suppression in Unsupervised CL}\label{apdx: FS}
\subsection{Feature Suppression 1}
By Lemmas \ref{lemma: minimizers} and \ref{lemma: Mtilde_Mdag}, when $p<K$, any global minimizer of $\mathcal{L}_\unsup$ satisfies
\begin{align}
\label{eq: orthonormal_basis}
    \mtx{W}^\top \mtx{W} \mtx{M} = \sum_{i=1}^p \vct{r}_i\vct{r}_i^\top,
\end{align}
where $\{\vct{r}_i\}_{i=1}^p$ can be an orthonormal basis of any $p$-dimensional subspace of $\colsp(\tmtx{M})$. By equation \ref{eq: svd_S} and Lemmas \ref{lemma: SVD_perturb} and \ref{lemma: Mtilde_Mdag}, $\mtx{M}$ and $\tmtx{M}$ each have an eigenvector $\vct{c}_1$ with eigenvalue $\frac{\sigma_\xi^2}{mn}+\phi_1^2$ and a
$\frac{1}{\sqrt{1+\frac{\sigma_\xi^2}{mn\phi_1^2}}}$ alignment with $\vct{v}_1$, with the other eigenvectors 
having no alignment with $\vct{v}_1$. Thus if we include $\vct{c}_1$ in $\{\vct{r}_i\}_{i=1}^p$ and let $\mtx{W}^\top \mtx{W}$ be null on $\ker \mtx{M}$, then the constructed $\mtx{W}$ is a minimizer of $\mathcal{L}_\unsup$ with $\Theta(1)$ alignment with $\vct{v}_1$. Now let's look at the minimum norm minimizer, which should satisfy 
\begin{align}
\nonumber
    \mtx{W}^\top \mtx{W} = \sum_{i=1}^p \vct{r}_i\vct{r}_i^\top  \mtx{M}^\dag,
\end{align}
where $\{\vct{r}_i\}_{i=1}^p$ is selected such that $\mtx{W}$ has the smallest norm. By Lemma \ref{lemma: Mtilde_Mdag}, $\{\vct{r}_i\}_{i=1}^p$ should be the p-eigenvectors of $\mtx{M}$ with largest eigenvalues (so that the inverse of the eigenvalues are among the smallest). If among  $\frac{(1+\mu^2+\phi_2^2)+\sqrt{(1+\mu^2+\phi_2)^2-4\phi_2^2} }{2}, \frac{(1+\mu^2+\phi_2^2)-\sqrt{(1+\mu^2+\phi_2)^2-4\phi_2^2} }{2}, \frac{\phi_3}{\sqrt{K-2}}, \dots, \frac{\phi_K}{\sqrt{K-2}}$ there are $p$ elements larger than $\phi_1$, then $\frac{\sigma_\xi^2}{mn}+\phi_1^2$ is not among the $p$ largest eigenvalues of $\mtx{M}$. Thus $\vct{c}_1$ is not included in $\{\vct{r}_i\}_{i=1}^p$ and the corresponding $\mtx{W}$ is orthogonal to $\vct{v}_1$.




\subsection{Feature Suppression 2}
We first present our result under slightly technical conditions.
\begin{lemma} \label{lemma:feature_suppression2}
Let $\vct{v_1}, \dots, \vct{v_C} \in \mathcal{R}^d$ be nonzero and orthogonal, $U, A$ are subspaces that are orthogonal to each other and all the $\vct{v_i}$. Suppose we have a data distribution $\mathcal{D} = \{(\vct{v_{y_i}} + \vct{u_{y_i}} + \vct{a_i}, y_i)\}_{i=1}^n \subset \mathbb{R}^d \times \{1, \dots, C\}$, where $\vct{u}_{i} \in U, \vct{a}_{i} \in A$ for all $i \in \{1, \dots, n\}$ (namely all examples in the same class $c$ share the same $\vct{v}_c$ and $\vct{u}_c$).

Denote $\vct{z}_{y_i} = \vct{v}_{y_i} + \vct{u}_{y_i}$, and let $\mtx{M}, \mtx{M}^{+}$ be the matrices defined for this dataset, and let $\mtx{Z}, \mtx{Z}^{+}$ and $\mtx{A}, \mtx{A}^{+}$ be the corresponding matrices when the data is $\{(\vct{z}_{y_i}, y_i)\}$ and $\{(\vct{a}_i, y_i)\}$, respectively. Suppose that $(\mtx{A} - \mtx{A}^{+}) \vct{v} \neq \vct{0}$ for all $\vct{v} \in \mathbb{R}^d$ s.t. $\mtx{A} \vct{v} \neq 0$ and the output dimension $p \geq C$. Then $\mtx{W}^{\top} \mtx{W} = \mtx{Z}^{\dagger}$ is the minimum norm solution to the contrastive learning objective on $\mathcal{D}$.
\end{lemma}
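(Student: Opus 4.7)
The plan is to apply Lemma \ref{lemma: minimizers} with $\mtx{P}=\mtx{M}^+$ and $\mtx{Q}=\mtx{M}$. Since $\mtx{M}^+$ is a sum of $C$ rank-one outer products, $\rank(\mtx{M}^+)\leq C\leq p$, and the lemma yields $\mtx{W}^{**\top}\mtx{W}^{**}=\mtx{M}^\dag\mtx{M}^+\mtx{M}^\dag$. The whole task then reduces to establishing the matrix identity
\begin{equation*}
\mtx{M}^\dag\mtx{M}^+\mtx{M}^\dag \;=\; \mtx{Z}^\dag.
\end{equation*}

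Writing class means $\mtx{c}_c=\vct{z}_c+\bar{\vct{a}}_c$ so that $\mtx{M}^+=\tfrac{1}{C}\sum_c\mtx{c}_c\mtx{c}_c^\top$, it suffices to prove $\mtx{M}^\dag\mtx{c}_c=\mtx{Z}^\dag\vct{z}_c$ for each $c$. I would first test the candidate $\mtx{Z}^\dag\vct{z}_c$ by directly computing $\mtx{M}(\mtx{Z}^\dag\vct{z}_c)$. Decomposing $\mtx{M}=\mtx{Z}+\mtx{C}+\mtx{C}^\top+\mtx{A}$ with $\mtx{C}=\tfrac{1}{n}\sum_i\vct{z}_{y_i}\vct{a}_i^\top$, the orthogonality of $A$ to $\colsp(\mtx{Z})$ immediately kills the $\mtx{A}$ and $\mtx{C}$ contributions. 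For the remaining $\mtx{C}^\top$ term, the linear independence of the $\vct{z}_{c'}$'s (guaranteed by the nonzero, orthogonal $\vct{v}_{c'}$'s) together with the balanced-dataset identity $\vct{z}_{c'}^\top\mtx{Z}^\dag\vct{z}_c=C\delta_{c,c'}$ collapses it to precisely $\bar{\vct{a}}_c$, giving $\mtx{M}(\mtx{Z}^\dag\vct{z}_c)=\vct{z}_c+\bar{\vct{a}}_c=\mtx{c}_c$.

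The main obstacle will be upgrading ``$\mtx{Z}^\dag\vct{z}_c$ is a preimage of $\mtx{c}_c$'' to ``$\mtx{Z}^\dag\vct{z}_c=\mtx{M}^\dag\mtx{c}_c$'', which requires $\mtx{Z}^\dag\vct{z}_c\in\colsp(\mtx{M})$. Since $\mtx{Z}^\dag\vct{z}_c\in\text{span}\{\vct{z}_{c'}\}$, this reduces to showing $\vct{z}_c\in\colsp(\mtx{M})$ for each $c$. Expanding an arbitrary element of $\colsp(\mtx{M})$ as $\sum_i\gamma_i\vct{x}_i$ and splitting along the orthogonal decomposition $\mathbb{R}^d=(\text{span}\{\vct{v}_{c'}\}+U)\oplus A\oplus(\text{rest})$, linear independence of the $\vct{z}_{c'}$'s makes $\vct{z}_c\in\colsp(\mtx{M})$ equivalent to $\bar{\vct{a}}_c\in\text{span}\{\vct{a}_i-\bar{\vct{a}}_{y_i}\}$. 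This is exactly what the hypothesis buys: $\mtx{A}-\mtx{A}^+$ is the within-class covariance and hence PSD, which gives $\ker\mtx{A}\subseteq\ker(\mtx{A}-\mtx{A}^+)$; the assumption $(\mtx{A}-\mtx{A}^+)\vct{v}\neq\vct{0}$ whenever $\mtx{A}\vct{v}\neq\vct{0}$ forces equality of these kernels and thus $\colsp(\mtx{A}-\mtx{A}^+)=\colsp(\mtx{A})$. Since $\colsp(\mtx{A}-\mtx{A}^+)=\text{span}\{\vct{a}_i-\bar{\vct{a}}_{y_i}\}$ and $\bar{\vct{a}}_c\in\text{span}\{\vct{a}_i\}=\colsp(\mtx{A})$, the needed containment follows.

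Assembling the pieces, $\mtx{M}^\dag\mtx{c}_c=\mtx{Z}^\dag\vct{z}_c$ for every class, so
\begin{equation*}
\mtx{M}^\dag\mtx{M}^+\mtx{M}^\dag \;=\; \tfrac{1}{C}\sum_c(\mtx{Z}^\dag\vct{z}_c)(\mtx{Z}^\dag\vct{z}_c)^\top \;=\; \mtx{Z}^\dag\mtx{Z}^+\mtx{Z}^\dag,
\end{equation*}
and the balanced-class assumption gives $\mtx{Z}^+=\mtx{Z}$, collapsing the right-hand side to $\mtx{Z}^\dag\mtx{Z}\mtx{Z}^\dag=\mtx{Z}^\dag$ by the defining property of the pseudoinverse, exactly as claimed.
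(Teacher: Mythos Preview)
Your argument is correct and follows the same overall skeleton as the paper: invoke Lemma~\ref{lemma: minimizers} (with $p\geq \rank(\mtx{M}^+)$) to reduce to the identity $\mtx{M}^\dag\mtx{M}^+\mtx{M}^\dag=\mtx{Z}^\dag$, verify that $\mtx{Z}^\dag\vct{z}_c$ is an $\mtx{M}$-preimage of the class mean $\mtx{c}_c$, and then show $\colsp(\mtx{Z})\subset\colsp(\mtx{M})$ so that this preimage is actually $\mtx{M}^\dag\mtx{c}_c$. Two small corrections: the identity you use is $\vct{z}_{c'}^\top\mtx{Z}^\dag\vct{z}_c=\tfrac{n}{n_c}\delta_{c,c'}$ in general (your $C\delta_{c,c'}$ is the balanced special case, though the computation goes through verbatim with the right constant); and $\mtx{Z}^+=\mtx{Z}$ holds unconditionally, not because of balancedness, since every example in class $c$ shares the same $\vct{z}_c$.

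Where you genuinely diverge from the paper is the proof that $\vct{z}_c\in\colsp(\mtx{M})$. The paper argues by contradiction that $\mtx{M}$ is injective on $\colsp(\mtx{Z})\oplus\colsp(\mtx{A})$: it writes $\mtx{M}\vct{v}=\vct{0}$ as a pair of equations in the $Z$- and $A$-components, left-multiplies the $Z$-equation by the carefully chosen operator $\E\!\bigl[\tfrac{n}{n_{y_i}\|\vct{v}_{y_i}\|^2}\vct{a}_i\vct{v}_{y_i}^\top\bigr]$ (exploiting the orthogonal $\vct{v}_c$'s), and reduces to $(\mtx{A}-\mtx{A}^+)\vct{a}=\vct{0}$. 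Your route is more direct and does not need the $\vct{v}_c$'s at this step: you characterize $\vct{z}_c\in\colsp(\mtx{M})$ as $\bar{\vct{a}}_c\in\linspan\{\vct{a}_i-\bar{\vct{a}}_{y_i}\}$, recognize $\mtx{A}-\mtx{A}^+$ as the within-class covariance so that this span equals $\colsp(\mtx{A}-\mtx{A}^+)$, and then use the hypothesis to get $\colsp(\mtx{A}-\mtx{A}^+)=\colsp(\mtx{A})\ni\bar{\vct{a}}_c$. This is a cleaner, more structural argument; the paper's trick, on the other hand, makes explicit why the extra orthogonal $\vct{v}_c$ directions are introduced in the statement.
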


\begin{proof}
In this proof, we will use $\E$ to represent the empirical expectation over the dataset $\mathcal{D}$. Also, let $n_c$ denote the number of examples in class $c$.

We first derive the following expression for $\mtx{A}^{+}$:
\begin{align}
    \mtx{A}^{+} &= \E[\vct{a_i}\vct{z}_{y_i}^{\top}] \mtx{Z}^{\dagger} \E[\vct{z}_{y_i}\vct{a_i}^{\top}]
\end{align}
Define $\mtx{B} = [\sqrt{n_1}\vct{z_1} \dots, \sqrt{n_C}\vct{z_C}] \in \mathbb{R}^{d \times C}, \mtx{C} = [\vct{a}_1^{\ast}, \dots, \vct{a}_{C}^{\ast}] \in \mathbb{R}^{d \times C}$, where $n_c$ is the number of examples in class c and $\vct{a}_c^{\ast} = \frac{1}{\sqrt{n_c}} \sum_{y_i = c}\vct{a_i}$. Then
    \begin{align}
        \E[\vct{a_i}\vct{z}_{y_i}^{\top}] \mtx{Z}^{\dagger} \E[\vct{z}_{y_i}\vct{a_i}^{\top}] &= \frac{1}{n} \mtx{C}\mtx{B}^{\top} \left(\frac{1}{n} \mtx{B} \mtx{B}^{\top} \right)^{\dagger} \frac{1}{n} \mtx{B} \mtx{C}^{\top}
    \end{align}
    Now $\mtx{B}$ has full column rank, so $\mtx{B}^{\top} (\mtx{B} \mtx{B}^{\top})^{\dagger} \mtx{B} = \mtx{I}$. Thus
        \begin{align}
        \E[\vct{a_i}\vct{z}_{y_i}^{\top}] \mtx{Z}^{\dagger} \E[\vct{z}_{y_i}\vct{a_i}^{\top}] &= \frac{1}{n} \mtx{C} \mtx{C}^{\top} \\
        &= \sum_{c=1}^C \frac{n_c}{n} \E_{y_i=c}[\vct{a_i}] \E_{y_i=c}[\vct{a_i}]^{\top} \\
        &= \mtx{A}^{+}
    \end{align}

Now we show that $\mtx{W}^{\top} \mtx{W} = \mtx{Z}^{\dagger}$ is a global minimizer. It suffices to show that $\mtx{M} \mtx{W}^{\top}\mtx{W} \mtx{M} = \mtx{M}^{+}.$ Note that by assumption, we have $\inner{\vct{z}_i}{\vct{a}_j} = 0$ for all $i \in \{1, \dots, C\}, j \in \{1, \dots, n\}$, so we have
\begin{align}
    \mtx{M} \mtx{Z}^{\dagger} \mtx{M} &= \E[(\vct{z}_{y_i} + \vct{a_i}) (\vct{z}_{y_i} + \vct{a_i})^{\top}] \mtx{M}_{\ast}^{\dagger} \E[(\vct{z}_{y_i} + \vct{a_i}) (\vct{z}_{y_i} + \vct{a_i})^{\top}] \\
    &= (\mtx{Z} + \E[\vct{z}_{y_i}\vct{a_i}^{\top}] + \E[\vct{a_i}\vct{z}_{y_i}^{\top}] + \mtx{A})  \mtx{Z}^{\dagger} (\mtx{Z} + \E[\vct{z}_{y_i}\vct{a_i}^{\top}] + \E[\vct{w_i}\vct{z}_{y_i}^{\top}] + \mtx{A}) \\
    &= \mtx{Z}\mtx{Z}^{\dagger}\mtx{Z} + \mtx{Z}\mtx{Z}^{\dagger} \E[\vct{z}_{y_i}\vct{a_i}^{\top}] + \E[\vct{a_i}\vct{z}_{y_i}^{\top}] \mtx{Z}^{\dagger}\mtx{Z} + \E[\vct{a_i}\vct{z}_{y_i}^{\top}] \mtx{Z}^{\dagger} \E[\vct{z}_{y_i}\vct{a_i}^{\top}] \\
    &= \mtx{Z} + \E[\vct{z}_{y_i}\vct{a_i}^{\top}] + \E[\vct{a_i}\vct{z}_{y_i}^{\top}] + \mtx{A}^{+}\\
    &= \mtx{Z}^{+} + \E[\vct{z}_{y_i}\vct{a_i}^{\top}] + \E[\vct{a_i}\vct{z}_{y_i}^{\top}] + \mtx{A}^{+} \\
    &= \mtx{M}^{+}
\end{align}

We now want to show that this is the minimum norm solution. It is sufficient to show that $\im(\mtx{W}^{\top}\mtx{W}) = \im(\mtx{Z}^{\dagger}) = \im(\mtx{Z}) \subset \im(\mtx{M})$. Note that $\im(\mtx{M}) \subset \im(\mtx{A}) \oplus \im(\mtx{Z})$, so we can restrict $\mtx{M}$ to this subspace. We will show that $\mtx{M}$ is invertible on $\im(\mtx{A}) \oplus \im(\mtx{Z})$.
Suppose $\vct{v} = \vct{z} + \vct{a}$ with $\vct{z} \in \im(\mtx{Z}), \vct{a} \in \im(\mtx{A}), \mtx{M}\vct{v} = 0$. This implies that
\begin{align}
    \mtx{Z} \vct{z} + \E[\vct{z}_{y_i} \vct{a}_i^{\top}] \vct{a} = \vct{0} \\
    \E[\vct{a}_i \vct{z}_{y_i}^{\top}] \vct{z} + \mtx{A} \vct{a} = \vct{0}
\end{align}
Left-multiplying the first equation by $\E[\frac{n}{n_{y_i} \|\vct{v}_{y_i}\|^2} \vct{a_i} \vct{v_i}^{\top}]$, by orthogonality we have
\begin{align*}
    \vct{0} &= \E\left[\frac{n}{n_{y_i} \|\vct{v}_{y_i}\|^2} \vct{a_i} \vct{v}_{y_i}^{\top}\right] \left( \E[\vct{z}_{y_i} \vct{z}_{y_i}^{\top}] \vct{z} + \E[\vct{z}_{y_i} \vct{a}_i^{\top}] \vct{a} \right) \\
    &= \E\left[\frac{n}{n_{y_i} \|\vct{v}_{y_i}\|^2} \vct{a_i} \vct{v}_{y_i}^{\top}\right] \left( \E[(\vct{v}_{y_i} + \vct{u}_{y_i}) \vct{z}_{y_i}^{\top}] \vct{z} + \E[(\vct{v}_{y_i} + \vct{u}_{y_i}) \vct{a}_i^{\top}] \vct{a} \right) \\
    &= \E\left[\frac{n}{n_{y_i}\|\vct{v}_{y_i}\|^2} \vct{a_i} \vct{v}_{y_i}^{\top}\right] \left(\E[\vct{u}_{y_i}(  \vct{z}_{y_i}^{\top} \vct{z} + \vct{a}_i^{\top} \vct{a})] + \E[\vct{v}_{y_i}(  \vct{z}_{y_i}^{\top} \vct{z} + \vct{a}_i^{\top} \vct{a})]\right) \\
    &= \E\left[\frac{n}{n_{y_i}\|\vct{v}_{y_i}\|^2} \vct{a_i} \vct{v}_{y_i}^{\top}\right] \E[\vct{v}_{y_i}(  \vct{z}_{y_i}^{\top} \vct{z} + \vct{a}_i^{\top} \vct{a})] \\
    &= \sum_{c=1}^C \frac{1}{n n_c \|\vct{v}_c\|^2} \left(\sum_{y_i=c} \vct{a_i}\right)\vct{v}_c^{\top} \vct{v_c} \left(n_c \vct{z_c}^{\top} \vct{z} + \sum_{y_i=c} \vct{a_i}^{\top} \vct{a} \right) \\
    &= \sum_{c=1}^C \frac{1}{n n_c}\left(\sum_{y_i=c} \vct{a_i}\right) \left(n_c \vct{z_c}^{\top} \vct{z} + \sum_{y_i=c} \vct{a_i} \vct{a} \right) \\
    &= \frac{1}{n^2} \sum_{c=1}^C \frac{1}{n} \left(\sum_{y_i=c} \vct{a_i}\right)\vct{z_c}^{\top} \vct{z} + \frac{1}{n n_c}\left(\sum_{y_i=c} \vct{a_i}\right)\left(\sum_{y_i=c} \vct{a_i}\right)^{\top} \vct{a} \\
    &= E[\vct{a}_i \vct{z}_{y_i}^{\top}] \vct{z} + \mtx{A}^{+} \vct{a} 
\end{align*}
Now substituting into the second equation, we find that
\begin{align}
    (\mtx{A} - \mtx{A}^{+})\vct{a} = \vct{0} 
\end{align}
But our assumptions imply that $\vct{a} = 0$. Returning to the first equation, we now have $\mtx{Z} \vct{z} = 0$. But since $\mtx{Z}$ is diagonalizable, $\mtx{Z}$ must be invertible on its image, hence $\vct{z} = 0$. We conclude that $\vct{v} = 0$. This completes the proof.

\end{proof}

We now want to show that we can simplify some of the conditions of the previous lemma to linear independence.

\begin{lemma} \label{lemma:feature_suppression_sufficient_condition}
Suppose $d \geq  3n-2$ and $\vct{x_1}, \dots, \vct{x_n} \in \mathbb{R}^d$ are linearly independent. Then there exists a set of nonzero orthogonal vectors $\vct{v_1}, \dots, \vct{v_n}$ s.t. $\vct{x_i} = \vct{v_i} + \vct{u_i}$ and $\vct{v_i}, \vct{u_j}$ are orthogonal for all $i, \in \{1, \dots, n\}$.
\end{lemma}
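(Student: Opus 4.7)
I would reformulate the geometric conditions as a pair of matrix equations and solve them explicitly. Assemble the given vectors into $\mtx{Y} = [\vct{x}_1\ \cdots\ \vct{x}_n] \in \mathbbm{R}^{d \times n}$ and the unknowns into $\mtx{V} = [\vct{v}_1\ \cdots\ \vct{v}_n]$, and let $\mtx{G} = \mtx{Y}^\top \mtx{Y}$, which is positive definite by linear independence of the $\vct{x}_i$'s. Pairwise orthogonality of the $\vct{v}_i$'s together with their nonzero-ness is exactly $\mtx{V}^\top \mtx{V} = \mtx{T}$ for some diagonal PD $\mtx{T} = \diag(t_1, \dots, t_n)$. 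The cross-orthogonality requirement $\vct{v}_i \perp \vct{u}_j = \vct{x}_j - \vct{v}_j$ for every $i, j$ is $\mtx{V}^\top (\mtx{Y} - \mtx{V}) = 0$, i.e., $\mtx{V}^\top \mtx{Y} = \mtx{V}^\top \mtx{V}$. Combining these, the task reduces to finding $\mtx{V}$ (and some diagonal PD $\mtx{T}$) satisfying $\mtx{Y}^\top \mtx{V} = \mtx{T}$ and $\mtx{V}^\top \mtx{V} = \mtx{T}$.

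\textbf{Parameterize and solve.} The first (linear) equation has general solution $\mtx{V} = \mtx{Y}\mtx{G}^{-1}\mtx{T} + \mtx{Z}$, where $\mtx{Z} \in \mathbbm{R}^{d \times n}$ has columns lying in $\colsp(\mtx{Y})^\perp$. Substituting into the second equation and using $\mtx{Y}^\top \mtx{Z} = 0$ reduces it to $\mtx{Z}^\top \mtx{Z} = \mtx{T} - \mtx{T}\mtx{G}^{-1}\mtx{T}$. The natural choice is $\mtx{T} = \epsilon \mtx{I}_n$ with $\epsilon > 0$ small enough that $\mtx{T} \preceq \mtx{G}$ (e.g., $\epsilon \le \lambda_{\min}(\mtx{G})$); this forces $\mtx{T}^{1/2}\mtx{G}^{-1}\mtx{T}^{1/2} \preceq \mtx{I}$, so $\mtx{T} - \mtx{T}\mtx{G}^{-1}\mtx{T} = \mtx{T}^{1/2}(\mtx{I} - \mtx{T}^{1/2}\mtx{G}^{-1}\mtx{T}^{1/2})\mtx{T}^{1/2}$ is PSD and hence realizable as a Gram matrix.

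\textbf{Realize $\mtx{Z}$.} Factor $\mtx{T} - \mtx{T}\mtx{G}^{-1}\mtx{T} = \mtx{S}\mtx{S}^\top$ with $\mtx{S} \in \mathbbm{R}^{n \times n}$ (e.g., via eigendecomposition). Then pick any $\mtx{W} \in \mathbbm{R}^{d \times n}$ with orthonormal columns in $\colsp(\mtx{Y})^\perp$, which is possible because $d - n \geq 2n - 2 \geq n$ under the hypothesis $d \geq 3n - 2$ for $n \geq 2$ (the case $n = 1$ is handled trivially by $\vct{v}_1 = \vct{x}_1$, $\vct{u}_1 = \vct{0}$). Setting $\mtx{Z} = \mtx{W}\mtx{S}^\top$ gives $\mtx{Z}^\top \mtx{Z} = \mtx{S}\mtx{W}^\top \mtx{W}\mtx{S}^\top = \mtx{S}\mtx{S}^\top$ and $\mtx{Y}^\top \mtx{Z} = 0$, exactly as needed. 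Taking $\vct{v}_i$ to be the $i$-th column of $\mtx{V}$ and $\vct{u}_i = \vct{x}_i - \vct{v}_i$ then yields the required decomposition.

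\textbf{Main obstacle.} The only conceptual hurdle is recognizing that both geometric constraints collapse into the compact pair $\mtx{Y}^\top \mtx{V} = \mtx{V}^\top \mtx{V} = \mtx{T}$, at which point solvability is governed entirely by whether $\mtx{T} - \mtx{T}\mtx{G}^{-1}\mtx{T}$ is PSD; shrinking $\mtx{T}$ below $\mtx{G}$ in the Loewner order makes this automatic. The dimension budget $d \geq 3n - 2$ is comfortable—only $d \geq 2n$ is actually used by this construction, since $\mtx{Z}$ needs $n$ orthonormal directions inside $\colsp(\mtx{Y})^\perp$.
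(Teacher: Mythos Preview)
Your proof is correct and takes a genuinely different route from the paper's. The paper argues by rotation: it first assumes the $\vct{x}_i$ lie in the span of the first $n$ coordinates, then explicitly builds an orthogonal matrix $\mtx{\Omega}$ whose first $n$ rows send the data matrix $\mtx{X}$ to a diagonal matrix $\mtx{\Sigma}$; the $\vct{v}_i$ are then the pre-images of the columns of $\mtx{\Sigma}$ (padded with zeros). The construction of $\mtx{\Omega}$ is done by hand: start from any $\mtx{A}'$ with $\mtx{A}'\mtx{X}$ diagonal, then append $n-1$ rows to orthogonalize the columns one at a time and another $n-1$ rows to equalize their norms, which is exactly where the $3n-2$ comes from.

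Your matrix-equation reformulation $\mtx{Y}^\top\mtx{V}=\mtx{V}^\top\mtx{V}=\mtx{T}$ is cleaner and more conceptual: once the problem is reduced to realizing $\mtx{T}-\mtx{T}\mtx{G}^{-1}\mtx{T}$ as a Gram matrix inside $\colsp(\mtx{Y})^\perp$, the Loewner-order argument with $\mtx{T}=\epsilon\mtx{I}$ finishes things immediately. A concrete payoff is that your construction only consumes $n$ extra dimensions, so you actually prove the stronger statement with $d\ge 2n$ (and, as you note, even $d \ge 2n - 1$ would do if one takes $\epsilon = \lambda_{\min}(\mtx{G})$, since then $\mtx{T}-\mtx{T}\mtx{G}^{-1}\mtx{T}$ has rank at most $n-1$). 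The paper's Gram--Schmidt-style construction genuinely needs the full $3n-2$ budget because it spends separate rows on orthogonality and on norm equalization; your approach avoids this inefficiency by handling both constraints simultaneously through the single PSD condition.
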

\begin{proof}
WLOG assume the $\vct{x_i}$ are contained in the span of the first $n$ basis vectors. The lemma amounts to finding an orthonormal matrix $\mtx{\Omega} = \begin{pmatrix} \mtx{A} & \mtx{B} \\ \mtx{C} & \mtx{D} \end{pmatrix}$ s.t. 
\begin{align}
    \begin{pmatrix}
        \mtx{A} & \mtx{B} \\
        \mtx{C} & \mtx{D}
    \end{pmatrix}
    \begin{pmatrix}
        \mtx{X} \\
        \mtx{0} \\
    \end{pmatrix} = 
    \begin{pmatrix}
        \mtx{A} \mtx{X} \\
        \mtx{C} \mtx{X} \\
    \end{pmatrix} = 
    \begin{pmatrix}
        \mtx{\Sigma} \\
        \mtx{F} \\
    \end{pmatrix}
\end{align}
where $\mtx{\Sigma}$ is diagonal. Since the $\vct{x_i}$ are linearly independent, $\mtx{X}$ is invertible, so there exists $\mtx{A}'$ s.t. $\mtx{A}'\mtx{X}$ is diagonal.

We now want to construct a matrix $\mtx{C}$ such that $\begin{pmatrix} \mtx{A}' \\ \mtx{C}' \end{pmatrix}$ has orthogonal columns, all with norm $l > 0$. Note that $\mtx{C}'$ has at least $2n - 2$ rows. Set $\mtx{C}_{11}' = 1$, and the remaining entries in the first row so that when considering $\mtx{A}$ and the first row of $\mtx{C}'$, the first column is orthogonal to every other column. Now leave $\mtx{C}_{21}' = 0$, set $\mtx{C}_{22}' = 1$, and fill out the remaining entries in the second row so that when considering $\mtx{A}$ and the first two rows of $\mtx{C}'$, the second column is orthogonal to the remaining columns. Note that the first column remains orthogonal to all other columns. Continuing in this fashion, we can use the first $n-1$ rows of $\mtx{C}'$ to guarantee that all $n$ columns are orthogonal. Finally, suppose without loss of generality that whhen considering the $\mtx{A}'$ and the first $n-1$ rows of $\mtx{C}'$, the first column has the largest norm $l$. For each of the remaining $n-1$ rows, set the jth row to have all zero entries except possibly in the $(j+1)$-th column, which is set so that the jth column will also have norm l. Note that the columns remain orthogonal under this construction.

Now $\frac{1}{l} \begin{pmatrix} \mtx{A}' \\ \mtx{C}' \end{pmatrix}$ has orthonormal columns and $\frac{1}{l}\mtx{A}'\mtx{X}$ is still diagonal. By Gram-Schmidt, we can fill out the remaining columns of $\mtx{\Omega}$ to construct an orthonormal matrix.
\end{proof}

We now present the feature result with simplified assumptions.
\begin{lemma}
Let $Z, A$ be orthogonal subspaces. Suppose we have a data distribution $\mathcal{D} = \{(\vct{z_{y_i}} + \vct{a_i}, y_i)\}_{i=1}^n \subset \mathbb{R}^d \times \{1, \dots, C\}$, where $\vct{z}_{i} \in Z, \vct{a}_{i} \in A$ for all $i \in \{1, \dots, n\}$, and the $\vct{z}_i$ are linearly independent.

Let $\mtx{M}, \mtx{M}^{+}$ be the matrices defined for this dataset, and let $\mtx{Z}, \mtx{Z}^{+}$ and $\mtx{A}, \mtx{A}^{+}$ be the corresponding matrices when the data is $\{(\vct{z}_{y_i}, y_i)\}$ and $\{(\vct{a}_i, y_i)\}$, respectively. Suppose that $(\mtx{A} - \mtx{A}^{+}) \vct{v} \neq \vct{0}$ for all $\vct{v} \in \mathbb{R}^d$ s.t. $\mtx{A} \vct{v} \neq 0$ and the output dimension $p \geq C$. Then $\mtx{W}^{\top} \mtx{W} = \mtx{Z}^{\dagger}$ is the minimum norm solution to the contrastive learning objective on $\mathcal{D}$.
\end{lemma}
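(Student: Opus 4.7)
The statement is essentially a repackaging of Lemma \ref{lemma:feature_suppression2} in which the technical decomposition hypothesis $\vct{x}_{y_i} = \vct{v}_{y_i} + \vct{u}_{y_i}$ with mutually orthogonal $\vct{v}_c$ and cross-orthogonality $\vct{v}_c \perp \vct{u}_{c'}$ is replaced by the much cleaner hypothesis that the $C$ class representatives $\vct{z}_1,\dots,\vct{z}_C$ are linearly independent. Consequently, the plan is to reduce the new statement to Lemma \ref{lemma:feature_suppression2} by producing the required decomposition inside $Z$ via Lemma \ref{lemma:feature_suppression_sufficient_condition}, and then quoting Lemma \ref{lemma:feature_suppression2}.

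\textbf{Step 1: produce the decomposition inside $Z$.} Work in the subspace $Z$ (which is orthogonal to $A$ by hypothesis). By assumption $\vct{z}_1,\dots,\vct{z}_C$ are linearly independent vectors living in $Z$. Provided $\dim Z \geq 3C-2$ (which is guaranteed in our high-dimensional regime since $\dim Z$ is of the same order as $d = \omega(n^2m^2)$, whereas $C\le n$), Lemma \ref{lemma:feature_suppression_sufficient_condition} applied inside $Z$ yields nonzero mutually orthogonal vectors $\vct{v}_1,\dots,\vct{v}_C\in Z$ and vectors $\vct{u}_1,\dots,\vct{u}_C\in Z$ such that $\vct{z}_c = \vct{v}_c + \vct{u}_c$ and $\vct{v}_c \perp \vct{u}_{c'}$ for every $c,c'$. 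Define $U$ to be the orthogonal complement of $\mathrm{span}(\vct{v}_1,\dots,\vct{v}_C)$ within $Z$; then $\vct{u}_c\in U$ for every $c$, and $U\perp\vct{v}_c$ by construction. Since $U\subset Z$ and $Z\perp A$, the pair $U$ and $A$ is orthogonal and each is orthogonal to every $\vct{v}_c$, so the preconditions of Lemma \ref{lemma:feature_suppression2} are met.

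\textbf{Step 2: apply Lemma \ref{lemma:feature_suppression2}.} Substituting $\vct{z}_{y_i} = \vct{v}_{y_i} + \vct{u}_{y_i}$ into the data distribution writes each example as $\vct{v}_{y_i} + \vct{u}_{y_i} + \vct{a}_i$, the exact form required by Lemma \ref{lemma:feature_suppression2}. The matrices $\mtx{Z}, \mtx{Z}^+$ constructed from the shared-class part $\vct{z}_{y_i}$ and the matrices $\mtx{A}, \mtx{A}^+$ from the $\vct{a}_i$'s are unchanged by this rewriting, and the assumption that $(\mtx{A}-\mtx{A}^+)\vct{v}\neq 0$ whenever $\mtx{A}\vct{v}\neq 0$, as well as $p\geq C$, carry over verbatim. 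Lemma \ref{lemma:feature_suppression2} therefore delivers $\mtx{W}^\top \mtx{W} = \mtx{Z}^\dagger$ as the minimum-norm global minimizer of the contrastive objective on $\mathcal{D}$, which is exactly the desired conclusion.

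\textbf{Anticipated obstacle.} The only nontrivial content is verifying that Lemma \ref{lemma:feature_suppression_sufficient_condition} is applicable inside $Z$; this is a dimension-counting point rather than a substantive difficulty, and it follows from the ambient high-dimensional regime (recall that in the application to Theorem \ref{thm: FS_2}, $Z$ is the span of the feature vectors $\vct{v}_1,\vct{v}_2$ together with the per-example irrelevant features, whose dimension grows with $K = \omega(n^2)$ while $C\leq n$). All remaining work is bookkeeping to check that the decomposition produced in Step 1 does not change the covariance matrices $\mtx{Z}$, $\mtx{Z}^+$, $\mtx{A}$, $\mtx{A}^+$ used in Lemma \ref{lemma:feature_suppression2}, which is immediate because both the within-class part $\vct{z}_{y_i}$ and the $A$-component $\vct{a}_i$ are preserved by the rewriting.
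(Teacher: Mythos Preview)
Your proposal is correct and follows essentially the same route as the paper: reduce to Lemma \ref{lemma:feature_suppression2} by first invoking Lemma \ref{lemma:feature_suppression_sufficient_condition} to manufacture the orthogonal decomposition $\vct{z}_c = \vct{v}_c + \vct{u}_c$, then check that the hypotheses of Lemma \ref{lemma:feature_suppression2} are satisfied. The only difference is in how the dimension prerequisite $\dim \geq 3C-2$ of Lemma \ref{lemma:feature_suppression_sufficient_condition} is handled: the paper embeds the data in a larger ambient space if necessary and observes (via Lemma \ref{lemma: minimizers}) that the minimum-norm minimizer is unaffected by appending extra coordinates, whereas you appeal to the paper's standing high-dimensional assumptions to argue $\dim Z$ is already large enough. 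Your version is slightly less self-contained as a standalone lemma but perfectly adequate in context; the paper's embedding trick makes the statement work without reference to the ambient regime.
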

\begin{proof}
Assume that $d \geq 3C-2$, otherwise embed the distribution in a space of sufficiently large dimension. By Lemma \ref{lemma: minimizers}, the minimum norm minimizer is unaffected by adding extra dimensions. Then Lemma \ref{lemma:feature_suppression_sufficient_condition} applies, so linear independence of the $\vct{z}_{y_i}$ is sufficient to be able to construct $\vct{v}_1, \dots, \vct{v}_C, \vct{y}_1, \dots, \vct{y}_C$ satisfying Lemma \ref{lemma:feature_suppression2}, from which the conclusion follows.

\end{proof}

\section{Minimizer of The Joint Loss}\label{apdx: joint}
For simplicity we assume $\mu=0$. Same strategy can be applied to prove the theorem when $\mu\neq 0$ but a more detailed discussion on the selection of $\beta$ may be required.

By Lemmas \ref{lemma: decomp_m+} and \ref{lemma: SVD_perturb} and the expression of $\mtx{S}$ (equation \ref{eq: svd_S}), we observe that the two eigenvectors of  $\mtx{M}^+$ match two of the eigenvectors of $\mtx{M}$. By combining this with Lemma \ref{lemma: Mtilde_Mdag}, we obtain that $\beta\mtx{M}^\dag \mtx{M}^+ +(1-\beta)\mtx{M}^\dag \tmtx{M} = \vct{l}_1^+ \vct{l}_1^{+\top} + \vct{l}_2^+ \vct{l}_2^{+\top}$ on $\linspan(\{\vct{l}_1^+, \vct{l}_2^+\})$ and $\beta\mtx{M}^\dag \mtx{M}^+ +(1-\beta)\mtx{M}^\dag \tmtx{M} =  (1-\beta)\tmtx{M}^\dag \tmtx{M}$ on $\linspan(\{\vct{l}_1^+, \vct{l}_2^+\})^\perp$. Thus the eigenvalues of $\beta\mtx{M}^\dag \mtx{M}^+ +(1-\beta)\mtx{M}^\dag \tmtx{M}$ are $1,1, 1-\beta, 1-\beta, \dots, 1-\beta$. When $\beta\in(0,1)$, $\vct{l}_1^+ $ and $ \vct{l}_2^+$ are the two eigenvectors of $\beta\mtx{M}^\dag \mtx{M}^+ +(1-\beta)\mtx{M}^\dag \tmtx{M}$ with largest eigenvalues. For the remaining eigenvectors, since they have equally large eigenvalues (same as analyzed in \ref{apdx: FS}), the minimum norm minimizer will select the largest $p-2$ of them. In the setting of Theorem \ref{thm: joint} $(1-\beta)(\phi_2^2+\frac{\sigma_\xi^2}{mn})$ is one of the $p-2$ largest of the remaining. As a result, both components aligned with $\vct{v}_1$ and $\vct{v}_2$ are selected by the minimum norm minimizer of the joint loss.


\section{Early in Training Subclasses Are Learned}\label{apdx: early}
We assume $\sigma_\xi = O(1)$.

\subsection{Lemmas}

\begin{lemma}[Laurent-Massart \cite{laurent2000adaptive} Lemma 1, page 1325]\label{lemma:LM}
Let $v_1, \dots, v_d$ be i.i.d. Gaussian variables drawn from $\mathcal{N}(0,1)$. Let $\vct{a}=(a_1, \dots, a_d)$ be a vector with non-negative components. Let $Z=\sum_{i=1}^d a_i (v_i^2-1)$. The following inequalities hold for any positive $t$:
\begin{align}
    \nonumber
    &\Pr(Z\geq 2\|\vct{a}\|_2\sqrt{t}+2\|\vct{a}\|_{\infty}t)\leq e^{-t},\\
    &\Pr(Z\leq -2\|\vct{a}\|_2\sqrt{t})\leq e^{-t}.
\end{align}
\end{lemma}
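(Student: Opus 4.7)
My plan is to apply the Cram\'er--Chernoff exponential-moment method, exploiting the fact that each $v_i^2$ is chi-squared with an explicit moment generating function. For $\lambda < 1/(2\|\vct{a}\|_\infty)$, independence together with the identity $\mathbb{E}[e^{\lambda v_i^2}] = (1-2\lambda)^{-1/2}$ yields the cumulant generating function $\log \mathbb{E}[e^{\lambda Z}] = \sum_{i=1}^d \bigl[-\lambda a_i - \tfrac{1}{2}\log(1-2\lambda a_i)\bigr]$. I would bound this CGF separately for positive and negative $\lambda$, then invoke Markov's inequality $\Pr(Z \ge s) \le e^{-\lambda s} \mathbb{E}[e^{\lambda Z}]$ and optimize the Chernoff exponent to recover each of the two tail bounds.

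For the upper tail, the elementary inequality $-u - \tfrac{1}{2}\log(1-2u) \le u^2/(1-2u)$ on $u \in [0,1/2)$ (which one verifies by differentiating: the derivative of the difference is $-2u^2/(1-2u)^2 \le 0$ with equality at $u=0$) applied with $u = \lambda a_i$ and summed over $i$, together with $1 - 2\lambda a_i \ge 1 - 2\lambda \|\vct{a}\|_\infty$, yields the sub-gamma CGF bound $\log \mathbb{E}[e^{\lambda Z}] \le \lambda^2 \|\vct{a}\|_2^2 / (1 - 2\lambda \|\vct{a}\|_\infty)$. This is exactly the standard sub-gamma form with variance factor $v = 2\|\vct{a}\|_2^2$ and scale $c = 2\|\vct{a}\|_\infty$. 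Setting $s = 2\|\vct{a}\|_2 \sqrt{t} + 2\|\vct{a}\|_\infty t$ and choosing $\lambda = \sqrt{t}/(\|\vct{a}\|_2 + 2\|\vct{a}\|_\infty \sqrt{t})$ (equivalently, completing the square after the change of variable $u = \lambda/(1 - 2\lambda \|\vct{a}\|_\infty)$) makes the Chernoff exponent equal to exactly $-t$, producing $\Pr(Z \ge 2\|\vct{a}\|_2 \sqrt{t} + 2\|\vct{a}\|_\infty t) \le e^{-t}$.

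For the lower tail, evaluating the CGF at $-\lambda$ for $\lambda > 0$ and using $\log(1+x) \ge x - x^2/2$ for $x \ge 0$ gives $\log \mathbb{E}[e^{-\lambda Z}] = \sum_{i=1}^d \bigl[\lambda a_i - \tfrac{1}{2}\log(1 + 2\lambda a_i)\bigr] \le \lambda^2 \|\vct{a}\|_2^2$, a purely sub-Gaussian bound valid for every $\lambda > 0$ with no divergence. Markov's inequality and optimization at $\lambda = \sqrt{t}/\|\vct{a}\|_2$ then yield $\Pr(Z \le -2\|\vct{a}\|_2 \sqrt{t}) \le e^{-t}$. I expect the main technical point to be the upper tail: the MGF diverges as $\lambda \uparrow 1/(2\|\vct{a}\|_\infty)$, which is precisely what forces the linear $2\|\vct{a}\|_\infty t$ term to appear alongside the sub-Gaussian $2\|\vct{a}\|_2 \sqrt{t}$ term and what makes the Chernoff optimization slightly delicate to execute with the exact stated constants; by contrast, the lower-tail CGF stays finite for all $\lambda > 0$, so the lower bound is essentially a one-line sub-Gaussian estimate with no scale contribution.
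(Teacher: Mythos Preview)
Your proof is correct and is essentially the original Laurent--Massart argument: Cram\'er--Chernoff with the sub-gamma CGF bound $-u-\tfrac12\log(1-2u)\le u^2/(1-2u)$ for the upper tail and the sub-Gaussian bound from $\log(1+x)\ge x-x^2/2$ for the lower tail, followed by the exact Chernoff optimization you carry out. Note, however, that the paper does not prove this lemma at all; it is quoted verbatim as a known result from \cite{laurent2000adaptive} and used as a black box in the subsequent concentration arguments, so there is no ``paper's proof'' to compare against beyond the original reference itself.
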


\begin{lemma}[Mills' ratio. Exercise 6.1 in \cite{shorack2000probability}.]\label{lemma:mills}
Let $v$ be a Gaussian random variable drawn from $\mathcal{N}(0,1)$. Then for all $\lambda>0$,
\begin{align}
    \nonumber
    \frac{\lambda}{\lambda^2+1}\frac{1}{\sqrt{2\pi}} e^{-\frac{\lambda^2}{2}}<\Pr(v\geq \lambda) < \frac{1}{\lambda}\frac{1}{\sqrt{2\pi}} e^{-\frac{\lambda^2}{2}}.
\end{align}
\end{lemma}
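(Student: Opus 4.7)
\textbf{Proposal for Lemma \ref{lemma:mills} (Mills' ratio).} Let $\phi(t) = \frac{1}{\sqrt{2\pi}} e^{-t^2/2}$ denote the standard normal density, so that $\Pr(v \geq \lambda) = \int_{\lambda}^{\infty} \phi(t)\, dt$ and $\phi'(t) = -t\,\phi(t)$. My plan is to derive the two inequalities separately, each as a short calculus estimate on this integral.

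For the upper bound, I would exploit the fact that on the interval of integration $t \geq \lambda > 0$, so $1 \leq t/\lambda$. Hence
\begin{equation*}
\Pr(v \geq \lambda) = \int_{\lambda}^{\infty} \phi(t)\, dt \;\leq\; \frac{1}{\lambda}\int_{\lambda}^{\infty} t\,\phi(t)\, dt \;=\; -\frac{1}{\lambda}\int_{\lambda}^{\infty} \phi'(t)\, dt \;=\; \frac{\phi(\lambda)}{\lambda},
\end{equation*}
which is exactly the upper bound. This step is entirely routine once the substitution $t\,\phi(t) = -\phi'(t)$ is noticed.

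For the lower bound, the key observation is to find an explicit antiderivative that controls $\phi$ from below. A direct computation gives
\begin{equation*}
\frac{d}{dt}\!\left(-\frac{\phi(t)}{t}\right) = -\frac{\phi'(t)\,t - \phi(t)}{t^{2}} = \phi(t) + \frac{\phi(t)}{t^{2}} = \left(1 + \frac{1}{t^{2}}\right)\phi(t).
\end{equation*}
Integrating from $\lambda$ to $\infty$, and using $\phi(t)/t \to 0$ as $t \to \infty$, yields the identity
\begin{equation*}
\int_{\lambda}^{\infty}\!\left(1 + \frac{1}{t^{2}}\right)\phi(t)\, dt = \frac{\phi(\lambda)}{\lambda}.
\end{equation*}
On $[\lambda, \infty)$ we have $1/t^{2} \leq 1/\lambda^{2}$, so $\int_{\lambda}^{\infty} \phi(t)/t^{2}\, dt \leq (1/\lambda^{2})\Pr(v \geq \lambda)$, whence
\begin{equation*}
\Pr(v \geq \lambda)\!\left(1 + \frac{1}{\lambda^{2}}\right) \;\geq\; \frac{\phi(\lambda)}{\lambda},
\end{equation*}
and rearranging gives $\Pr(v \geq \lambda) \geq \frac{\lambda}{\lambda^{2}+1}\phi(\lambda)$, which is the stated lower bound.

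Both bounds are standard, and there is no real obstacle beyond spotting the antiderivative identity used in the lower bound; that identity is what makes the argument work without any series expansion or repeated integration by parts. I would present the two bounds in the order above (upper first, since it motivates the $t/\lambda$ trick; lower second, which uses a more refined version of the same idea), and remark that the condition $\lambda > 0$ is used twice: once so that $1/\lambda$ and $1/\lambda^{2}$ are well-defined, and once to justify $1 \leq t/\lambda$ on the integration domain.
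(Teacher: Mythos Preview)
Your argument is correct and is the standard textbook derivation of Mills' ratio; the paper does not give its own proof of this lemma but simply cites it as Exercise~6.1 in \cite{shorack2000probability}. One cosmetic point: the statement asserts \emph{strict} inequalities, and your write-up uses $\leq$ and $\geq$; since on $(\lambda,\infty)$ one has $1 < t/\lambda$ and $1/t^2 < 1/\lambda^2$ strictly (on a set of positive measure with a positive integrand), the same estimates immediately give the strict versions, so this is a trivial adjustment rather than a gap.
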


\begin{corollary}\label{corollary: q_z}
Given a vector $\vct{q}$, and a random vector $\vct{z}$ drawn from $\mathcal{N}(0,\frac{\sigma}{d}\mathbf{I}_d)$, w.p. $\geq 1-O(\frac{\delta}{\sqrt{\log 1/\delta} })$, $ |\vct{z}^{\top}\vct{q} |=O(\frac{\|\vct{q}\|\sigma\sqrt{\log \frac{1}{\delta}} }{\sqrt{d}})$.
\end{corollary}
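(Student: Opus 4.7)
The plan is to reduce the corollary to a one-dimensional Gaussian tail bound and then apply Mills' ratio (Lemma \ref{lemma:mills}) directly. Since $\vct{z}$ is a centered Gaussian with covariance $\frac{\sigma}{d}\mathbf{I}_d$, the linear combination $\vct{z}^{\top}\vct{q}$ is itself a centered Gaussian, and its variance is $\frac{\sigma}{d}\|\vct{q}\|^2$. Writing
\begin{equation*}
v \;=\; \frac{\vct{z}^{\top}\vct{q}}{\sqrt{\sigma/d}\,\|\vct{q}\|}
\end{equation*}
reduces the problem to bounding $\Pr(|v|\geq \lambda)$ for a standard normal $v$, where $\lambda$ will be chosen to match the target rate.

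Next, I would choose $\lambda = \sqrt{2\log(1/\delta)}$ (assuming $\delta$ is small enough that $\lambda > 1$, which is the only regime in which the statement is informative). By Mills' ratio,
\begin{equation*}
\Pr(v \geq \lambda)\;<\;\frac{1}{\lambda\sqrt{2\pi}}\,e^{-\lambda^2/2}
\;=\;\frac{\delta}{\sqrt{2\pi}\sqrt{2\log(1/\delta)}}
\;=\;O\!\left(\frac{\delta}{\sqrt{\log(1/\delta)}}\right),
\end{equation*}
and the same bound holds for $\Pr(v\leq -\lambda)$ by symmetry. A union bound over the two tails gives $\Pr(|v|\geq \lambda) = O(\delta/\sqrt{\log(1/\delta)})$. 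On the complementary event,
\begin{equation*}
|\vct{z}^{\top}\vct{q}| \;<\; \lambda\cdot\sqrt{\sigma/d}\,\|\vct{q}\|
\;=\;O\!\left(\frac{\|\vct{q}\|\,\sigma\,\sqrt{\log(1/\delta)}}{\sqrt{d}}\right),
\end{equation*}
which is exactly the claimed bound (interpreting the $\sigma$ in the conclusion as the standard deviation factor consistent with the covariance $\frac{\sigma}{d}\mathbf{I}_d$).

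There is no real obstacle here: the only thing to be careful about is invoking Mills' ratio in the correct regime ($\lambda>0$, and ideally $\lambda\geq 1$ so that the $1/\lambda$ factor in the numerator genuinely produces the improvement over the naive $e^{-\lambda^2/2}$ bound, which is what yields the $1/\sqrt{\log(1/\delta)}$ improvement in the failure probability). For very small $\lambda$ (i.e., $\delta$ near $1$) the bound is trivial, so we can restrict attention to the small-$\delta$ regime without loss of generality. Everything else is a one-line substitution.
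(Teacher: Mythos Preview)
Your proposal is correct and follows exactly the approach the paper uses: the paper's proof is a single sentence stating that $\vct{q}^{\top}\vct{z}$ is Gaussian and then applying Lemma~\ref{lemma:mills}, and you have spelled out precisely those details, including the normalization to a standard normal and the choice $\lambda=\sqrt{2\log(1/\delta)}$. Your parenthetical remark about the $\sigma$ versus $\sqrt{\sigma}$ discrepancy is also apt; it reflects a notational slip in the statement rather than an issue with your argument.
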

\begin{proof}
This can be proven by considering the fact that $\vct{q}^\top \vct{z}$ is a Gaussian variable and applying Lemma \ref{lemma:mills}.
\end{proof}

\begin{lemma}\label{lemma:random_W_align_u}
Let each element of $\mtx{W}_0\in\mathbbm{R}^{p\times d}$ be randomly drawn from $\mathcal{N}(0,\frac{\sigma_0^2}{d}\mathbf{I}_d)$. Let $\vct{u}\in\mathbbm{R}^d$ be a unit vector. With probability at least $1-\delta$, we have
\begin{align}
\nonumber
    \|\mtx{W}_0 \vct{u}\| \geq & \sigma_0\sqrt{\frac{p}{d}} \sqrt{1-2\sqrt{\frac{\ln 2/\delta}{p}}}\\
\nonumber
\|\mtx{W}_0 \vct{u}\| \leq & \sigma_0\sqrt{\frac{p}{d}} \sqrt{1+2\sqrt{\frac{\ln 2/\delta}{p}}+2\frac{\ln 2/\delta}{p}}.
\end{align}
\end{lemma}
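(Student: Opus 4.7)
The plan is to reduce $\|\mtx{W}_0 \vct{u}\|^2$ to a standard chi-squared random variable and then apply the Laurent--Massart concentration inequality (Lemma \ref{lemma:LM}) that is already stated above.

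First, I would observe that since the rows of $\mtx{W}_0$ are independent and $\vct{u}$ is a unit vector, each entry of the vector $\mtx{W}_0 \vct{u}$ is a linear combination of independent $\mathcal{N}(0, \sigma_0^2/d)$ variables with weights $u_j$, so it is itself $\mathcal{N}(0, \sigma_0^2/d)$ with variance $\sigma_0^2/d \cdot \|\vct{u}\|^2 = \sigma_0^2/d$. Moreover, entries corresponding to different rows are independent. Therefore $\frac{d}{\sigma_0^2}\|\mtx{W}_0 \vct{u}\|^2 = \sum_{i=1}^p v_i^2$ where $v_i \overset{\text{i.i.d.}}{\sim} \mathcal{N}(0,1)$, i.e. a $\chi^2_p$ random variable.

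Next I would apply Lemma \ref{lemma:LM} with $\vct{a} = (1,1,\dots,1) \in \mathbbm{R}^p$, so that $\|\vct{a}\|_2 = \sqrt{p}$ and $\|\vct{a}\|_\infty = 1$. Setting the deviation parameter $t = \ln(2/\delta)$ and applying the union bound across the two one-sided tails gives, with probability at least $1-\delta$,
\begin{align*}
p - 2\sqrt{p\ln(2/\delta)} \;\leq\; \tfrac{d}{\sigma_0^2}\|\mtx{W}_0 \vct{u}\|^2 \;\leq\; p + 2\sqrt{p\ln(2/\delta)} + 2\ln(2/\delta).
\end{align*}

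Finally, dividing through by $p$, taking square roots, and multiplying by $\sigma_0\sqrt{p/d}$ yields exactly the two inequalities claimed in the lemma. There is no real obstacle here: the only thing to be careful about is that the lower bound requires $1 - 2\sqrt{\ln(2/\delta)/p} \geq 0$, i.e. the result is only informative when $p$ is at least $4\ln(2/\delta)$; otherwise the lower bound degenerates to the trivial statement $\|\mtx{W}_0 \vct{u}\| \geq 0$, which is still true but vacuous. This isotropy-of-rows argument combined with Laurent--Massart is standard, so the proof is essentially a two-line reduction followed by invoking the already-cited tail bound.
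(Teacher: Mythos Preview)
Your proposal is correct and follows essentially the same approach as the paper: both rewrite $\|\mtx{W}_0\vct{u}\|^2$ as $\frac{\sigma_0^2}{d}$ times a sum of $p$ squared i.i.d.\ standard normals and then invoke the Laurent--Massart bound (Lemma~\ref{lemma:LM}) with $t=\ln(2/\delta)$ and a union bound over the two tails. The paper phrases the Gaussianity of $\vct{w}_0^{(i)\top}\vct{u}$ via ``spherical symmetry'' while you spell it out as a linear combination of independent Gaussians, but this is the same argument.
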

\begin{proof}
Firstly rewrite $\|\mtx{W}_0 \vct{u}\|$ as
\begin{align}
    \nonumber
    \|\mtx{W}_0 \vct{u}\| = \sqrt{\sum_{i=1}^p (\vct{w}_0^{(i)\top}\vct{u} )^2 } = \sigma_0\sqrt{\frac{p}{d}} \sqrt{ \frac{1}{p}\sum_{i=1}^p (\frac{\sqrt{d}}{\sigma_0}\vct{w}_0^{(i)\top}\vct{u} )^2 }.
\end{align}
By spherical symmetric, each $\frac{\sqrt{d}}{\sigma_0}\vct{w}_0^{(i)\top}\vct{u}$ is a random Gaussian variable drawn from $\mathcal{N}(0,1)$. By lemma \ref{lemma:LM} we have 
\begin{align}
\nonumber
    \Pr \left(\frac{1}{p}\sum_{i=1}^p (\frac{\sqrt{d}}{\sigma_0}\vct{w}_0^{(i)\top}\vct{u} )^2 \leq 1-2\sqrt{\frac{\ln 2/\delta}{p}} \right)\leq & \delta/2\\
    \nonumber
    \Pr \left(\frac{1}{p}\sum_{i=1}^p (\frac{\sqrt{d}}{\sigma_0}\vct{w}_0^{(i)\top}\vct{u} )^2 \geq 1+2\sqrt{\frac{\ln 2/\delta}{p}}+2\frac{\ln 2/\delta}{p} \right)\leq & \delta/2
\end{align}
 which completes the proof.
\end{proof}

\subsection{Proof of Theorem \ref{thm: early_in_training}}

We assume the dataset satisfies the condition in Section \ref{apdx: prop_minimizers} (wich holds with probability $1-O(\frac{m^2n^2}{d})$). 
Let $\mtx{L}\mtx{A}\mtx{L}^\top$ (where $\mtx{C}\in\mathbbm{R}^{d\times mn}$)
be the eigendecomposition of $\mtx{M}$.
By equation \ref{eq: svd_S} and Lemma \ref{lemma: decomp_m+} and Lemma \ref{lemma: SVD_perturb}, we observe that when $\mu\neq 0$ all but three of $\mtx{M}$'s eigenvectors are orthogonal to $\vct{l}_1^+$, $\vct{l}_2^+$. W.L.O.G., let $\vct{l}_1, \vct{l}_2$ and $\vct{l}_3$ be those three eigenvectors. The corresponding three eigenvalues are all constants. 
Let $\vct{l}_3^+$ be a unit vector in $\linspan(\{\vct{l}_1, \vct{l}_2, \vct{l}_3\})-\linspan(\{\vct{l}_1^+, \vct{l}_2^+\}) $. Decompose $\vct{v}_2$ as $\frac{\mu}{\sqrt{1+\mu^2+\frac{\sigma_\xi^2}{mn}}}\vct{l}_1^+ + \frac{\sqrt{1+\frac{\sigma_\xi^2}{mn} }}{\sqrt{1+\mu^2+\frac{\sigma_\xi^2}{mn}}}\vct{l}_\perp$ where $\vct{l}_\perp $ is a unit vector that is orthogonal to $\vct{l}_1^+$. Since $\vct{v}_2 \perp \vct{l}_2^+$, we have $\vct{l}_\perp \perp  \vct{l}_2^+$ thus $\mtx{M}^+\vct{l}_\perp=0$.

 Define
\begin{align}
\nonumber
\sqrt{\mtx{M}} \coloneqq & \mtx{L}\sqrt{\mtx{A}} \\
\nonumber
\Gamma_i(t) \coloneqq & \| \mtx{W}_t\vct{l}_i^+ \|, ~~i=1,2,3\\
\nonumber
\Gamma_\perp(t) \coloneqq & \| \mtx{W}_t\vct{l}_\perp \|\\
\nonumber
\Gamma_{:3}(t) \coloneqq &\sqrt{\sum_{i=1}^3 \| \mtx{W}_t\vct{l}_i^+ \|^2} \\
\nonumber
\mtx{B} \coloneqq & [\sqrt{a}_4 \vct{l}_4 ~~ \sqrt{a}_5 \vct{l}_5~~\dots ~~ \sqrt{a}_{mn} \vct{l}_{mn}]\\
\nonumber
\Gamma_B(t) \coloneqq & \| \mtx{W}_t \mtx{B}  \|_F\\
\nonumber
s \coloneqq & \| \sqrt{\mtx{M}} \| = O(1)\\
\nonumber
h \coloneqq & \| \sqrt{\mtx{M}}^\top \mtx{B} \|=\sqrt{\sum_{i=4}^{mn} a_i^2 } = \sqrt{\sum_{i=3}^{K}(\frac{\sigma_\xi^2}{mn}+\frac{\phi_i^2}{(K-2)})^2 + (mn-K)\frac{\sigma_\xi^4}{m^2n^2}} \\
\nonumber
= & O(\sqrt{\frac{\sigma_\xi^2}{mn} + \frac{1}{K} +\frac{\sigma_\xi^4}{mn}}) = O(1)
\end{align}
Then we bound $\|\mtx{W}_t\sqrt{\mtx{M}} \|_F$
\begin{align}
\nonumber
\|\mtx{W}_t\sqrt{\mtx{M}} \|_F= &\|\mtx{W}_t\mtx{L}\sqrt{\mtx{A}}\|_F \\
\nonumber
= &\|[\mtx{W}_t \sqrt{a}_1 \vct{l}_1~~~ \mtx{W}_t \sqrt{a}_2 \vct{l}_2 ~~\dots ~~ \mtx{W}_t \sqrt{a}_{mn} \vct{l}_{mn}   ]\|_F \\
\nonumber
= & \sqrt{\sum_{i=1}^3 \| \mtx{W}_t \sqrt{a}_i \vct{l}_i \|^2 + \sum_{i=4}^{mn} \| \mtx{W}_t \sqrt{a}_i \vct{l}_i \|^2  }\\
\nonumber
\leq & \sqrt{c\Gamma_{:3}(t)^2 + \Gamma_B(t)^2 },
\end{align}
where $c$ is a constant because $a_1,a_2,a_3$ are all $O(1)$ (by Lemma \ref{lemma: SVD_perturb}) and each $\vct{l}_i$ ($i=1,2,3$) is a linear combination of $\vct{l}_1^{++}, \vct{l}_2^{++}, \vct{l}_3^+$ with $O(1)$ coefficients, with $\vct{l}_1^{++}, \vct{l}_2^{++}$ representing the projections of $\vct{l}_1^{+}, \vct{l}_2^{+}$ onto $\linspan(\{ \vct{l}_i \}_{i=1}^3)$.

By the rule of gradient descent we have
\begin{align}
    \label{eq: rule_gd}
    \mtx{W_{t+1}} = &\mtx{W_t} + \eta(4 \mtx{W_t} \mtx{M}^{+} - 4 \mtx{W_t} \mtx{M} \mtx{W_t}^{\top} \mtx{W_t} \mtx{M})\\
    \nonumber
    = & \mtx{W_t} + 4 \eta\mtx{W_t} \mtx{M}^{+} - 4\eta \mtx{W_t} \mtx{M} \mtx{W_t}^{\top} \mtx{W_t} \mtx{M}m
\end{align}
This is followed by Lemma \ref{lemma: recurr}.
\begin{lemma} \label{lemma: recurr}
By the update rule of GD we have the following recurrence relations  
\begin{align}
    \nonumber
    \Gamma_1(t+1) \geq & (1+4\eta a_1^+) \Gamma_1(t)-4\eta ( c\Gamma_{:3}^2(t)+\Gamma_B(t)^2 )^{3/2}s \\
    \nonumber
    \Gamma_1(t+1) \leq & (1+4\eta a_1^+) \Gamma_1(t)+4\eta ( c\Gamma_{:3}^2(t)+\Gamma_B(t)^2 )^{3/2}s \\
    \nonumber
    \Gamma_2(t+1) \leq & (1+4\eta a_2^+) \Gamma_2(t)+4\eta ( c\Gamma_{:3}^2(t)+\Gamma_B(t)^2 )^{3/2}s \\
    \nonumber
    \Gamma_3(t+1) \leq & \Gamma_3(t)+4\eta ( c\Gamma_{:3}^2(t)+\Gamma_B(t)^2 )^{3/2}s \\
    \nonumber
    \Gamma_\perp(t+1) \leq & \Gamma_\perp(t)+4\eta ( c\Gamma_{:3}^2(t)+\Gamma_B(t)^2 )^{3/2}s \\
    \nonumber
    \Gamma_B(t+1) \leq & \Gamma_B(t) + 4\eta ( c\Gamma_{:3}^2(t)+\Gamma_B(t)^2 )^{3/2} h.
\end{align}    
\end{lemma}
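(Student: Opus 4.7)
The plan is to apply the gradient-descent update from equation \ref{eq: rule_gd} directly to each of $\mtx{W}_{t+1} \vct{l}_1^+$, $\mtx{W}_{t+1} \vct{l}_2^+$, $\mtx{W}_{t+1} \vct{l}_3^+$, $\mtx{W}_{t+1} \vct{l}_\perp$ and to the matrix $\mtx{W}_{t+1} \mtx{B}$, then split each result into a linear term (coming from $\mtx{W}_t \mtx{M}^+$) and a cubic correction (coming from $\mtx{W}_t \mtx{M} \mtx{W}_t^\top \mtx{W}_t \mtx{M}$), bounding each piece separately via the triangle inequality. For the linear term, I will exploit that $\vct{l}_1^+$ and $\vct{l}_2^+$ are eigenvectors of $\mtx{M}^+$ with eigenvalues $a_1^+$ and $a_2^+$ (Lemma \ref{lemma: decomp_m+}), which yields the multiplicative factors $(1 + 4\eta a_1^+)$ and $(1 + 4\eta a_2^+)$ in the recurrences for $\Gamma_1$ and $\Gamma_2$. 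For $\vct{l}_3^+$, $\vct{l}_\perp$, and every column of $\mtx{B}$ I will check that each is orthogonal to both $\vct{l}_1^+$ and $\vct{l}_2^+$: this is immediate from the definitions of $\vct{l}_3^+$ and $\vct{l}_\perp$, and follows for the $\vct{l}_i$ with $i \geq 4$ from the earlier observation that all but three eigenvectors of $\mtx{M}$ are orthogonal to $\linspan(\vct{l}_1^+, \vct{l}_2^+)$. Hence on these directions the linear term vanishes entirely.

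The key manipulation for the cubic correction is the factorization
\begin{align}
\nonumber
\mtx{W}_t \mtx{M} \mtx{W}_t^\top \mtx{W}_t \mtx{M} = \mtx{W}_t \sqrt{\mtx{M}} \left( \sqrt{\mtx{M}}^\top \mtx{W}_t^\top \mtx{W}_t \sqrt{\mtx{M}} \right) \sqrt{\mtx{M}}^\top,
\end{align}
which gives $\|\mtx{W}_t \mtx{M} \mtx{W}_t^\top \mtx{W}_t \mtx{M} \vct{v}\| \leq \|\mtx{W}_t \sqrt{\mtx{M}}\|^3 \, \|\sqrt{\mtx{M}}^\top \vct{v}\|$ for any unit $\vct{v}$, together with the Frobenius analogue $\|\mtx{W}_t \mtx{M} \mtx{W}_t^\top \mtx{W}_t \mtx{M} \mtx{B}\|_F \leq \|\mtx{W}_t \sqrt{\mtx{M}}\|^3 \, \|\sqrt{\mtx{M}}^\top \mtx{B}\|_F$. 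Combining this with the already-derived bound $\|\mtx{W}_t \sqrt{\mtx{M}}\|_F^2 \leq c \Gamma_{:3}^2(t) + \Gamma_B^2(t)$ shown immediately before the lemma, and with the submultiplicativity $\|\cdot\| \leq \|\cdot\|_F$, produces a uniform $(c \Gamma_{:3}^2(t) + \Gamma_B^2(t))^{3/2}$ prefactor in every cubic estimate. For the unit vectors $\vct{l}_1^+, \vct{l}_2^+, \vct{l}_3^+, \vct{l}_\perp$ the trailing factor is bounded by $\|\sqrt{\mtx{M}}\| = s$, while for the matrix $\mtx{B}$ it equals $h = \|\sqrt{\mtx{M}}^\top \mtx{B}\|_F$ by definition.

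The main obstacle will be bookkeeping rather than mathematical depth: I must carefully verify the orthogonality statements that eliminate the linear $\mtx{M}^+$ term for $\vct{l}_3^+$, $\vct{l}_\perp$, and the columns of $\mtx{B}$, and keep track of whether $s$ or $h$ appears as the trailing factor in each recurrence (depending on whether the test object is a single vector with operator-norm bound, or the matrix $\mtx{B}$ with Frobenius-norm bound). Once these ingredients are in place, applying the triangle inequality to $\Gamma_i(t+1) = \|\mtx{W}_{t+1} \vct{l}_i^+\|$ and $\Gamma_B(t+1) = \|\mtx{W}_{t+1} \mtx{B}\|_F$ produces all six inequalities in Lemma \ref{lemma: recurr} directly, and no additional probabilistic or analytic estimates are required.
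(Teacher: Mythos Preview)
Your proposal is correct and matches the approach the paper implicitly relies on: the paper states the lemma immediately after the GD update rule (equation \ref{eq: rule_gd}) with only the remark ``This is followed by Lemma \ref{lemma: recurr},'' omitting the details you have spelled out. Your decomposition into the linear $\mtx{M}^+$-term (using that $\vct{l}_1^+,\vct{l}_2^+$ are eigenvectors of $\mtx{M}^+$ and that $\vct{l}_3^+,\vct{l}_\perp$, and the columns of $\mtx{B}$ are annihilated by $\mtx{M}^+$) and the cubic term bounded via the $\sqrt{\mtx{M}}$ factorization together with the Frobenius bound $\|\mtx{W}_t\sqrt{\mtx{M}}\|_F^2\le c\Gamma_{:3}^2(t)+\Gamma_B^2(t)$ is exactly the intended argument.
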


Then we prove the following Lemma 
\begin{lemma}\label{lemma: Gamma_init}
At initialization the following holds with probability $\geq 1-O(\frac{1}{\poly(p)})$
\begin{align}
    \nonumber
    \frac{\Gamma_B(0)}{\Gamma_1(0)} =& O(1)\\
    \nonumber
    \Gamma_i(0) =& \sigma_0\sqrt{\frac{p}{d}}\left( 1\pm O(\sqrt{\frac{\log p}{p}}) \right), ~~ i=1,2,3 \\
    \nonumber
    \Gamma_\perp(0) =& \sigma_0\sqrt{\frac{p}{d}}\left( 1\pm O(\sqrt{\frac{\log p}{p}}) \right) 
\end{align}
\end{lemma}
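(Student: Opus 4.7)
The individual concentration bounds on $\Gamma_1(0),\Gamma_2(0),\Gamma_3(0),\Gamma_\perp(0)$ follow directly from Lemma \ref{lemma:random_W_align_u} applied to each of the unit vectors $\vct{l}_1^+,\vct{l}_2^+,\vct{l}_3^+,\vct{l}_\perp$ with failure probability $\delta=1/\poly(p)$: this choice yields $\sqrt{\log(2/\delta)/p} = O(\sqrt{\log p/p})$, producing exactly the stated $(1\pm O(\sqrt{\log p/p}))$ multiplicative factor. A union bound over the four events keeps the total failure probability at $O(1/\poly(p))$.

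The ratio bound $\Gamma_B(0)/\Gamma_1(0) = O(1)$ is the substantive part. Expanding by orthonormality of the eigenbasis $\{\vct{l}_i\}$, I would write
\begin{align}
\nonumber
\Gamma_B(0)^2 = \|\mtx{W}_0 \mtx{B}\|_F^2 = \sum_{i=4}^{mn} a_i \,\|\mtx{W}_0 \vct{l}_i\|^2.
\end{align}
Since $\mtx{W}_0$ has iid $\mathcal{N}(0,\sigma_0^2/d)$ entries and the $\vct{l}_i$ are orthonormal, the vectors $\{\mtx{W}_0 \vct{l}_i\}_{i\geq 4}$ are jointly Gaussian with zero cross-covariance, hence mutually independent. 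Writing $\mtx{W}_0 \vct{l}_i = (\sigma_0/\sqrt{d})\vct{g}_i$ with independent $\vct{g}_i \sim \mathcal{N}(0,\mathbf{I}_p)$,
\begin{align}
\nonumber
\Gamma_B(0)^2 = \frac{\sigma_0^2}{d} \sum_{i=4}^{mn}\sum_{j=1}^{p} a_i\, g_{ij}^2
\end{align}
is a positively-weighted sum of iid squared standard Gaussians.

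I would then invoke Lemma \ref{lemma:LM} on the concatenated weight vector of length $p(mn-3)$ (each $a_i$ repeated $p$ times), whose $\ell_2$ norm is $\sqrt{p}\,h$ and whose $\ell_\infty$ norm is $\max_{i\geq 4} a_i = O(1)$. At confidence $1 - 1/\poly(p)$ this gives
\begin{align}
\nonumber
\Gamma_B(0)^2 \leq \frac{\sigma_0^2}{d}\Bigl(p\sum_{i\geq 4} a_i + O\bigl(\sqrt{p}\,h\sqrt{\log p} + \log p\bigr)\Bigr).
\end{align}
Finally, the eigenvalue inventory of $\mtx{M}$ supplied by Lemma \ref{lemma: SVD_perturb} (finitely many $O(1)$ feature eigenvalues plus $mn-K$ pure-noise eigenvalues of size $\sigma_\xi^2/(mn)$) bounds $\sum_{i=4}^{mn} a_i = O(1)$, giving $\Gamma_B(0)^2 = O(\sigma_0^2 p/d)$. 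Combined with $\Gamma_1(0) = \Omega(\sigma_0\sqrt{p/d})$ from the first paragraph, this yields $\Gamma_B(0)/\Gamma_1(0) = O(1)$.

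The main subtle point is the joint independence argument: it consolidates all $mn-3$ chi-squared contributions into a single Laurent--Massart application. Treating the summands separately would require a union bound over $mn-3$ events which, since $mn$ need not be polynomially bounded in $p$, would not preserve the $1-O(1/\poly(p))$ confidence level stated in the lemma. Every remaining step is routine arithmetic with quantities already enumerated in Lemma \ref{lemma: SVD_perturb}.
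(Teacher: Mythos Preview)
Your proposal is correct and follows essentially the same approach as the paper: Lemma~\ref{lemma:random_W_align_u} for the individual $\Gamma_i(0)$ and $\Gamma_\perp(0)$, and a single Laurent--Massart application (Lemma~\ref{lemma:LM}) to the full independent family $\{\vct{w}_{0,j}^\top \vct{l}_i\}_{i\geq 4,\, j\in[p]}$ for $\Gamma_B(0)$. The only cosmetic difference is that the paper first splits the sum into the feature block ($4\leq i\leq K+1$, weight $\leq \phi_{\max}^2/(K-2)$) and the noise block ($i>K+1$, weight $\sigma_\xi^2/(mn)$) before applying Laurent--Massart with uniform weights in each block, whereas you apply the weighted form directly and then bound $\sum_{i\geq 4} a_i = O(1)$; both routes land on $\Gamma_B(0)=O(\sigma_0\sqrt{p/d})$.
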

\begin{proof}
We first bound $\Gamma_B(0)$
\begin{align}
    \nonumber
    \Gamma_B(0) = & \sqrt{\sum_{i=4}^{mn} a_i \| \mtx{W}_0 \vct{l}_i \|^2 }\\
    \nonumber
    = & \sqrt{\sum_{i=4}^{mn} a_i \sum_{j=1}^p \| \mtx{w}_{0,j}^\top \vct{l}_i \|^2 }\\
    \nonumber
    \leq & \sqrt{\frac{\phi_{\max}^2}{K-2}\sum_{i=4}^{K+1} \sum_{j=1}^p \| \mtx{w}_{0,j}^\top \vct{l}_i \|^2 +\frac{\sigma_\xi^2}{mn}\sum_{i=K+2}^{mn} \sum_{j=1}^p \| \mtx{w}_{0,j}^\top \vct{l}_i \|^2 }  \\
    \nonumber
    = & O( \sqrt{\frac{p\sigma_0^2}{d} +\sigma_\xi^2\frac{p\sigma_0^2}{d}} ) ~~~~\textcircled{1} \\
    \nonumber
    = &O( \sigma_0\sqrt{\frac{p}{d}} ).
\end{align}
Inequality \textcircled{1} holds with probability $\geq 1-O(\frac{1}{\poly(mnp)})$. It is obtained by obsreving that $\mtx{w}_{0,j}^\top \vct{l}_i$'s are independent Gaussian variables (by the orthogonality of $\vct{l}_i$'s) and applying Lemma \ref{lemma:LM} to the sum of $\|\mtx{w}_{0,j}^\top \vct{l}_i\|^2$'s.

By Lemma \ref{lemma:random_W_align_u} and the above, at initialization the following holds with probability $\geq 1-O(\frac{1}{\poly(p)}+\frac{1}{\poly(mnp)})$
\begin{align}
    \nonumber
    \frac{\Gamma_B(0)}{\Gamma_1(0)} =& O(1)\\
    \nonumber
    \Gamma_i(0) =& \sigma_0\sqrt{\frac{p}{d}}\left( 1\pm O(\sqrt{\frac{\log p}{p}}) \right), ~~ i=1,2,3 \\
    \nonumber
    \Gamma_\perp(0) =& \sigma_0\sqrt{\frac{p}{d}}\left( 1\pm O(\sqrt{\frac{\log p}{p}}) \right) .
\end{align}
\end{proof}

Let $\psi, \psi_B$ be constants. Define
\begin{align}
\nonumber
\pi \coloneqq &\frac{\Gamma_B(0)}{\Gamma_1(0)} = O(1) ~~\text{by Lemma \ref{lemma: Gamma_init}}\\
    \nonumber
    \tau \coloneqq & (c (1+2(1+\psi)^2) + (\pi +\psi_B )^2)^{3/2} = \Theta(1).
\end{align}

Let $\gamma$ be a constant satisfying the following
\begin{align}
    \nonumber
    \gamma \leq \min \left\{\sqrt{\frac{(a_1^+-a_2^+)\psi}{\tau(s+s\psi)}}, \sqrt{\frac{a_1^+ \psi}{\tau(s+s\psi)}} , \sqrt{\frac{a_1^+\psi_B}{\tau(h+s\psi_B)}}, \sqrt{\frac{a_1^+ - a_2^+}{\tau s} }\right\} .
\end{align}
Note that $a_1^+ -a_2^+ >0$ because $\mu^2+1 > \phi_1^2$. Additionally, we define the following shorthand
\begin{align}
    \nonumber
    \epsilon \coloneqq & 4\eta\tau \gamma^2 s, \\
    \nonumber
    \epsilon_B \coloneqq & 4\eta \tau \gamma^2 h\\
    \nonumber
    \alpha \coloneqq & 1+4\eta a_1^+ - \epsilon\\
    \nonumber
    \hat{\alpha} \coloneqq & 1+4\eta a_1^+ + \epsilon\\
    \nonumber
    \kappa_2 \coloneqq & \frac{1+4\eta a_2^+}{\alpha} <1 \text{ because $\mu^2+1 > \phi_1^2$ }\\
    \nonumber
    \kappa_3 \coloneqq & \frac{1}{\alpha}\\
    \nonumber
    \kappa_\perp \coloneqq & \frac{1}{\alpha}\\
    \nonumber
    \kappa_B \coloneqq & \frac{1}{\alpha}.
\end{align}

Now we are ready to prove the following Lemma.
\begin{lemma}\label{lemma: smaller_than_gamma}
    If $\forall t \leq T$, $\Gamma_1(t)\leq \gamma$. For any constants $\psi, \psi_B$, the following holds $\forall t\leq T+1$ with probability $1-O(\frac{1}{\poly(p)} )$,
    \begin{itemize}
    \item $ \Gamma_1(t)\geq \alpha^t \Gamma_1(0)$
    \item $ \Gamma_1(t)\leq \hat{\alpha}^t \Gamma_1(0)$.
\item $\Gamma_i(t) \leq (\kappa_i^{t}+\psi) \Gamma_1(t), ~~i=2,3$.
\item $\Gamma_\perp(t) \leq (\kappa_\perp^{t}+\psi) \Gamma_1(t)$.
\end{itemize}
\end{lemma}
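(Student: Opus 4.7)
The proof is by induction on $t$. Since the conclusion bounds $\Gamma_1,\Gamma_2,\Gamma_3,\Gamma_\perp$ but the recurrences from Lemma~\ref{lemma: recurr} all contain $\Gamma_B$ inside the perturbation term $4\eta(c\Gamma_{:3}^2+\Gamma_B^2)^{3/2}$, the induction must secretly carry a bound on $\Gamma_B$ as well. Concretely, I will maintain the following joint inductive hypothesis at step $t$: (i) $\alpha^{t}\Gamma_1(0)\le\Gamma_1(t)\le\hat\alpha^{t}\Gamma_1(0)$; (ii) $\Gamma_i(t)\le(\kappa_i^{t}+\psi)\Gamma_1(t)$ for $i=2,3$; (iii) $\Gamma_\perp(t)\le(\kappa_\perp^{t}+\psi)\Gamma_1(t)$; and (iv) an auxiliary bound $\Gamma_B(t)\le(\kappa_B^{t}\pi+\psi_B)\Gamma_1(t)$. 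The base case $t=0$ is immediate from Lemma~\ref{lemma: Gamma_init}: with probability $1-O(1/\poly(p))$ we have $\Gamma_i(0)=(1\pm o(1))\Gamma_1(0)$ and $\Gamma_B(0)/\Gamma_1(0)=\pi$, and since $\psi,\psi_B$ are positive constants, (ii)--(iv) hold at $t=0$.

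For the inductive step, the central observation is that under (ii)--(iv), one can control the perturbation uniformly by $\Gamma_1$:
\begin{align*}
c\,\Gamma_{:3}^2(t)+\Gamma_B^2(t)
&\le\bigl(c(1+2(1+\psi)^2)+(\pi+\psi_B)^2\bigr)\Gamma_1^2(t)
=\tau^{2/3}\,\Gamma_1^2(t),
\end{align*}
where I used $\kappa_2^{t},\kappa_3^{t},\kappa_\perp^{t},\kappa_B^{t}\le 1$ and $(\kappa_B^{t}\pi+\psi_B)\le\pi+\psi_B$. Combining with the standing assumption $\Gamma_1(t)\le\gamma$ gives $4\eta(c\Gamma_{:3}^2+\Gamma_B^2)^{3/2}s\le 4\eta\tau s\gamma^2\Gamma_1(t)=\epsilon\,\Gamma_1(t)$ and analogously the bound with $h$ is at most $\epsilon_B\Gamma_1(t)$. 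Substituting these into the six recurrences of Lemma~\ref{lemma: recurr} immediately yields $\alpha\Gamma_1(t)\le\Gamma_1(t+1)\le\hat\alpha\Gamma_1(t)$, which upgrades (i) to step $t+1$.

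The remaining bounds are all established by the same algebraic template. For $\Gamma_2$, expand the target $\Gamma_2(t+1)\le(\kappa_2^{t+1}+\psi)\Gamma_1(t+1)$ using the lower bound $\Gamma_1(t+1)\ge\alpha\Gamma_1(t)$; since $\kappa_2$ was chosen so that $\kappa_2\alpha=1+4\eta a_2^{+}$, the leading terms $(1+4\eta a_2^{+})\kappa_2^{t}\Gamma_1(t)$ cancel on both sides, and the step reduces to $\epsilon(1+\psi)\le 4\eta(a_1^{+}-a_2^{+})\psi$, which is exactly the definition of the first threshold on $\gamma$. The arguments for $\Gamma_3$ and $\Gamma_\perp$ (with $\kappa_3\alpha=\kappa_\perp\alpha=1$) reduce to $\epsilon(1+\psi)\le 4\eta a_1^{+}\psi$, matching the second threshold; the argument for $\Gamma_B$ (with $\kappa_B\alpha=1$) reduces to $\tau\gamma^2(h+s\psi_B)\le a_1^{+}\psi_B$, matching the third. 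Since $\gamma$ is chosen as the minimum of all these thresholds, every required inequality is satisfied and the inductive hypothesis propagates to $t+1$. The probability loss is absorbed once at the base case from Lemma~\ref{lemma: Gamma_init}; the inductive step is deterministic.

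\textbf{Main obstacle.} The subtlety is not the algebra itself but identifying the correct auxiliary invariant for $\Gamma_B$, since it is not in the statement yet governs the perturbation term that couples all six recurrences. Without tracking $\Gamma_B(t)/\Gamma_1(t)$ and choosing $\kappa_B=1/\alpha$ in parallel with $\kappa_3,\kappa_\perp$, the bound on $c\Gamma_{:3}^2+\Gamma_B^2$ would degrade over time and the cubic perturbation could no longer be absorbed into a geometric growth of $\Gamma_1$. The choice of the constants $\psi,\psi_B$ and the precise definition of $\tau$ and $\gamma$ are engineered exactly so that the cancellations $\kappa_i\alpha=1+4\eta a_i^{+}$ eliminate the dominant terms and leave only inequalities linear in $\epsilon,\epsilon_B$ that $\gamma^2$ is designed to satisfy.
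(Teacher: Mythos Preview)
Your proposal is correct and follows essentially the same inductive strategy as the paper: strengthen the hypothesis to include the auxiliary bound $\Gamma_B(t)\le(\kappa_B^{t}\pi+\psi_B)\Gamma_1(t)$, use Lemma~\ref{lemma: Gamma_init} for the base case, bound the cubic perturbation by $\tau\Gamma_1^3\le\tau\gamma^2\Gamma_1$, and then verify that the thresholds defining $\gamma$ make the inductive step close. In fact you carry out the algebra more explicitly than the paper, which simply asserts that ``by the construction of our $\kappa$'s, $\alpha$'s, $\epsilon$'s and $\psi$'s'' the required inequalities hold; your reductions to $\epsilon(1+\psi)\le 4\eta(a_1^{+}-a_2^{+})\psi$, $\epsilon(1+\psi)\le 4\eta a_1^{+}\psi$, and $\tau\gamma^2(h+s\psi_B)\le a_1^{+}\psi_B$ are exactly what is needed.
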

\begin{proof}
Let $S(k)$ be the following statement: $\forall t'$ such that $0\leq t' \leq k$, the following holds
\begin{itemize}
    \item $ \Gamma_1(t')\geq \alpha^{t'} \Gamma_1(0) $,
     \item $ \Gamma_1(t)\leq \hat{\alpha}^t \Gamma_1(0)$,
\item $\Gamma_i(t') \leq (\kappa_i^{t'}+\psi )\Gamma_1(t'), ~~i=2,3$,
\item $\Gamma_\perp(t') \leq (\kappa_\perp^{t'}+\psi) \Gamma_1(t')$,
\item $\Gamma_B(t') \leq (\kappa_B^{t'}\pi +\psi_B) \Gamma_1(t') $.
\end{itemize}
By Lemma \ref{lemma: Gamma_init}, $S(0)$ holds with high probability. Next we show that, $\forall t \in [0, T+1]$, if $S(t-1)$ holds then $S(t)$ also holds. By Lemma \ref{lemma: recurr}, the induction hypothesis and $\kappa_2,\kappa_3, \kappa_\perp, \kappa_B < 1$ , $\Gamma_1(t-1)\leq \gamma$, we have the following
\begin{align}
\label{eq:Gamma_1_lower}
    \Gamma_1(t) \geq &\alpha \Gamma_1(t-1) \\
\label{eq:Gamma_1_upper}
    \Gamma_1(t) \leq &\hat{\alpha} \Gamma_1(t-1) \\
    \nonumber
    \Gamma_2(t) \leq & \left((1+4\eta a_2^+)(\kappa_2^t +\psi ) +\epsilon \right) \Gamma_1(t) \\
    \nonumber
    \Gamma_3(t) \leq & \left((\kappa_3^t +\psi ) +\epsilon \right) \Gamma_1(t) \\
    \nonumber
    \Gamma_\perp(t) \leq & \left((\kappa_\perp^t +\psi ) +\epsilon \right) \Gamma_1(t) \\
    \nonumber
    \Gamma_B(t) \leq & \left((\kappa_B^t\pi +\psi_B ) +\epsilon_B \right) \Gamma_1(t) .
\end{align}
By the construction of our $\kappa$'s, $\alpha$'s, $\epsilon$'s and $\psi$'s, the last three items in statement $S(t)$ hold. Combining the induction hypothesis with equations \ref{eq:Gamma_1_lower} and \ref{eq:Gamma_1_upper} yields the first two items in $S(t)$, which completes the proof.
\end{proof}

Now we are ready to prove the theorem.
\begin{theorem}
If $\sigma_0\sqrt{\frac{p}{d}}=o(1)$ and $\sigma_\xi=o(1)$, with probability at least $1-O(\frac{m^2n^2}{d}+\frac{1}{\poly(p)})=1-o(1)$, the following holds
\begin{itemize}
    \item $\|\mtx{W}_0 \vct{v}_2\|=o(1)$.
    \item $\exists t=O(\ln(\frac{1}{\sigma_0} \sqrt{\frac{d}{p}}))$, s.t. $\|\mtx{W}_t \vct{v}_2\|=\Omega(1)$.
\end{itemize}
\end{theorem}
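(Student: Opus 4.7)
The core idea is to exploit the decomposition of $\vct{v}_2$ along $\vct{l}_1^+$ and the perpendicular direction $\vct{l}_\perp$ introduced before Lemma \ref{lemma: recurr}:
\begin{align}
\nonumber
\vct{v}_2 \;=\; \underbrace{\tfrac{\mu}{\sqrt{1+\mu^2+\sigma_\xi^2/(mn)}}}_{c_1}\vct{l}_1^+ \;+\; \underbrace{\tfrac{\sqrt{1+\sigma_\xi^2/(mn)}}{\sqrt{1+\mu^2+\sigma_\xi^2/(mn)}}}_{c_\perp}\vct{l}_\perp.
\end{align}
Since $\mu=\Theta(1)$ and $\sigma_\xi^2/(mn)=o(1)$, both $|c_1|$ and $|c_\perp|$ are $\Theta(1)$ constants. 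Hence $\|\mtx{W}_t\vct{v}_2\|$ is controlled by $\Gamma_1(t)$ and $\Gamma_\perp(t)$, and the task reduces to tracking these two quantities under the GD dynamics already analyzed.

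\textbf{Step 1 (first bullet).} Apply Lemma \ref{lemma:random_W_align_u} with $\vct{u}=\vct{v}_2$ to get $\|\mtx{W}_0\vct{v}_2\|=O(\sigma_0\sqrt{p/d})=o(1)$ on the high-probability event, which immediately yields the first bullet.

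\textbf{Step 2 (exponential growth).} Fix a constant $\psi<|c_1|/(4|c_\perp|)$ together with the companion constants $\psi_B$ and $\gamma$ from the statement of Lemma \ref{lemma: smaller_than_gamma}, chosen so that $\alpha=1+4\eta a_1^+-\epsilon>1$ and $\kappa_\perp=1/\alpha<1$. Let $T$ be the first epoch at which $\Gamma_1(T)>\gamma$. Applying Lemma \ref{lemma: smaller_than_gamma} on $[0,T]$ gives $\Gamma_1(t)\geq \alpha^t\Gamma_1(0)$ for $t\leq T$ and $\Gamma_\perp(t)\leq (\kappa_\perp^t+\psi)\Gamma_1(t)$. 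Combined with $\Gamma_1(0)=\Theta(\sigma_0\sqrt{p/d})$ from Lemma \ref{lemma: Gamma_init}, the first bound forces
\begin{align}
\nonumber
T \;\leq\; \log_\alpha\!\bigl(\gamma/\Gamma_1(0)\bigr) \;=\; O\bigl(\ln(\tfrac{1}{\sigma_0}\sqrt{d/p})\bigr),
\end{align}
matching the claimed time scale.

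\textbf{Step 3 (second bullet).} By the triangle inequality,
\begin{align}
\nonumber
\|\mtx{W}_T\vct{v}_2\| \;\geq\; |c_1|\,\Gamma_1(T) - |c_\perp|\,\Gamma_\perp(T) \;\geq\; \bigl(|c_1|-|c_\perp|(\kappa_\perp^T+\psi)\bigr)\Gamma_1(T).
\end{align}
With $\psi\leq |c_1|/(4|c_\perp|)$ fixed above, and since $\kappa_\perp<1$ is a constant, for $\sigma_0\sqrt{p/d}$ small enough $T$ is large enough that $\kappa_\perp^T\leq |c_1|/(4|c_\perp|)$. This gives $\|\mtx{W}_T\vct{v}_2\|\geq (|c_1|/2)\Gamma_1(T)\geq (|c_1|/2)\gamma=\Omega(1)$, as required. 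Failure probability is bounded by the union of the events in Lemmas \ref{lemma: Gamma_init} and \ref{lemma: smaller_than_gamma} together with the noise-orthogonality event of Section \ref{apdx: prop_minimizers}, giving the advertised $1-O(m^2n^2/d+1/\poly(p))$.

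\textbf{Main obstacle.} The technical heart is hidden inside Lemma \ref{lemma: smaller_than_gamma}: one must simultaneously control $\Gamma_1,\Gamma_2,\Gamma_3,\Gamma_\perp,\Gamma_B$ via the coupled recurrences of Lemma \ref{lemma: recurr} so that the leading growth rate $1+4\eta a_1^+$ beats $1+4\eta a_2^+$ and all cubic coupling terms. This requires the spectral gap $a_1^+ -a_2^+>0$, which is precisely what the assumption $1+\mu^2>\phi_1^2$ provides. Granted that lemma, the only remaining subtlety is bookkeeping: verifying that $\psi,\psi_B,\gamma$ can be chosen small enough to satisfy both the hypotheses of Lemma \ref{lemma: smaller_than_gamma} and the threshold $\psi<|c_1|/(4|c_\perp|)$ used in Step 3.
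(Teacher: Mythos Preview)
Your proposal follows the same approach as the paper: decompose $\vct{v}_2 = c_1\,\vct{l}_1^+ + c_\perp\,\vct{l}_\perp$, use Lemma~\ref{lemma: smaller_than_gamma} to show $\Gamma_1$ grows geometrically while $\Gamma_\perp/\Gamma_1$ stays bounded by roughly $\psi$, and finish with the triangle inequality. The paper splits into two cases depending on whether $\Gamma_1$ first exceeds $\gamma$ before or after time $\lfloor \log_\alpha(\gamma/\Gamma_1(0))\rfloor$; your single ``first hitting time'' formulation is a clean way to merge those cases.

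One step is glossed over, however. In Step~3 you need $\kappa_\perp^T$ small and hence $T$ large, but Step~2 establishes only the \emph{upper} bound $T\leq\log_\alpha(\gamma/\Gamma_1(0))$. To argue that $T\to\infty$ as $\sigma_0\sqrt{p/d}\to 0$ you must also invoke the second conclusion of Lemma~\ref{lemma: smaller_than_gamma}, namely $\Gamma_1(t)\leq\hat\alpha^t\Gamma_1(0)$: combined with $\Gamma_1(T)>\gamma$ this gives $T\geq\log_{\hat\alpha}(\gamma/\Gamma_1(0))=\Omega\!\bigl(\ln(\tfrac{1}{\sigma_0}\sqrt{d/p})\bigr)$, which then forces $\kappa_\perp^T=o(1)$. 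This is exactly the computation the paper carries out in its Case~2; once you insert it, your argument is complete.
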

\begin{proof}
$\|\mtx{W}_0 \vct{v}_3\|=o(1)$ follows Lemma \ref{lemma:random_W_align_u} and the assumption that $\sigma_0\sqrt{\frac{p}{d}}=o(1)$.  Select a constant $\psi$ such that $ \psi < \frac{\mu}{\sqrt{1+\frac{\sigma_\xi^2}{mn}}} $. Note that $\frac{\mu}{\sqrt{1+\frac{\sigma_\xi^2}{mn}}}-\psi=\Theta(1)$. Let $T = \lfloor \frac{\ln(\gamma/\Gamma_1(0) )}{\ln \alpha} \rfloor = \Theta(\ln \frac{1}{\sigma_0} \sqrt{\frac{d}{p}} )$. There are two cases to consider.
\begin{itemize}
    \item If $\forall t\leq T$, $\Gamma_1(t)\leq \gamma$, by Lemma \ref{lemma: smaller_than_gamma} we have $\Gamma_1(T+1) \geq \gamma$ and $\Gamma_\perp(T+1) \leq (o(1) +\psi )\Gamma_1(T+1) $. Then
    \begin{align}
        \nonumber
        \|\mtx{W}_{T+1}\vct{v}_2\| \geq & \frac{\mu}{\sqrt{1+\mu^2+\frac{\sigma_\xi^2}{mn}}} \| \mtx{W}_{T+1}\vct{l}_1^+ \| - \frac{\sqrt{1+\frac{\sigma_\xi^2}{mn}}}{\sqrt{1+\mu^2+\frac{\sigma_\xi^2}{mn}}}  \| \mtx{W}_{T+1}\vct{l}_\perp \| \\
        \nonumber
        \geq & (\frac{\mu}{\sqrt{1+\mu^2+\frac{\sigma_\xi^2}{mn}}} - \frac{\sqrt{1+\frac{\sigma_\xi^2}{mn}}}{\sqrt{1+\mu^2+\frac{\sigma_\xi^2}{mn}}} \psi - o(1) )\gamma\\
        \nonumber
        = & \Omega(1).
    \end{align}
    \item If $\exists t \leq T~s.t.~ \Gamma_1(t) >\gamma $, we define $T^*=\frac{\ln(\frac{\gamma}{\Gamma_1(0)})}{\ln\hat{\alpha}} $ and $t^* = \min t ~s.t.~ \Gamma_1(t) >\gamma $. It follows that  $\forall t \leq t^*-1, \Gamma_1(t) \leq \gamma$. Then we can apply Lemma \ref{lemma: smaller_than_gamma} to obtain $\Gamma_1(t^*) \leq \hat{\alpha}^{t^*}\Gamma_1(0) $. If $t< T^*$, the above yields $\Gamma_1(t^*) < \gamma$, which contradicts the definition of $t^*$. Therefore we conclude $t^*\geq T^*$. Lemma \ref{lemma: smaller_than_gamma} also tells that $ \Gamma_\perp(t^*) \leq  (\kappa_\perp^{t^*} +\psi) \Gamma_1(t^*)$. Since $t^*\geq T^*$ and $\kappa_\perp<1$, we have $\kappa_\perp^{t^*}\leq (\frac{\Gamma_1(0)}{\gamma})^{\frac{\ln(1/\kappa_3)}{\ln\hat{\alpha}}} = o(1)$. Therefore $\Gamma_\perp(t^*) \leq  (o(1) +\psi) \Gamma_1(t^*)$. By the definition of $t^*$, $\Gamma_1(t^*)>\gamma$. Then we can lower bound  $\|\vct{W}_{t^*}\vct{v}_2\|$ in the same way as in the previous case
    \begin{align}
        \nonumber
        \|\mtx{W}_{t^*}\vct{v}_2\| \geq & \frac{\mu}{\sqrt{1+\mu^2+\frac{\sigma_\xi^2}{mn}}} \| \mtx{W}_{t^*}\vct{l}_1^+ \| - \frac{\sqrt{1+\frac{\sigma_\xi^2}{mn}}}{\sqrt{1+\mu^2+\frac{\sigma_\xi^2}{mn}}}  \| \mtx{W}_{t^*}\vct{l}_\perp \| \\
        \nonumber
        \geq & (\frac{\mu}{\sqrt{1+\mu^2+\frac{\sigma_\xi^2}{mn}}} - \frac{\sqrt{1+\frac{\sigma_\xi^2}{mn}}}{\sqrt{1+\mu^2+\frac{\sigma_\xi^2}{mn}}} \psi - o(1) )\gamma\\
        \nonumber
        = & \Omega(1).
    \end{align}
\end{itemize}
\end{proof}

\newpage 
\section{Experimental Setup and Additional Experimental Results}\label{apdx:exp}

\subsection{Datasets}\label{apdx:datasets}

\textbf{CIFAR-10/100.} The two datasets each consist of 60000 32x32 colour images \cite{krizhevsky2009learning}. In the case of CIFAR-10, the `classes' refer to the original 10 classes defined in the dataset, while we define `subclasses' as two subclasses: vehicles (airplane, automobile, ship, truck) and animals (bird, cat, deer, dog, frog, horse). On CIFAR-100, we refer to the 10 super-classes (e.g. aquatic mammals, fish, flowers) as our 'classes' and the 100 classes as our 'sub-classes'. These two datasets illustrate a natural setting where class collapse is extremely harmful, as it results in learning representations that do not capture much of the semantically relevant information from the data. 

\textbf{MNIST RandBit.} The MNIST RandBit dataset \citet{chen2021intriguing} is created by setting $n$, the \# of bits that specifies how easy the useless feature will be. Larger $n$ makes the feature more discriminative, thus `easier' and more problematic for feature suppression. An extra channel is concatenated to MNIST images where each value in the feature map corresponds to a random integer between $0$ and $2^n$.

\textbf{CIFAR-10/100 RandBit.} The two datasets are constructed in a similar way as MNIST RandBit, but with images from CIFAR-10/100.

\subsection{Training details}\label{apdx:training_details}

For the experiments on CIFAR-10/100 or CIFAR-100 RandBit, we use a ResNet-18 trained with (Momentum) SGD using learning rate = $0.01$ and momentum = $0.9$. We train with batch size set to 512 for 1000 epochs. For data augmentations, we consider the standard data augmentations from \citet{cl_simclr}. 

For the feature suppression experiments on MNIST RandBit, we directly use the code provided by \citet{chen2021intriguing}. We consider a 5-Layer convolutional network. For our data augmentations, we consider the standard set of data augmentations for images and do not alter the useless feature (extra channel concatenated of RandBits).


\subsection{Details and additional experiments on varying embedding size}\label{apdx:additional_embedding_size}

In the experiments presented in Table \ref{tab:cifar_10_increasing_width}, we vary the width, denoted by $w$, of the ResNet, which is controlled by the number of convolutional layer filters. For width $w$, there are $w$, $2w$, $4w$, $8w$ filters in each layer of the four ResNet blocks. 

\begin{table}[!t]
    \caption{increasing $k$ improves both subclass and class accuracies on CIFAR-10 RandBit. }
    \label{tab:increasing_k}
\vspace{1mm}
    \centering
    \begin{tabular}{|c|c|c|}
    \hline
       $k$  & Sub Acc & Acc   \\
    \hline
        1 & 34.38 & 86.73 \\
    \hline
        16 & 58.12 & 94.09\\
    \hline
    \end{tabular}
    \vspace{-2mm}
\end{table}

In addition, we explore an alternative way of varying the embedding size, which isolates the effect of the last layer's embedding size from the size of the lower layers. Specifically, we set the width parameter $w=4$ and multiply the width of only the last ResNet block by a factor $k$. It is worth noting that doing this requires a much smaller total number of parameters. Table \ref{tab:increasing_k} presents the results on CIFAR-10 RandBit. We observe that increasing $k$ also effectively improves the accuracy. Although the improvement is not as substantial as in the previous case where we increase $w$, it confirms the same trend predicted by the theory, supporting the conclusion that increasing the embedding size alleviates feature suppression.

\section{Potential Approaches to Theoretical Characterization of Class Collapse in (S)GD}\label{apdx:potential}

The most crucial aspect that remains to be tackled is how (S)GD unlearns subclass features that have already been learned early in training. We offer two potential approaches that could help in achieving this goal. 

\subsection{Through implicit bias of (S)GD in matrix factorization}\label{apdx:matrix_fac}

The contrastive loss we are considering can be reformulated as a matrix factorization objective:
\begin{align}
\label{eq:matrix_fac}
    \min f(\mtx{W}^{\top}\mtx{W}) = \frac{1}{n^2}\sum_{i,j}(\vct{x}_i^{\top}\mtx{W}^\top \mtx{W} \vct{x}_j - a_{ij})^2,
\end{align}
where 
$$
a_{ij}\coloneqq 
\begin{cases}
2, ~~ if ~~ x_i \text{ and } x_j \text{ are from the same class} \\
0. ~~ else
\end{cases}
$$
This opens up the possibility of leveraging the rich literature on matrix factorization to find a solution. Since we have already proven in Theorems 4.4 and 4.7 that the minimum norm minimizer of the loss function exhibits class collapse, and our experiments confirm that (S)GD does converge to a minimizer that exhibits class collapse, it is reasonable to investigate whether the implicit bias of (S)GD in our setting, specifically matrix factorization, is to seek the minimum norm solution.

We note that understanding the implicit bias in matrix factorization is a longstanding pursuit in the machine learning community. \cite{gunasekar2017implicit} have provided empirical and theoretical evidence that under certain conditions, gradient descent converges to the minimum nuclear norm solution. Therefore, one can examine whether similar existing results can be applied to our setting and then combine that with our Theorems 4.4 or 4.7 to show class collapse in GD. However, \cite{arora2019implicit} and \cite{razin2020implicit} suggested that the implicit bias may be explained by rank rather than norm when the depth of a network $\geq 2$.

\subsection{Through analyzing the two terms in the gradient}\label{apdx:two_terms}

Let’s take a closer look at the update of the weights in GD, i.e., learning rate times minus gradient $4\mtx{W}\mtx{M}^+ - 4\mtx{W}\mtx{M}\mtx{W}^{\top} \mtx{W}\mtx{M}$ (see Equation \label{eq: rule_gd}). There are two terms $4\mtx{W}\mtx{M}^+$ and $- 4\mtx{W}\mtx{M}\mtx{W}^{\top} \mtx{W}\mtx{M}$ which play different roles in the high level. Here $\mtx{M}^+$ is the covariance of class centers, and $\mtx{M}$ is the covariance of all training examples, as defined in Definition \ref{def: Ms}. 

\textbf{Term 1 ($4\mtx{W}\mtx{M}^+$)}: The first term aligns the weights with $\mtx{M}^+$ which has alignment with the subclass feature. This aligning effect of term 1 has already been theoretically characterized in our proof (Appendix \ref{apdx: early}) for Theorem \ref{thm: early_in_training}.

\textbf{Term 2 ($- 4\mtx{W}\mtx{M}\mtx{W}^{\top}\mtx{W}\mtx{M}$)}: Although the effect of the second term is not entirely straightforward, we can gain some intuition by considering the simplest case where the embedding is one-dimensional. In this case, the second term takes the form of a negative scalar times $WM$, which can be seen as trying to `discourage' alignment with $\mtx{M}$, the covariance of all training examples. 

In our numerical experiment, we observe that the ratio between norms of term 1 and term 2 initially starts at a very large value, then decreases until it reaches a plateau around 1, as shown in Figure \ref{fig:norm_grad}. Interestingly, the point at which the ratio dropped to around 1 coincided almost precisely with the peak of the projection of the weights $\mtx{W}$ onto the subclass feature. This leads us to the following intuition, which may serve as a proof sketch for showing class collapse at the end of training. In the following, we first describe what happens in the early phase (which we have already proven in the paper), then outline the high-level idea of how subclasses are eventually unlearned.

\textbf{Phase I where the model learns the subclass feature}: We have already proved this part in Appendix \ref{apdx: early}. In summary, the intuition is that early in training, the scale of term 1 dominates over term 2, aligning the model with $\mtx{M}^+$, which in turn aligns with the subclass feature. Therefore, the model learns the subclass feature during this phase.

\begin{figure}[!t]
    \centering
    \includegraphics[width=.25\textwidth]{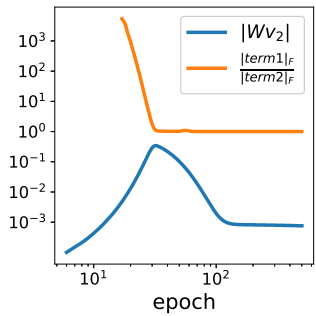}
    \caption{We plot the ratio between norms of term 1 and term 2 in orange, and the projection of the weights $\mtx{W}$ onto the subclass feature ($\|\mtx{W}\vct{v}_2\|$) in blue. The ratio between the two norms  initially starts at a very large value, then decreases until it reaches a plateau around 1. The point at which the ratio dropped to around 1 coincided almost precisely with the peak of $\|\mtx{W}\vct{v}_2\|$.}
    \label{fig:norm_grad}
\end{figure}

\textbf{Phase II where the model unlearns the subclass feature but the class feature remains}: Note that the scale of the second term also increases during Phase I as $\mtx{M}$ and $\mtx{M}^+$ share certain components. Once the scale of term 2 reaches that of term 1, the effect of term 2 becomes more pronounced and Phase II begins. Since $\mtx{M}$ exhibits a stronger correlation with the subclass feature than $\mtx{M}^+$ does, the overall effect of the sum of term 1 and term 2 is to reduce alignment with the subclass feature. Thus, over time, the model unlearns the subclass feature. In contrast, for the class feature, $\mtx{M}^+$ has a stronger correlation, causing GD to continue aligning the model with the class feature.

\end{document}